\documentclass{article}
\usepackage[arxiv]{kf-basic}
\usepackage{algorithmic}
\usepackage{subcaption}
\usepackage{booktabs}
\usepackage{multirow}

\title{Fair Classification with Efficient and Post-hoc\\ Controllable Fairness-Accuracy Trade-off}
\author[1,2]{Maaya Sakata\thanks{maaya@mdl.cs.tsukuba.ac.jp, Corresponding author}}
\author[1,2]{Kazuto Fukuchi\thanks{fukuchi@cs.tsukuba.ac.jp, Corresponding author}}
\affil[1]{University of Tsukuba, Japan}
\affil[2]{RIKEN AIP, Japan}

\begin{document}

\maketitle

\begin{abstract}
Post-hoc controllability of fair machine learning models, the ability to control the trade-off between fairness and accuracy after training, is valuable for practical deployment. Existing post-processing methods provide such post-hoc controllability but often suffer from significant accuracy degradation, whereas in-processing methods achieve efficient trade-offs but require computationally expensive retraining for each change in trade-off ratio. To achieve both post-hoc controllability and efficient trade-offs, we propose a novel fair classification algorithm that learns effective feature representations to improve the trade-off efficiency of post-processing fair classifiers, by a gradient-based optimization approach. Experimental results on real-world datasets demonstrate that our method achieves trade-off efficiency comparable to, or even surpassing, in-processing methods, without requiring any retraining.
\end{abstract}

\section{Introduction}
While machine learning~(ML)-based systems play a critical role in real-world decision making across various domains due to their strong predictive performance, these systems can suffer from inherent biases that lead to unfair outcomes. Indeed, many researchers have reported discriminatory outcomes of real-world ML-based systems against certain protected groups, such as those defined by gender, race, and age. For example, Amazon discontinued the development of an ML-based hiring system after it was found to exhibit gender bias~\parencite{Dastine2018}. Similar issues have been reported across various domains, including hiring~\parencite{fabris2025} and criminal justice~\parencite{angwin2016}.

Mitigating these biases in ML systems is in high demand; however, it must be achieved while carefully managing  the trade-off between predictive accuracy and fairness. Accuracy and fairness are often in tension, as improvements in fairness may come at the cost of reduced predictive performance. In practice, high accuracy is critical for business utility, including long-term user retention and profitability, whereas fairness is essential for meeting legal requirements and maintaining public trust in the systems and their owners. Therefore, ML systems should be designed to enable administrators to flexibly adjust the balance between accuracy and fairness according to their operational needs.

Motivated by these demands, many researchers have developed fair ML frameworks that enable control over the fairness-accuracy trade-off through a prescribed parameter. For instance, some approaches enforce the fairness by incorporating the constraint on the permissible level of unfairness into the training objective, where the trade-off is controlled by specifying a predefined unfairness limit~\parencite{zafar2017dp, celis2021}. Other approaches penalize the training objective by the degree of the model unfairness, allowing the trade-off to be adjusted via a multiplicative penalty parameter~\parencite{olfat2020, bendekgey2021, yang2023}. In both cases, the resulting model is strictly tailored to the specific trade-off parameter chosen at the outset of the optimization process.

The desired trade-off between accuracy and fairness may need to be adjusted after a model has been deployed. The following example underscores the practical importance of enabling post-deployment adjustment of the accuracy–fairness trade-off.
\begin{example}[Regulatory Compliance in LLMs]
  Consider a scenario involving the deployment of a Large Language Model (LLM). A company may initially set a trade-off parameter $\delta_1$ to align with existing AI guidelines. Suppose that several months after the service begins, the guidelines are updated to mandate stricter fairness standards, rendering the original setting $\delta_1$ non-compliant and necessitating an update to a new parameter $\delta_2$. However, retraining the model to accommodate this shift is often practically impossible due to the immense computational investment required. 
  For models of this scale, retraining the entire network solely to adjust a trade-off parameter is economically and practically infeasible.    
\end{example}
We refer to the ability to modify the trade-off after training as \emph{post-hoc controllability} of the fairness-accuracy trade-off.

Existing fair machine learning approaches either lack post-hoc controllability or achieve it only at the cost of an inefficient fairness–accuracy trade-off.
For example, post-processing–based fair learning algorithms achieve fairness by adjusting a trained model and therefore provide post-hoc controllability. However, they often suffer from a suboptimal trade-off efficiency~\parencite{Woodworth17}. In contrast, approaches such as in-processing methods enforce fairness during training and can achieve highly efficient trade-offs, but adapting to a new trade-off parameter typically requires costly retraining the model.

\paragraph{Our contributions}
The primary goal of this paper is to develop a fair classification algorithm that enables the post-hoc controllability of the fairness-accuracy trade-off while maintaining high trade-off efficiency. 
Our contributions are summarized as follows: 
\begin{itemize} 
    \item We theoretically analyze the accuracy improvement of post-processed classifiers when the required fairness level is relaxed and characterize the intermediate feature properties that governs trade-off efficiency. Specifically, we show that post-processed classifiers achieve more efficient trade-offs when features are more concentrated near the decision boundary of the most fair classifier. This theoretical result provides a guiding design principle for our proposed method.
    \item We develop a novel representation learning algorithm, \textit{Guidance to Fairest-Boundary}~(GFB), to improve the trade-off efficiency of post-processed classifiers. Motivated by our theoretical analysis, the learning objective penalizes the distance of features from the decision boundary of the most fair classifier. The resulting formulation leads to a bi-level optimization problem. 
    \item We develop a practical gradient-based optimization procedure to solve the proposed bi-level learning problem. Specifically, we adapt the Moving-Average SOBA~(MA-SOBA)~\parencite{chen2023} algorithm to our setting and address the challenge of computing gradients for the proposed learning objective.
    \item We conduct experiments on real-world datasets to compare our method with the existing in-processing and post-processing approaches. The results show that our method consistently outperforms the post-processing baseline by achieving more efficient fairness–accuracy trade-offs, while also attaining competitive performance relative to in-processing methods without requiring computationally expensive model retraining.\footnote{The code repository is available at \url{https://github.com/maayasakata/guidance-to-fairest-boundary}.}
\end{itemize}

All omitted proofs are deferred to the appendices.

\section{Related Work}
\label{sec:fairmethod}
As fairness requirements or operational policies are likely to evolve continuously in practice scenarios, post-hoc controllability is essential for real-world deployment. From this perspective, post-processing methods naturally provide the post-hoc controllability, as they allow trade-off parameters to be adjusted at inference time. A representative class of approaches assigns group-specific thresholds to the prediction scores of a trained classifier to satisfy prescribed fairness criteria~\parencite{menon2018, pathiraja2023, xianFairOptimalClassification2023, zeng2024}.
However, the theoretical analysis by \textcite{Woodworth17} shows that under the constraint of equalized odds, one of the fairness definition, post-processing cannot achieve optimal accuracy.
This limitation is unlikely to be specific to equalized odds and may also arise under other fairness metrics. Similar post-processing ideas are extended to regression problems. For demographic parity constrained regression, \textcite{chzhenMinimaxFrameworkQuantifying2022} fully characterized the fairness-risk trade-off in a minimax sense. Building on this line of work, \textcite{fukuchiMetaOptimalityDemographic2025} further showed that, under demographic parity, the entire family of minimax-optimal predictors can be generated via post-processing from a single meta-optimal predictor.
In contrast, for classification problems, a complete characterization of the demographic parity trade-off remains open, with existing results limited to special settings~\parencite{zhao2022}.

In-processing methods can achieve efficient fairness-accuracy trade-offs by explicitly incorporating trade-off parameters into the optimization process, but they generally lack the post-hoc controllability. For example, some methods enforce fairness through optimization constraints~\parencite{zafar2017dp, donini2018, cotterOptimizationNonDifferentiableConstraints2019}, while others introduce penalty terms into the loss function to discourage fairness violations~\parencite{Kamishima2012, fukuchiPredictionModelBasedNeutrality2013, baharlouei2024}; in both of cases, the trade-off parameters are embedded in the training process. Additionally, some approaches obtain efficient trade-offs by leveraging multi-objective optimization~\parencite{Liu2020} or bi-level optimization~\parencite{Jahromi2024}, where the trade-off is governed by a predefined parameter. Adapting these methods to a new trade-off parameter typically requires full retraining of the model, posing a significant barrier to flexible deployment.

Recently, \textcite{taufiq2024} applied the YOTO framework~\parencite{dosovitskiy2020} to fair machine learning, enabling efficient and post-hoc controllable trade-offs with a single model. By explicitly conditioning the model on the trade-off parameter, YOTO allows a single model to approximate the entire Pareto frontier without retraining separate models for different trade-off setting. However, this flexibility comes at the cost of increased model complexity.
In particular, YOTO-based approaches inherently require substantially larger model capacity to realize their full potential.
Empirical results reported in \textcite{dosovitskiy2020} show that, in order for a single YOTO model to match the performance of models trained separately for individual trade-off parameters, the network typically needs to be approximately twice as wide as those fixed-weight models.
As a consequence, despite avoiding repeated training, YOTO incurs higher inference-time computational cost compared to post-processing methods.

Our approach combines the strengths of both in-processing and post-processing. It learns effective feature representations such that subsequent post-processing at any target fairness level yields high trade-off efficiency. This enables efficient and flexible adjustment of the fairness-accuracy trade-off without the need for model retraining.

\section{Preliminaries}\label{sec:preliminaries}
\subsection{Fair Classification Problem}\label{sec:fairlearning}
We consider a fair binary classification problem. Let $X \in \mathcal{X}$,  $A \in \{0, 1\}$, and $Y \in \{0, 1\}$ be random variables representing a feature vector, binary sensitive attribute, and binary label, respectively. 
A probabilistic classifier $f$ is a measurable function from $\mathcal{X}\times\{0, 1\}$ to $[0, 1]$, where the output represents the probability that the predicted label equals $1$. We denote the prediction produced by $f$ by $\hat{Y}_f$, i.e., $P(\hat{Y}_f = 1 | X, A) = f(X, A)$ almost surely. Let $\mathcal{F}$ denote the set of all measurable functions $\mathcal{X} \times \mathcal{A} \to [0, 1]$. Given a set of $n$ i.i.d. observations $S = \{(x_i, a_i, y_i)\}_{i=1}^n$ drawn from a distribution $\mathcal{D}$, the learner's goal is to construct a family of classifiers $(f_c)_{c \in \mathcal{C}} \subseteq \mathcal{F}$ indexed by a set of trade-off parameters $\mathcal{C} \subset \mathbb{R}$ that achieve the efficient fairness-accuracy trade-offs.

Trade-off efficiency of a family of classifiers $(f_c)$ is characterized by the induced set of fairness-accuracy metrics pairs, denoted by $T((f_c)) = \{(\mathrm{Acc}(f_c), |\mathrm{DDP}(f_c)|\}$. Here, $\mathrm{Acc}$ denotes the accuracy metric, for which we adopt the standard classification accuracy; namely,
\begin{equation}
\label{eq:acc}
    \mathrm{Acc}(f_c) = P(\hat{Y}_{f_c} = Y).
\end{equation}
$\mathrm{DDP}$ denotes the fairness metric based on demographic parity~(DP)~\parencite{dp}, which deems a classifier $f \in \mathcal{F}$ fair if its predicted label is independent of the sensitive attribute. Formally, $f$ satisfies DP if
\begin{align}
\label{eq:dp}
    P(\hat{Y}_{f_c}=1 \mid A=1) = P(\hat{Y}_{f_c}=1 \mid A=0).
\end{align}
To quantify deviations from \zcref{eq:dp}, we use the difference of demographic parity~(DDP)~\parencite{Cho2020, zeng2024}, defined as
\begin{align}
\label{eq:ddp}
    \mathrm{DDP}(f_c)=P(\hat{Y}_{f_c}=1 \mid A=1)-P(\hat{Y}_{f_c}=1 \mid A=0).
\end{align}
A larger value of $\mathrm{Acc}$ (closer to $1$) indicates higher accuracy, a smaller value of $|\mathrm{DDP}(f_c)|$ (closer to $0$) corresponds to greater fairness.

In this work, we aim to develop a fair learning algorithm whose resulting
trade-off, $T(f_c)$, approximates the optimal trade-off $T^*(\mathcal{F})$ as
closely as possible while providing post-hoc controllability. Here, post-hoc
controllability refers to the ability to adapt to different trade-off parameters at inference time with a computational cost no greater than a single forward pass, without retraining the model.
The optimal trade-off $T^*(\mathcal{F})$ is characterized by the Pareto front.
In our context, this Pareto front consists of all achievable pairs
$(\mathrm{Acc}(f), |\mathrm{DDP}(f)|)$ that are not dominated by any other solution.
We say that a solution $f'$ is dominated by another solution $f$, denoted by $f' \preceq f$, if
\begin{equation}
    \mathrm{Acc}(f) > \mathrm{Acc}(f')\;\text{and} \;|\mathrm{DDP}(f)|\leq |\mathrm{DDP}(f')| \quad \text{ or }\quad \mathrm{Acc}(f) \geq \mathrm{Acc}(f')\; \text{and} \; |\mathrm{DDP}(f)|< |\mathrm{DDP}(f')|.
\end{equation}
Following \textcite{xu2023}, the Pareto front is defined as
\begin{equation}
\begin{aligned}
\label{eq:tradeoff_optim}
    T^*(\mathcal{F}) = \Big\{\big(\mathrm{Acc}(f), |\mathrm{DDP}(f)|\big) \;\Big|\; f \in \mathcal{F},\;\nexists f' \in \mathcal{F} \text{ such that } f \preceq f'\Big\},
\end{aligned}
\end{equation}
where $\mathcal{F}$ denotes the set of all measurable functions.

For notational simplicity, we henceforth write  
$p_a := P(A = a)$ and $\eta_a(x) := P(Y = 1 \mid X = x, A = a)$ and denote the logit of $\eta_a(x)$ by $z_a(x)$.

\subsection{Fair Bayes-Optimal Classifier}
\label{sec:fairbayesoptimal}
A point on the Pareto front $T^*(\mathcal{F})$ can be obtained as the solution to the following optimization problem:
\begin{equation}
\label{eq:fair-bayesoptimal-classifier}
    f^*_\delta \in 
    \underset{f \in \mathcal{F}}{\arg\max}\; Acc(f)
    \quad\text{s.t.}\quad
    |\mathrm{DDP}(f)| \le \delta .
\end{equation}
We refer to $f^*_\delta$ as the \emph{fair Bayes-optimal classifier}.  
Under the assumption that $\eta_a(X)$ has a density on $[0,1]$, this classifier is equivalent to
\begin{equation}
\label{eq:fairbayes}
    f^*_{t}(x,a) = I\!\left(z_a(x)>-\log\frac{p_a - (2a-1)t}{p_a + (2a-1)t}\right),
\end{equation}
for an appropriately chosen $t$~\parencite{menon2018, zeng2024}, where $I(\cdot)$ denotes the indicator function. 
\zcref{eq:fairbayes} employs the logit representation derived from the original formulation based on the conditional class probability $\eta_a(x)$.
The parameter $t$ is selected as 
\begin{equation}
\label{eq:t}
t^*_\delta = \underset{t}{\arg\min}\{|t| : |\mathrm{DDP}(f^*_t)| \le \delta\}.
\end{equation}
When $t = 0$, the classifier corresponds to the unconstrained Bayes-optimal classifier. 
For notational convenience, we define the threshold in \zcref{eq:fairbayes} as
\begin{equation}
\label{eq:tau}
    \tau_a(t)=-\log\frac{p_a-(2a-1)t}{p_a+(2a-1)t}.
\end{equation}

\subsection{FairBayes}
\label{sec:fairbayes}
FairBayes \parencite{zeng2024} is a post-processing method building upon the theory of fair Bayes-optimal classifiers and uses the fairness tolerance $\delta$ described in \zcref{sec:fairbayesoptimal} as its trade-off parameter.  
To realize the analytical solution $f^*_{t^*_\delta}$ in practice, FairBayes empirically estimates the unknown distributional quantities appearing in \zcref{eq:fairbayes} and substitutes them with empirical counterparts.

\paragraph[Training phase]{Training phase: estimating the logit functions}
\label{par:train_fb}
The training phase seeks to estimate the optimal logit functions $z_a$, which are approximated using a parametric model $z_a(x; \theta_{z_a})$, where $\theta_{z_a}$ denotes the model parameters. These parameters are estimated by solving the following empirical risk minimization problem:
\begin{equation}
    \theta_{z_a}^{\mathrm{ERM}} = \underset{\theta_{z_a}}{\arg\min}\; \mathcal{L}_{pred}(\theta_{z_a}; a),
\end{equation}
where, for each group $a \in \{0, 1\}$,
\begin{equation}
\label{eq:l_pred}
\mathcal{L}_{pred}(\theta_{z_a}; a) = \frac{1}{n_a} \sum_{i: a_i = a} \ell\big(y_i, z_a(x_i; \theta_{z_a})\big).
\end{equation}
Here, $n_a = \sum_{i=1}^n I(a_i = a)$ denotes the number of samples in group $a$, and $\ell(\cdot,\cdot)$ is loss function, such as the cross-entropy loss and focal loss~\parencite{lin2017focal}.
For notational simplicity, we use $z_a^{\mathrm{ERM}}(x)$ as shorthand for $z_a(x; \theta^{\mathrm{ERM}}_{z_a})$.

\paragraph[Prediction phase]{Prediction phase: estimating the thresholds}
\label{par:infer_fb}
Given a trade-off parameter $\delta$, the goal in the prediction phase is to estimate $t^*_\delta$, and consequently, the corresponding optimal classifier $f^*_{t^*_\delta}$. 
\zcref{alg:post-process} summarizes the steps of the entire prediction-phase procedure, which we denote by \texttt{PostProcess}$(\cdot)$.

\begin{algorithm}[t]  \caption{\texttt{PostProcess}$(z_0^{\mathrm{ERM}}, z_1^{\mathrm{ERM}}, \delta, S)$}
  \label{alg:post-process}
  \begin{algorithmic}[1]
    \REQUIRE estimated logit functions $z^{\mathrm{ERM}}_0, z^{\mathrm{ERM}}_1$, trade-off parameter $\delta$
    \ENSURE estimated threshold parameter $\hat{t}_\delta$
    \STATE Estimate group priors by $\hat{p}_a \gets n_a / n$.
    \STATE Compute $\hat\tau_a(t)$ and $\widehat{f}^{\mathrm{ERM}}_{0}$ using \zcref{eq:thresh_new}.
    \STATE Compute $\widehat{\mathrm{DDP}}(Z(\theta^{\mathrm{ERM}}_{z_a}),0)$ using \zcref{eq:ddp_general}.
    \IF{$\left|\widehat{\mathrm{DDP}}(Z(\theta^{\mathrm{ERM}}_{z_a}),0)\right| \le \delta$}
        \STATE $\hat{t}_\delta(\theta^{\mathrm{ERM}}_{z_a}) \gets 0$.
    \ELSE
        \STATE Solve  Eq.~(\ref{eq:t_approx})
        via binary search.
    \ENDIF
    \STATE \textbf{return} $\hat{t}_\delta(\theta^{\mathrm{ERM}}_{z_a})$
  \end{algorithmic}
\end{algorithm}

\zcref{alg:post-process} first constructs estimates of $f^*_t$. Since estimates of the logit functions $z_a$ are already obtained in the training phase as $z^{\mathrm{ERM}}_a$, the remaining quantities to be estimated are the group-dependent thresholds $\tau_a$. 
Using the approximation $\hat{p}_a = n_a / n$, these thresholds are estimated  
(Alg.~\ref{alg:post-process}, lines 1--2) as
\begin{equation}
\label{eq:thresh_new}
    \hat{\tau}_0(t)
    = -\log\frac{\hat{p}_0 + t}{\hat{p}_0 - t},
    \qquad
    \hat{\tau}_1(t)
    = -\log\frac{\hat{p}_1 - t}{\hat{p}_1 + t}.
\end{equation}
Using these thresholds, the estimates of $f^*_t$ are then given by
\begin{equation}
\label{eq:f_erm}
    f^{\mathrm{ERM}}_t(x,a) = I\Big(z_a(x;\theta_{z_a}^{\mathrm{ERM}})>\hat{\tau}_a(t)\Big).
\end{equation}

Next, the parameter $t^*_\delta$ is estimated through an empirical approximation of \zcref{eq:t}. Given a vector $Z \in \mathbb{R}^n$ containing the estimated logit values for all observations, $\mathrm{DDP}$ for classifiers induced by thresholding at $\hat{\tau}_a(t)$ is estimated as
\begin{equation}
\begin{aligned}
\label{eq:ddp_general}
\widehat{\mathrm{DDP}}(Z, t)= 
\frac{1}{n_1} \sum_{i: a_i=1} I\Big(Z_i > \hat{\tau}_1(t)\Big) - \frac{1}{n_0} \sum_{i: a_i=0} I\Big(Z_i > \hat{\tau}_0(t)\Big),
\end{aligned}
\end{equation}
where $Z_i$ denotes the $i$-th component of $Z$. 
The estimate of $t^*_\delta$ corresponding to a given logit vector $Z$ is then obtained as
\begin{equation}
    \label{eq:t_approx}
    \hat t_\delta\big(Z\big) = \underset{t}{\arg\min}\Big\{|t| : \big|\widehat{\mathrm{DDP}}(Z, t)\big| \le \delta\Big\}.
\end{equation}
Since $\widehat{\mathrm{DDP}}$ is monotone non-increasing in $t$,  
this optimization can be efficiently solved via binary search~(Alg.~\ref{alg:post-process}, lines 3--9).
This procedure requires at most $\mathcal{O}(\log n)$ evaluations of $\widehat{\mathrm{DDP}}$, each of which has cost $\mathcal{O}(n)$, resulting in an overall complexity of $\mathcal{O}(n \log n)$. This is substantially more computationally efficient than retraining a model.

Once the estimate of $t^*_\delta$ via \texttt{PostProcess}$(\cdot)$ in \zcref{alg:post-process}, the resulting classifier is obtained by substituting this estimate into $f^{\mathrm{ERM}}_t$.
Let $Z(\theta_{z_a}^{\mathrm{ERM}}) = \big(z^{\mathrm{ERM}}_{a_1}(x_1), \dots, z^{\mathrm{ERM}}_{a_n}(x_n)\big)^\top \in \mathbb{R}^n$ denote the logit vector for the logit functions $z^{\mathrm{ERM}}_a$. The estimate of $t^*_\delta$ is then given by $\hat t_\delta\big(Z(\theta^{\text{ERM}}_{z_a})\big)$, which we also denote by the shorthand $\hat t_\delta\big(\theta^{\text{ERM}}_{z_a}\big)$. The final classifier is therefore $\widehat{f}^{\mathrm{ERM}}_{\hat{t}_\delta(\theta_{z_a}^{\text{ERM}})}$.

\section{Proposed Method}
\label{sec:alg}
\begin{figure}[t]
    \centering
    \includegraphics[width=0.6\linewidth]{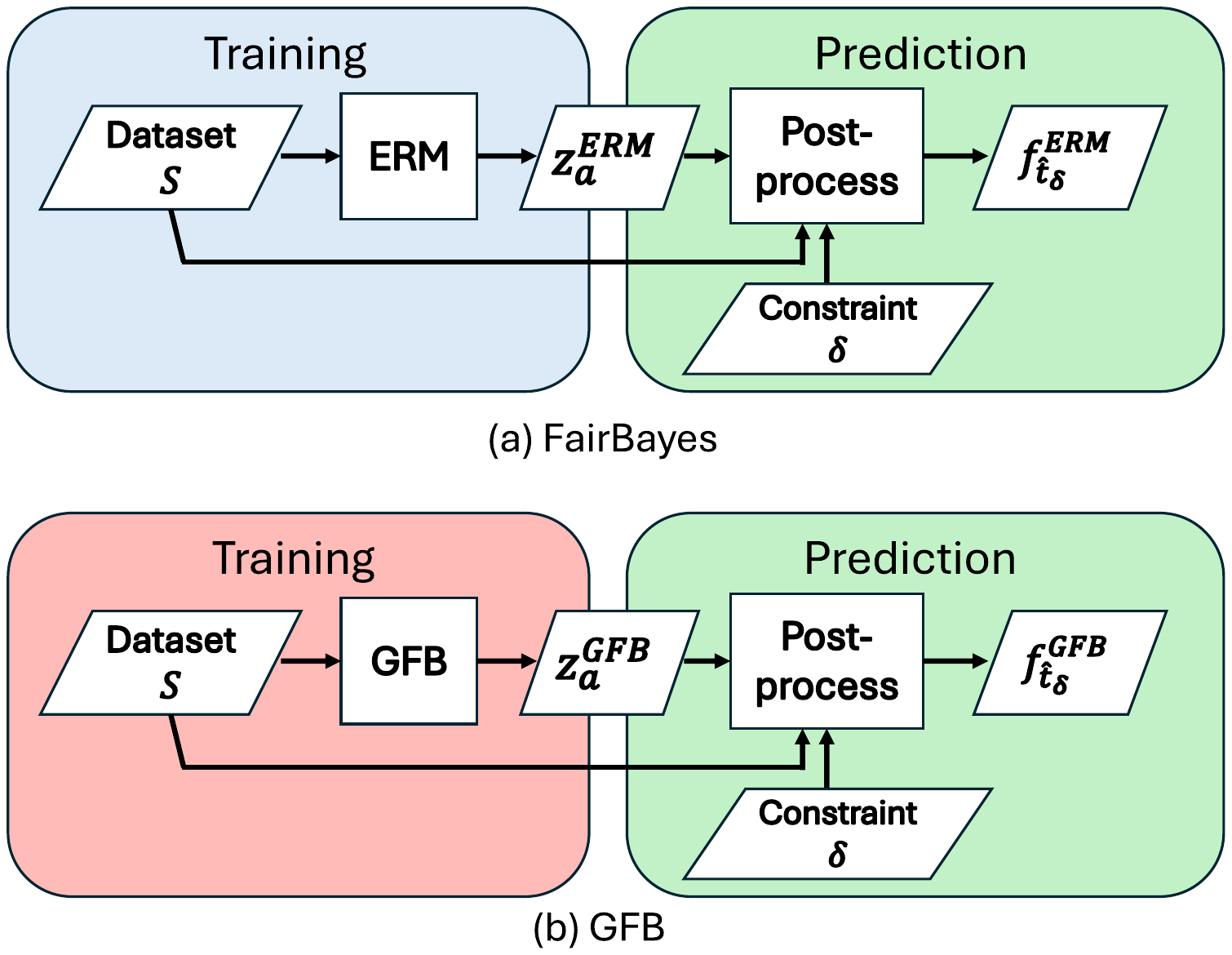}
    \caption{Comparison of algorithmic flows: (a) FairBayes and (b) the proposed method.}
    \label{fig:algorithm}
\end{figure}
\begin{figure*}[t]
  \centering
  \begin{subfigure}[b]{0.42\textwidth}
    \centering
    \includegraphics[width=\textwidth]{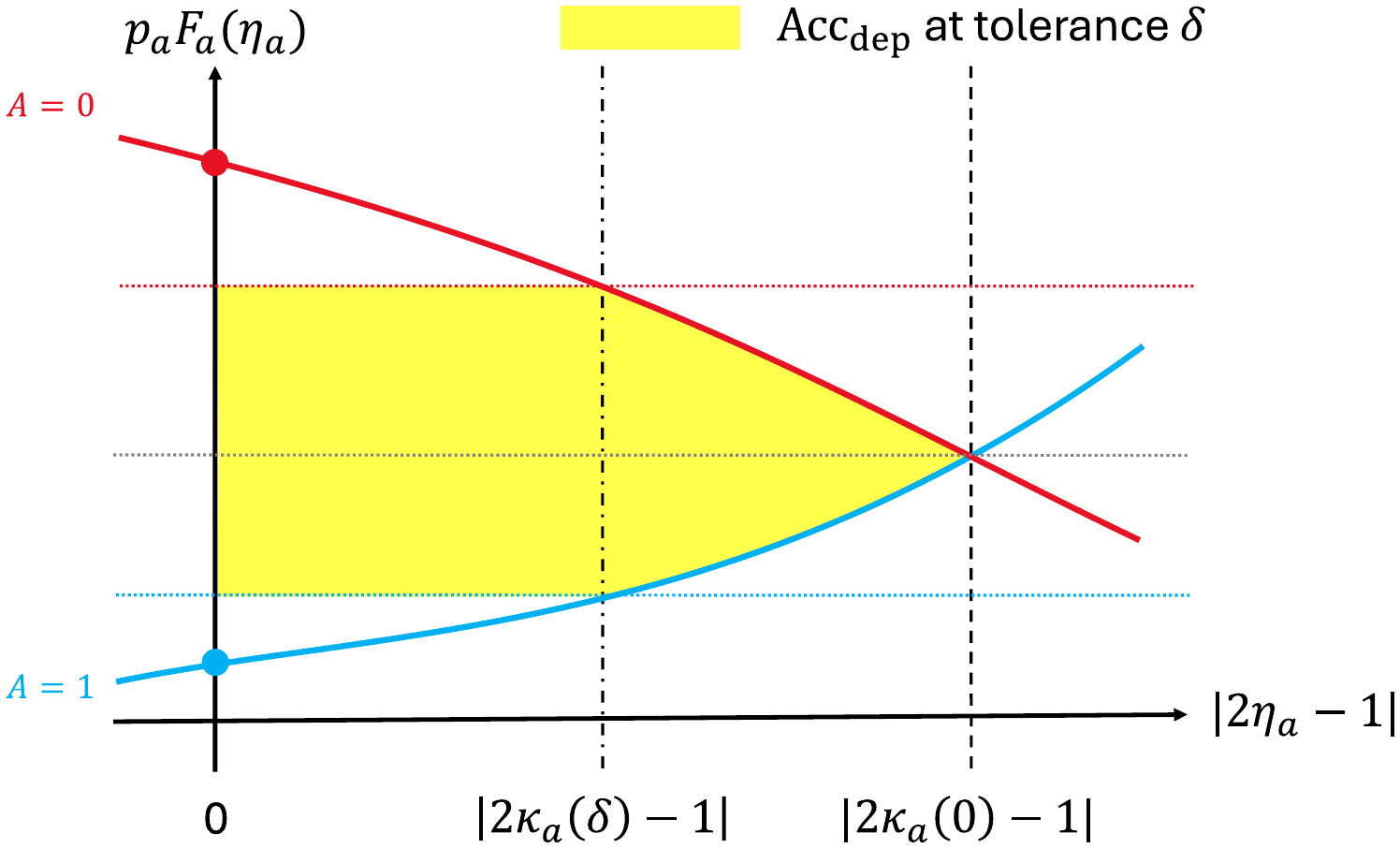}
    \caption{Distribution with poor trade-off efficiency}
    \label{fig:dist_bad}
  \end{subfigure}
\hspace{0.05\linewidth}
  \begin{subfigure}[b]{0.42\textwidth}
    \centering
    \includegraphics[width=\textwidth]{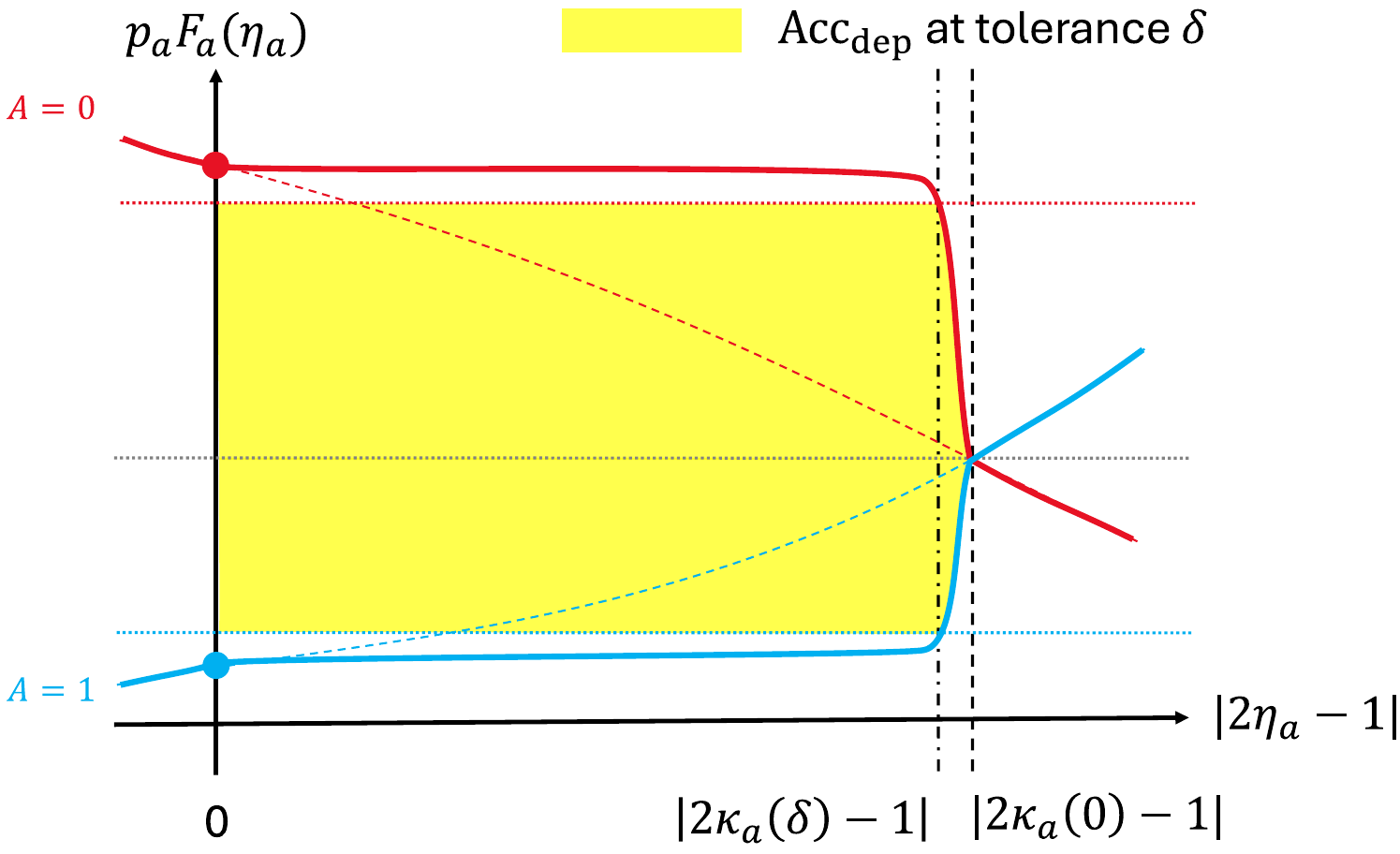}
    \caption{Distribution with good trade-off efficiency}
    \label{fig:dist_good}
  \end{subfigure}
  \caption{
  \textbf{Effect of distributional structure on trade-off efficiency.}
  The horizontal and vertical axes denote $|2\eta_a-1|$ and $p_aF_a(\eta_a)$, respectively, where $F_a(\cdot)$ denotes the cumulative distribution function conditioned by $A=a$. 
  Yellow regions indicate the values of the $\mathrm{Acc}_\text{dep}$ at tolerance $\delta$.
  While the figure above depicts a specific scenario where $|2\kappa_0(\delta)-1| = |2\kappa_1(\delta)-1|$, the conclusion holds true even in general cases, despite the visualization becoming more complex.
  }
  
   \label{fig:proposed}
\end{figure*}

In this section, we present our novel fair classification algorithm, \emph{Guidance to Fairest-Boundary}~(GFB). Our method builds upon a theoretical analysis of the relationship between the trade-off efficiency of the post-processed classifier and the data distribution~(\zcref{sec:analyse}). While existing results characterize the trade-off efficiency of the optimal classifier under the underlying distribution, our analysis additionally captures the trade-off efficiency achieved when the optimal classifier is applied to a different distribution. 

Based on our theoretical analysis, GFB learns transformed feature representations to achieve high trade-off efficiency. \zcref{fig:algorithm} compares the procedures of FairBayes and our proposed method. Our method maintains post-hoc controllability by adopting the same \texttt{post-process} as FairBayes during inference~(right in \zcref{fig:algorithm}), while simultaneously improving the trade-off efficiency by learning appropriate latent representations during training~(left bottom in \zcref{fig:algorithm}). 

We also present a practical gradient-based optimization algorithm for our proposed method based on the Moving-Average SOBA (MA-SOBA) framework~\parencite{chen2023}, described in \zcref{sec:procedure}. To adapt this framework to our objective, we address the challenges that arise in computing gradients for our learning objective.

\subsection{Theoretical Analysis of Trade-off Efficiency and Data Distribution}
\label{sec:analyse}
This subsection presents a theoretical result characterizing the trade-off efficiency of a post-processed classifier under changes in the data distribution. Specifically, consider a random variable corresponding a transformed feature vector $X' \in \mathcal{X}'$, and let $\kappa_a(\delta):[0,1]\to\mathbb{R}$ be a monotonic function of fairness tolerance $\delta$ such that the distance $|\kappa_a(\delta) - 0.5|$ increases as the tolerance $\delta$ becomes more stringent. We define a post-processed classifier $f^\eta_\delta$ based on $\eta_A(X')$ by employing $\kappa_a(\delta)$ as its decision threshold:
\begin{equation}
\label{eq:f_thm}
    f^{\eta}_{\delta}(x', a) = I\big(\eta_a(x') > \kappa_a(\delta)\big).
\end{equation}

The following theorem characterizes the trade-off efficiency of the generalized classifier $f^\eta_{\delta}$:
\begin{theorem}[Accuracy dependence on the fairness tolerance $\delta$]
\label{thm:threshold_shift_error}
For any classifier of the form in~\zcref{eq:f_thm}, its accuracy satisfies
\begin{equation}
    \mathrm{Acc}(f^\eta_{\delta}) = \mathrm{Acc}(f^\eta_{0}) + \mathrm{Acc}_{\mathrm{dep}}(f^\eta_{\delta}),
\end{equation}
where
\begin{equation}
\label{eq:acc_dep}
    \mathrm{Acc}_{\mathrm{dep}}(f^\eta_{\delta})=\mathbb{E}_{X',A}\!\Big[I\!\left(\eta_A(X') \in \mathcal{I}_{A}(\delta)\right)\big|2\eta_A(X')-1\big|\Big],
\end{equation}
and
\begin{equation}
\begin{aligned}
    \mathcal{I}_a(\delta):=\Big(&\min\!\big(\kappa_a(\delta),\,\kappa_a(0)\big), \;\max\!\big(\kappa_a(\delta),\,\kappa_a(0)\big)\Big].
\end{aligned}
\end{equation}
\end{theorem}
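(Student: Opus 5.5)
The plan is a direct pointwise computation: write the accuracy of a deterministic threshold classifier as an expectation of a function of $\eta_A(X')$, subtract the $\delta=0$ instance, and show that the integrand of the difference equals $I(\eta_A(X')\in\mathcal{I}_A(\delta))\,|2\eta_A(X')-1|$. Throughout, $\eta_a(x')$ means $P(Y=1\mid X'=x',A=a)$ for the transformed feature $X'$.

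\textbf{Step 1 (accuracy identity).} For any $\{0,1\}$-valued $f(x',a)$, condition on $(X',A)$ and use the tower property. This gives
\begin{equation*}
\mathrm{Acc}(f)=\mathbb{E}\big[\eta_A(X')f(X',A)+(1-\eta_A(X'))(1-f(X',A))\big]=\mathbb{E}[1-\eta_A(X')]+\mathbb{E}\big[f(X',A)\,(2\eta_A(X')-1)\big].
\end{equation*}
Applying this to $f^\eta_\delta$ and $f^\eta_0$ and subtracting, the first term cancels, leaving
\begin{equation*}
\mathrm{Acc}(f^\eta_\delta)-\mathrm{Acc}(f^\eta_0)=\mathbb{E}\big[(f^\eta_\delta-f^\eta_0)(X',A)\,(2\eta_A(X')-1)\big].
\end{equation*}

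\textbf{Step 2 (pointwise evaluation).} Fix $a$. Since both classifiers threshold $\eta_a$, $f^\eta_\delta-f^\eta_0$ is nonzero only when $\eta_a$ lies between the two thresholds.
\begin{itemize}
\item If $\kappa_a(\delta)\le\kappa_a(0)$, then $f^\eta_\delta-f^\eta_0=I(\eta_a\in(\kappa_a(\delta),\kappa_a(0)])$.
\item If $\kappa_a(\delta)>\kappa_a(0)$, then $f^\eta_\delta-f^\eta_0=-I(\eta_a\in(\kappa_a(0),\kappa_a(\delta)])$.
\end{itemize}
In both cases the indicator is exactly $I(\eta_a\in\mathcal{I}_a(\delta))$, because $\mathcal{I}_a(\delta)$ is the interval from the min to the max of the two thresholds, open on the left and closed on the right. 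This matches the strict inequality in \zcref{eq:f_thm}.

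\textbf{Step 3 (signs).} This is the key point. I would read the assumption on $\kappa_a$ as follows: $\delta=0$ is the most stringent tolerance, so $\kappa_a(0)$ is farthest from $1/2$, and $\kappa_a(\delta)$ moves monotonically toward $1/2$ without crossing it. Hence $\kappa_a(\delta)$ lies between $1/2$ and $\kappa_a(0)$.
\begin{itemize}
\item If $\kappa_a(0)\ge 1/2$, we are in the first case of Step 2. Every $\eta_a\in\mathcal{I}_a(\delta)$ satisfies $\eta_a>\kappa_a(\delta)\ge1/2$, so $(f^\eta_\delta-f^\eta_0)(2\eta_a-1)=I(\cdot)\,|2\eta_a-1|$.
\item If $\kappa_a(0)<1/2$, we are in the second case. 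Every $\eta_a\in\mathcal{I}_a(\delta)$ satisfies $\eta_a\le\kappa_a(\delta)\le 1/2$, so $-(2\eta_a-1)=|2\eta_a-1|$ and the product is again $I(\cdot)\,|2\eta_a-1|$.
\end{itemize}
Substituting into Step 1 yields $\mathrm{Acc}_{\mathrm{dep}}(f^\eta_\delta)$ as in \zcref{eq:acc_dep}.

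\textbf{Main obstacle.} The delicate part is Step 3. The identity relies on $\kappa_a(\delta)$ and $\kappa_a(0)$ lying on the same side of $1/2$, with $\kappa_a(\delta)$ the closer of the two. I would state this reading of the monotonicity hypothesis explicitly at the start, noting that it holds for the fair Bayes thresholds of \zcref{eq:fairbayes}. Without it, the integrand could take the value $-|2\eta_a-1|$ on part of $\mathcal{I}_a(\delta)$, and the formula would need a signed correction. Everything else is routine. Measurability and integrability are immediate since all quantities are bounded, and boundary ties are handled by the half-open interval convention.
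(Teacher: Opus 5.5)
Your proof is correct and is essentially the paper's argument. Both compare conditional accuracies pointwise on the set $\{\eta_A(X')\in\mathcal{I}_A(\delta)\}$ where $f^\eta_\delta$ and $f^\eta_0$ disagree: the paper notes that there $f^\eta_\delta$ agrees with the Bayes rule $f^\eta_1$, so conditional accuracy rises from $\min(\eta,1-\eta)$ to $\max(\eta,1-\eta)$, while you use the identity $\mathrm{Acc}(f)=\mathbb{E}[1-\eta_A(X')]+\mathbb{E}[f(X',A)\,(2\eta_A(X')-1)]$ plus a sign check. The same-side condition you isolate in Step~3 is exactly what the paper uses without stating it when it asserts this agreement with $f^\eta_1$ on $\mathcal{I}_A(\delta)$; there it follows from monotonicity of $\kappa_a$ together with the proof's premise that $\kappa_a=1/2$ for sufficiently large $\delta$.
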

\zcref{thm:threshold_shift_error} shows that the accuracy of $f^\eta_{\delta}$ decomposes into a $\delta$-independent term and a $\delta$-dependent term. Since the trade-off efficiency is affected solely by the $\delta$-dependent term, $\mathrm{Acc}_{\mathrm{dep}}$ characterizes the trade-off efficiency. Here, $\mathcal{I}_A(\delta)$ denotes the interval such that $\eta_a(x') \in \mathcal{I}_a(\delta)$ if and only if $f^\eta_{\delta}(x', a) \ne f^\eta_{0}(x', a)$. 

Importantly, this theorem applies to any classifier of the form in \zcref{eq:f_thm}. 
In the DDP case, it coincides with \zcref{eq:fairbayes} by setting 
$\kappa_a(\delta)=\sigma(\tau_a(t^*_\delta))$, up to the change of input domain from $X$ to $X'$.
Extensions to other fairness metrics are discussed in \zcref{app:fairbayesoptimal}.

\zcref{thm:threshold_shift_error} suggests that higher trade-off efficiency is achieved when the distributions of $\eta_a(X')$ are concentrated near $\kappa_a(0)$, the most fair threshold, thereby increasing $\mathrm{Acc}_{\mathrm{dep}}$.
\zcref{fig:proposed} illustrates the values of $\mathrm{Acc}_{\mathrm{dep}}$ under two representative distributions, where the areas of the yellow regions correspond to $\mathrm{Acc}_{\text{dep}}(f^\eta_{\delta})$. The red and blue lines represent $P_{X'|A=0}(\eta_0(X')\leq\kappa_0(\delta))$ and $P_{X'|A=1}(\eta_1(X')\leq\kappa_1(\delta))$, respectively, as functions of $\delta$ along the horizontal axis. The two distributions of $\eta_a(X')$ outside of $\mathcal{I}_a(1)$ are identical, whereas within $\mathcal{I}_a(1)$, the distribution in \zcref{fig:dist_good} is more concentrated near $\kappa_a(0)$ than that in \zcref{fig:dist_bad}. Accordingly, \zcref{fig:proposed} shows that the yellow area associated with \zcref{fig:dist_good} is larger than that associated with \zcref{fig:dist_bad}. 
These observations suggest a key design principle for achieving superior trade-off efficiency: reshaping the distribution so that $\eta_a(X')$ concentrates near the most fair threshold $\kappa_a(0)$ increases the value of $\mathrm{Acc}_{\mathrm{dep}}$.

\subsection{Training Algorithm}
In this subsection, we present the training algorithm for GFB. 
Motivated by the analyses in \zcref{sec:analyse}, we introduce a parametrized transformation $g_a(\cdot; \theta_{g_a}): \mathcal{X} \to \mathcal{X}'$ from $X$ to $X'$ and optimize its parameters during training so that the resulting distribution of $\eta_a(X')$ is concentrated near $\kappa_a(0)$. 
By integrating representation learning during training with post-processing at inference, GFB achieves superior fairness-accuracy trade-offs while remaining post-hoc controllability.

\subsubsection{Design of Distribution-Shaping Loss $\mathcal{L}_{gen}$}
We train the transformation such that the resulting logits 1) are concentrated near the most fair threshold and 2) achieve high predictive accuracy. To this end, we introduce parameters $\theta_{z_a}$ for $z_a(\cdot;\theta_{z_a})$, which estimates the logits from transformed features. Our learning objective is defined over the logits induced from parameters $\theta_{g_a}$ and $\theta_{z_a}$ and consists of two terms corresponding to these goals, defined as
\begin{equation}
\begin{aligned}
\label{eq:l_gen}
    \mathcal{L}_{gen}&(\theta_{g_a}, \theta_{z_a}; a) 
    = (1-\lambda)\mathcal{L}_{dist}\big(\theta_{g_a}, \theta_{z_a};a\big) + \lambda\mathcal{L}_{pred}\big(\theta_{g_a},\theta_{z_a}; a\big).
\end{aligned}
\end{equation}
Here, $\mathcal{L}_{dist}$ and $\mathcal{L}_{pred}$ corresponds to two objectives, weighted by $\lambda$.

\paragraph{Distance loss $\mathcal{L}_{dist}$}
The function $\mathcal{L}_{dist}$ encourages the logit distribution within the interval to concentrate near the most fair threshold. Let $J(\tau)$ denote the interval in the logit space corresponding to $\mathcal{I}_A(1)$ for a given threshold $\tau$, defined as $J(\tau) = \big(\min\big(0, \tau\big), \max\big(0, \tau\big)\big]$. We measure the proximity of a logit within $J(\tau)$ to the threshold $\tau$ by
\begin{equation}
\begin{aligned}
\label{eq:dist}
    D(&z, \tau)=\begin{cases}
        |\tau-z|, & z\in J(\tau),\\
        0, & \text{otherwise}.
    \end{cases}
\end{aligned}
\end{equation}
Based on this function, $\mathcal{L}_{dist}$ is given by
\begin{equation}
\begin{aligned}
\label{eq:L_dist}
    &\mathcal{L}_{dist}(\theta_{g_a}, \theta_{z_a}; a)=
    \frac{1}{n_a}\sum_{i:a_i=a}\bigg[
        D\Big(z_a\big(g_a(x_i;\theta_{g_a});\theta_{z_a}\big), \hat{\tau}_a\big(\hat{t}_0(\theta_{g_a}, \theta_{z_a})\big)\Big)\bigg],
\end{aligned}
\end{equation}
where $\hat{t}_0(\theta_{g_a}, \theta_{z_a})$ is shorthand for $\hat{t}_0(Z(\theta_{g_a}, \theta_{z_a}))$, and $Z(\theta_{g_a}, \theta_{z_a}) = (\tilde{z}_1, \dots, \tilde{z}_n)^\top \in \mathbb{R}^n$ is the induced logit vector with entries $\tilde{z}_i = z_{a_i}\big(g_{a_i}(x_i; \theta_{g_{a_i}}); \theta_{z_{a_i}}\big)$.

\paragraph{Prediction loss $\mathcal{L}_{pred}$}
The function $\mathcal{L}_{pred}$ measures the prediction loss of the induced logits and is defined as
\begin{equation}
\begin{aligned}
    \mathcal{L}_{pred}&(\theta_{g_a}, \theta_{z_a}; a) = \frac{1}{n_a}\sum_{i:a_i=a}\Big[\ell\big(y_i, z_a(g_a(x_i; \theta_{g_a}); \theta_{z_a})\big)\Big].
\end{aligned}
\end{equation}
This term promotes high predictive accuracy of the resulting prediction model.  

\subsubsection{Overall Formulation}
We now present the overall learning objective of our training algorithm. To ensure that $z_a(\cdot;\theta_{z_a})$ serves as an accurate logit predictor, we choose $\theta_{z_a}$ to minimize the predictive loss. Specifically, for given transformation parameters $\theta_{g_a}$, the selected parameter is defined as
\begin{align}
    \theta_{z_a}^{\text{GFB}}(\theta_{g_a}) =  \underset{\theta_{z_a}}{\arg\min}\;\mathcal{L}_{pred}\big(\theta_{g_a}, \theta_{z_a}; a\big).
\end{align}
The transformation parameters $\theta_{g_a}$ are then learned by minimizing the function $\mathcal{L}_{gen}$ evaluated at $\theta_{z_a}^{\text{GFB}}(\theta_{g_a})$, yielding
\begin{align}
    \underset{\theta_{g_a}}{\min}\;\Phi(\theta_{g_a}) :=\mathcal{L}_{gen}\Big(\theta_{g_a}, \theta_{z_a}^{\text{GFB}}(\theta_{g_a});  a\Big).
\end{align}
An equivalent bi-level optimization formulation is given by
\begin{equation}
\begin{aligned}
\label{eq:bilevel_optim}
    \underset{\theta_{g_a}, \theta_{z_a}}{\min}\;\mathcal{L}_{gen}\big(\theta_{g_a}, \theta_{z_a};  a\big),\quad\text{s.t. } \theta_{z_a} =  \underset{\theta_{z_a}}{\arg\min}\;\mathcal{L}_{pred}\big(\theta_{g_a}, \theta_{z_a}; a\big).
\end{aligned}
\end{equation}

We cannot directly apply standard gradient-based optimization algorithm, such as gradient descent, to solve \zcref{eq:bilevel_optim}, as it is a constrained optimization problem. Nevertheless, in practice, one may wish to employ the gradient-based optimization techniques, particularly when $g_a$ and $z_a$ are modeled using deep neural networks. Accordingly, in the subsequent subsection, we present a gradient-based optimization algorithm for solving \zcref{eq:bilevel_optim}.

\subsection{Optimization Procedure}
\label{sec:procedure}
In this subsection, we present a gradient-based optimization algorithm for solving \zcref{eq:bilevel_optim}, highlighting its practical applicability. We adopt MA-SOBA~\parencite{chen2023}, a gradient-based bi-level optimization algorithm, to address \zcref{eq:bilevel_optim}. However, computing the gradient of the the outer objective $\mathcal{L}_{dist}$ poses two key challenges: (i) the distance metric $D(z, \tau)$ in \zcref{eq:dist} is discontinuous, and 
(ii) the threshold parameter $\hat{t}_0(\cdot)$ is defined as the minimizer under a discontinuous constraint of $\widehat{\mathrm{DDP}} = 0$. To address these challenges, we introduce smooth proxy functions to approximate discontinuous components and derive the analytical gradient of $\hat{t}_0$.

\paragraph{MA-SOBA}
MA-SOBA is an optimization algorithm for bi-level optimization problems that relies solely on gradients of the inner and outer objective functions, avoiding explicit Hessian inversion~\parencite{chen2023}. Specifically, MA-SOBA maintains auxiliary variables that approximate the product of the inverse Hessian and the gradient. It then iteratively performs simultaneous updates of the inner, outer, and auxiliary variables using a moving-average scheme. 
Details, including convergence, are provided in Appendix~\ref{ap:ma-soba}.

Once the proposed method is formulated as a bi-level optimization problem, we leverage the MA-SOBA framework to iteratively update the inner variable $\theta_{z_a}$, the outer variable $\theta_{g_a}$, and the auxiliary variable $w_a$. At each iteration $k$, the update directions for these variables are computed as
\begin{align}
    \label{eq:d_za}
    \textbf{inner: \;}&D_{z_a}(\theta_{g_a}^k, \theta_{z_a}^k, w^k_a) = \nabla_2\mathcal{L}_{pred}(\theta_{g_a}^k, \theta_{z_a}^k)\\
    \label{eq:w}
    \textbf{aux: \;}& D_{w_a}(\theta_{g_a}^k, \theta_{z_a}^k, w_a^k) = \nabla_{22}^2\mathcal{L}_{pred}(\theta_{g_a}^k, \theta_{z_a}^k)w^k_a -\nabla_2\mathcal{L}_{gen}(\theta_{g_a}^k, \theta_{z_a}^k)\\
    \label{eq:d_ga}
    \textbf{outer: \;}&D_{g_a}(\theta_{g_a}^k, \theta_{z_a}^k, w^k_a) = \nabla_1\mathcal{L}_{gen}(\theta_{g_a}^k, \theta_{z_a}^k) - \nabla_{12}^2\mathcal{L}_{pred}(\theta_{g_a}^k, \theta_{z_a}^k)w^k_a.
\end{align}
Here, $\nabla_1$ and $\nabla_2$ denote the gradient operators with respect to the first and second arguments, respectively. The operators $\nabla_{12}^2$ and $\nabla_{22}^2$ correspond to differentiating with respect to the first and then second arguments, and the second and then second arguments, respectively, yielding Jacobian matrices.

Computing $\nabla_1 \mathcal{L}_{gen}$ and $\nabla_2 \mathcal{L}_{gen}$ via the chain rule requires evaluating gradients of a non-differentiable component and a component defined by the minimizer of a non-differentiable function, as discussed above. To enable gradient-based optimization, we address these gradient computation challenges.

\paragraph{Smoothing the distance metric}
The distance metric $D(z, \tau)$ is non-differentiable due to the discontinuity at $z=0$. To resolve this, we approximate the indicator function $I(z \in J(\tau))$ by replacing it with a smooth function $w(z): \mathbb{R} \to [0, 1]$, such as a sigmoid function. We then define the smooth surrogate as $\hat D(z, \tau) = w(z) \cdot |z - \tau|$.

\paragraph{Analytical derivation of the threshold gradient}
To compute the gradient of $\hat{t}_0$, we need to address both the non-continuity of $\widehat{\mathrm{DDP}}$ and the gradient computation of a function defined as a minimizer. 
To handle the non-differentiability of $\widehat{\mathrm{DDP}}$, we replace the indicator functions with a smooth, monotonically increasing function $\psi: \mathbb{R} \to \mathbb{R}$. 
While our focus in this work is DDP, the following derivation applies to a broad class of disparity measures.
To maintain this generality, we introduce a generalized formulation that encompasses DDP. Let $\mathcal{S}_a \subseteq \{1,\ldots,n\}$ be an arbitrary subset of sample with sensitive attribute $A=a$. Further, let $\gamma_a(t)$ be a continuous and monotonic threshold function satisfying the additional technical assumptions detailed in Appendix~\ref{ap:theorem2}. We define the general surrogate disparity function as:
\begin{equation}
\label{eq:phi}
\begin{aligned}
\phi&(Z; t) =\frac{1}{|\mathcal{S}_1|}\sum_{i\in\mathcal{S}_1}\psi\big(Z_i - \gamma_1(t)\big) - \frac{1}{|\mathcal{S}_0|}\sum_{i\in\mathcal{S}_0}\psi\big(Z_i - \gamma_0(t)\big).
\end{aligned}
\end{equation}
This formulation reduces to the smooth $\widehat{\mathrm{DDP}}$ by setting $\mathcal{S}_a=\{i:a_i=a\}$ and $\gamma_a(t)=\hat{\tau}_a(t)$. Extensions to other fairness metrics are discussed in Appendix~\ref{app:fairbayesoptimal}.

Under this surrogate, $\hat{t}_0$ is replaced by a mapping $Z \to {\arg\min}_t\{|t| : \phi(Z; t) = 0\}$. The following theorem characterizes the gradient of such a function.
\begin{theorem}[Implicit Gradient of $t$]
\label{thm:ift_t}
Let $\phi(Z,t)$ be the surrogate disparity function defined in \zcref{eq:phi}. Assume that a function $\psi$ is continuously differentiable satisfying $\psi'(u)>0$ for all $u \in \mathbb{R}$. Then, for every $Z\in\mathbb{R}^n$, there exists a unique parameter $t(Z)$ satisfying
\begin{equation}
    \phi(Z,t(Z)) = 0.
\end{equation}
Moreover, $t(Z)$ is differentiable with respect to $Z$, and its gradient is given by
\begin{equation}
    \nabla_1 t(Z) = -\frac{\nabla_1 \phi(Z,t(Z))}{\nabla_2 \phi(Z,t(Z))}.
\end{equation}
\end{theorem}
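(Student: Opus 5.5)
We prove this with the intermediate value theorem followed by the implicit function theorem. Fix $Z\in\mathbb{R}^n$ and view $\phi(Z,\cdot)$ as a function of $t$ on the admissible domain of the thresholds. For $\hat\tau_a$ this is $|t|<\min(\hat p_0,\hat p_1)$. The technical assumptions on $\gamma_a$ in Appendix~\ref{ap:theorem2} are taken to say three things:
\begin{itemize}
\item $\gamma_a$ is continuously differentiable on this domain;
\item $\gamma_1$ is strictly increasing and $\gamma_0$ strictly decreasing (the orientation of $\hat\tau_a$);
\item $\gamma_1(t)\to\mp\infty$ and $\gamma_0(t)\to\pm\infty$ at the two endpoints of the domain.
\end{itemize}
If the appendix states these conditions differently, the argument below only needs strict monotonicity of $\phi$ in $t$ and a sign change at the endpoints.

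\textbf{Existence and uniqueness.} Differentiating in $t$ gives
\begin{equation*}
\nabla_2\phi(Z,t) = -\frac{1}{|\mathcal{S}_1|}\sum_{i\in\mathcal{S}_1}\psi'\big(Z_i-\gamma_1(t)\big)\gamma_1'(t) + \frac{1}{|\mathcal{S}_0|}\sum_{i\in\mathcal{S}_0}\psi'\big(Z_i-\gamma_0(t)\big)\gamma_0'(t).
\end{equation*}
Since $\psi'>0$ and the $\gamma_a'$ have the signs above, both terms are nonpositive. At least one is strictly negative, so $\phi(Z,\cdot)$ is strictly decreasing and $t(Z)$, if it exists, is unique.

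For existence, use the limits at the endpoints. Because $\psi$ is increasing, the first average increases to its supremum $\psi(+\infty)$ while the second decreases to its infimum $\psi(-\infty)$ at one endpoint, and the reverse happens at the other. Hence $\phi$ tends to $\psi(+\infty)-\psi(-\infty)>0$ at one end and to $\psi(-\infty)-\psi(+\infty)<0$ at the other. This holds whether these limits are finite, as for a sigmoid, or infinite. Continuity and the intermediate value theorem then give a zero. Since the zero is unique, the $\arg\min_t\{|t|:\phi=0\}$ coincides with it.

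\textbf{Differentiability.} $\phi$ is $C^1$ jointly in $(Z,t)$, because $\psi\in C^1$ and $\gamma_a\in C^1$. Also $\nabla_2\phi(Z,t(Z))\neq 0$ by the strict negativity shown above. The implicit function theorem therefore gives a neighbourhood of each $Z_0$ and a $C^1$ map $\tilde t$ with $\phi(Z,\tilde t(Z))=0$. Uniqueness of the root forces $\tilde t=t$ locally, so $t$ is differentiable everywhere. Differentiating $\phi(Z,t(Z))=0$ by the chain rule gives $\nabla_1\phi+\nabla_2\phi\,\nabla_1 t=0$, which is the stated formula. Explicitly, $\nabla_1\phi$ has entries $\psi'(Z_i-\gamma_a(t))/|\mathcal{S}_a|$ with sign $+$ for $a=1$ and $-$ for $a=0$.

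The main obstacle is the existence step. It depends on exactly which boundary and monotonicity conditions on $\gamma_a$ the appendix imposes, and with bounded $\psi$ one must check the sign change carefully rather than rely on unboundedness.
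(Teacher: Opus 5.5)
Your overall route matches the paper's. You use strict monotonicity of $\phi(Z,\cdot)$ for uniqueness, the intermediate value theorem for existence, and the implicit function theorem with $\nabla_2\phi\neq 0$ for differentiability. Your endpoint-limit argument stands in for the paper's sufficient-separation assumption (Assumption~\ref{assum:separation}), which the paper checks for DP, EOp and PE through the same endpoint behaviour.

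The existence step as you wrote it, however, rests on a false premise. Your third bullet requires \emph{both} thresholds to diverge at each endpoint. For $\hat\tau_a$ on $|t|<\min(\hat p_0,\hat p_1)$ with, say, $\hat p_1<\hat p_0$, only $\hat\tau_1$ diverges as $t\to\pm\hat p_1$, while $\hat\tau_0(\pm\hat p_1)$ is finite. EOp and PE show the same one-sided behaviour. So in the typical case it is not true that one group average tends to $\sup\psi$ while the other tends to $\inf\psi$, and the limits $\pm\bigl(\psi(+\infty)-\psi(-\infty)\bigr)$ you compute are not the actual limits.

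The repair is short and uses only what you already have. It suffices that $\gamma_1\to+\infty$ or $\gamma_0\to-\infty$ at the upper endpoint, and $\gamma_1\to-\infty$ or $\gamma_0\to+\infty$ at the lower endpoint. For instance, suppose $\gamma_1(t)\to+\infty$ at the upper endpoint. Then the group-$1$ average tends to $\inf\psi$. The group-$0$ average is nondecreasing in $t$, so for $t>t_0$ it is at least its value at a fixed interior $t_0$. That value exceeds $\inf\psi$ because $\psi$ is strictly increasing. Hence $\phi(Z,t)<0$ near that endpoint, and the other cases are symmetric. This works for bounded and unbounded $\psi$ alike, and it is exactly the content of the paper's separation check.

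A smaller point: strict monotonicity of $\gamma_a$ only yields $\gamma_1'\ge 0\ge\gamma_0'$. To get $\nabla_2\phi(Z,t)<0$ you need the derivatives to be nonzero. This holds for the explicit thresholds, e.g.\ $\hat\tau_1'(t)=2\hat p_1/(\hat p_1^2-t^2)>0$; the paper glosses this the same way. Your observation that uniqueness of the root identifies the local implicit function with $t(Z)$, giving differentiability everywhere, is a step the paper leaves implicit.
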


By replacing non-smooth components with smooth surrogates and leveraging gradient computation in \zcref{thm:ift_t},  we can solve \zcref{eq:bilevel_optim} using MA-SOBA. 
Algorithm~\ref{alg:proposed} summarizes the complete pipeline of the proposed approach, spanning both the training and inference phases.

\begin{algorithm}[t]
    \caption{Overall Procedure of GFB}
    \label{alg:proposed}
    \begin{algorithmic}[1]
    \REQUIRE Dataset $S$, number of step $K$, stepsizes $\{\alpha_a^k,\beta_a^k,\gamma_a^k\}$, moving-average parameter $\rho^k_a\in(0,1)$, trade-off parameter $\delta$.
    \ENSURE Fair classifier $f^{\text{GFB}}_\delta(x, a)$.

    \STATE \textbf{Initialize} $\theta_{g_a}^0,\theta_{z_a}^0$/ $w_a^0 \leftarrow \mathbf{0}$, $h_a^0 \leftarrow \mathbf{0}$ for $a\in\{0,1\}$.
    
    \STATE \textbf{Training Phase}: Learning $g_a$ and $z_a$
        \FOR{$k=0,\ldots,K-1$}
        \STATE Sample a minibatch $B_a^k \sim S$.
        \STATE $v_{a}^k \leftarrow z_a\!\left(g_a(x;\theta_{g_a}^k);\theta_{z_a}^k\right)$ for $(x,y)\in B_a^k$.

    \STATE Compute $t^k$ from $v_0^k, v_1^k$

    \STATE \textbf{Outer:}
    $\theta_{g_a}^{k+1} \leftarrow \theta_{g_a}^k - \alpha_a^k\, h_a^{k}.$ (\zcref{eq:d_ga}) 

    \STATE \textbf{Inner:}
    $\theta_{z_a}^{k+1} \leftarrow \theta_{z_a}^k - \beta_a^k\, D_{{z_a}}(\theta_{g_a}^k, \theta_{z_a}^k, w_a^k).$ (\zcref{eq:d_za})

    \STATE \textbf{Aux:}
    $w_a^{k+1} \leftarrow w_a^k - \gamma_a^k\, D_{w_a}(\theta_{g_a}^k, \theta_{z_a}^k, w_a^k).$ (\zcref{eq:w})

    \STATE \textbf{Moving avg:}
    $h_a^{k+1} \leftarrow (1-\rho^k_a)\, h_a^k + \rho^k_a\, D_{\theta_{g_a}}(\theta_{g_a}^k, \theta_{z_a}^k, w_a^k).$ (\zcref{eq:h})
\ENDFOR

    \STATE \textbf{Prediction Phase: Threshold search}
    \STATE $z^{\text{GFB}}_a \leftarrow z_a(g_a(\cdot;\theta_{g_a}^K); \theta_{z_a}^{K})$
    \STATE $\hat{t}_\delta \leftarrow \texttt{PostProcess}(z^{\text{GFB}}_0, z^{\text{GFB}}_1, \delta, S)$
    \STATE Compute $\hat{\tau}_a(\hat{t}_\delta)$.
    \STATE \textbf{return} 
    $
        f^{\text{GFB}}_\delta(x, a)
        = I\big(z^{\text{GFB}}_a(x_a) > \hat{\tau}_a(\hat{t}_\delta)\big).
    $
    \end{algorithmic}
\end{algorithm}

\section{Experiments}
To demonstrate the effectiveness of our method, we conduct experiments on several real-world datasets and compare it with existing in-processing and post-processing methods.

\subsection{Experimental Setup}
\paragraph{Comparison methods}
In our experiments, we compare the proposed method with several competitive baselines: EPO~\parencite{mahapatra2020} (multi-objective optimization), FairBiNN~\parencite{Jahromi2024} (bi-level optimization), YOTO~\parencite{taufiq2024} (in-processing), and FairBayes~\parencite{zeng2024} (post-processing). EPO and FairBiNN lacks the post-hoc controllability, whereas both YOTO and FairBayes provide the post-hoc controllability.

\paragraph{Datasets}
Experiments were conducted on four datasets: two image datasets, CelebA~\parencite{liu2015} and UTKFace~\parencite{zhang2017}, and two tabular datasets, Adult~\parencite{Kohavi1996} and COMPAS~\parencite{angwin2016}. The target labels and sensitive attributes for each dataset are summarized in \zcref{tab:data_info} of Appendix~\ref{app:dataset}.

\paragraph{Evaluation metrics}
\begin{figure}
    \centering
    \includegraphics[width=0.6\linewidth]{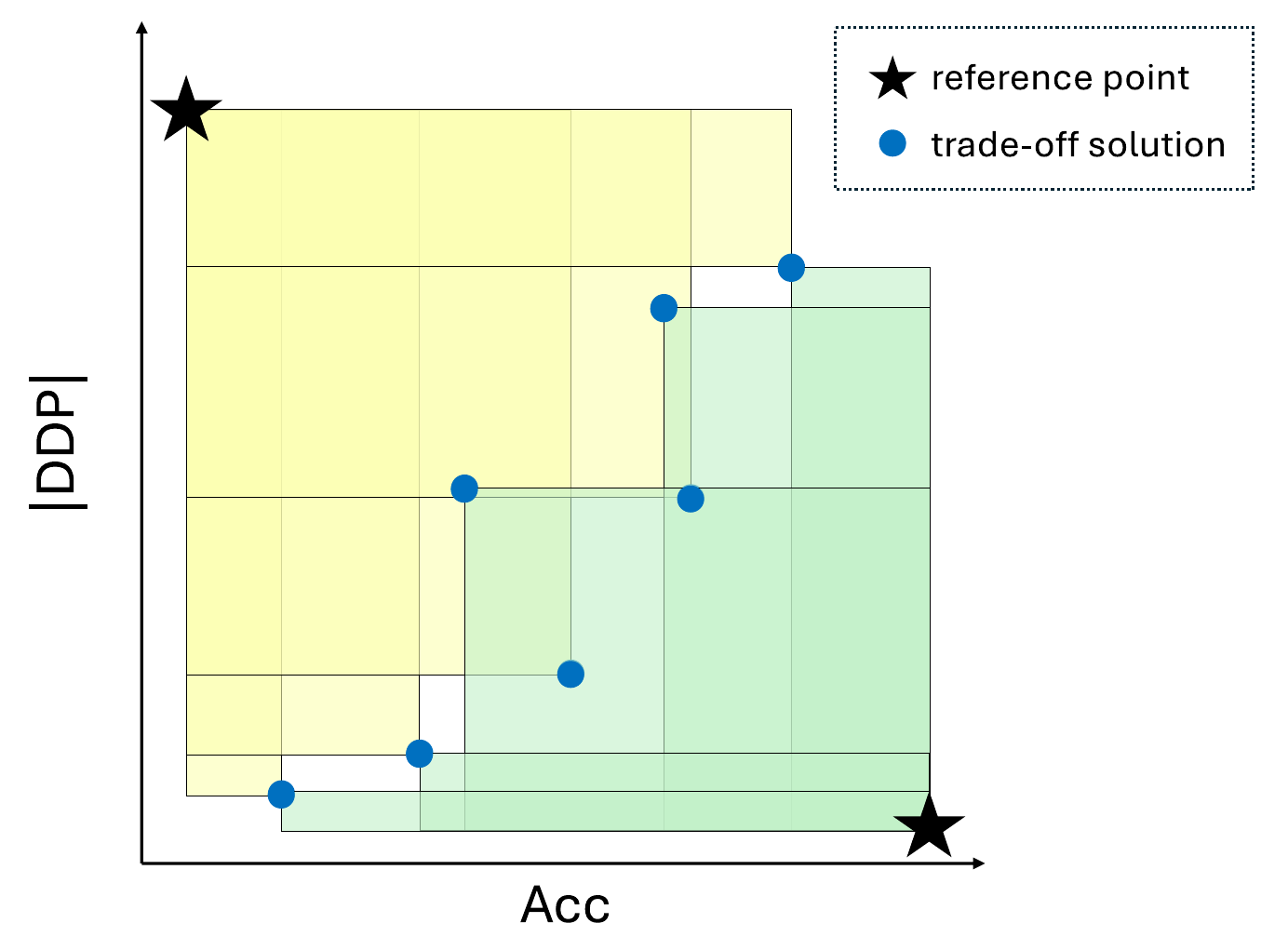}
    \caption{Illustration of the standard HV and inverted HV. 
The yellow region represents the standard HV, which evaluates favorable trade-off coverage, where larger values are better. 
The green region represents the inverted HV, which evaluates the extent of poorly performing trade-off solutions, where smaller values are better.}
    \label{fig:vis_hv}
\end{figure}

We evaluate classification accuracy ($\mathrm{Acc}$) and the absolute demographic parity difference ($|\mathrm{DDP}|$) as the accuracy and fairness metrics, respectively. 
The reported values are averages over 5 runs. 
To quantify the fairness-accuracy trade-off, we use two hypervolume-based metrics: standard HV and inverted HV, as illustrated in \zcref{fig:vis_hv}. 
Both metrics measure an area relative to a reference point, but place the reference point in opposite directions. 
The standard HV uses a reference point at the worst corner, corresponding to low $\mathrm{Acc}$ and high $|\mathrm{DDP}|$.
The inverted HV uses a reference point at the best corner, corresponding to high $\mathrm{Acc}$ and low $|\mathrm{DDP}|$.
Details of the computation are provided in Appendix~\ref{ap:hv}.

Larger standard HV indicates better coverage of favorable trade-off solutions, while smaller inverted HV indicates fewer poorly performing solutions. A method with high standard HV but also high inverted HV cannot be considered efficient over the entire trade-off curve, since it produces poor solutions for some trade-off parameters despite obtaining favorable ones for others. Together, larger standard HV and smaller inverted HV indicate more efficient and stable trade-off behavior. Although both metrics are sensitive to the choice of reference point, the ranking of methods remains largely unchanged under different settings; see Appendix~\ref{ap:sensitivity} for details.

\begin{figure*}[t]
  \centering
  \begin{subfigure}[b]{0.23\textwidth}
    \centering
    \includegraphics[width=\textwidth]{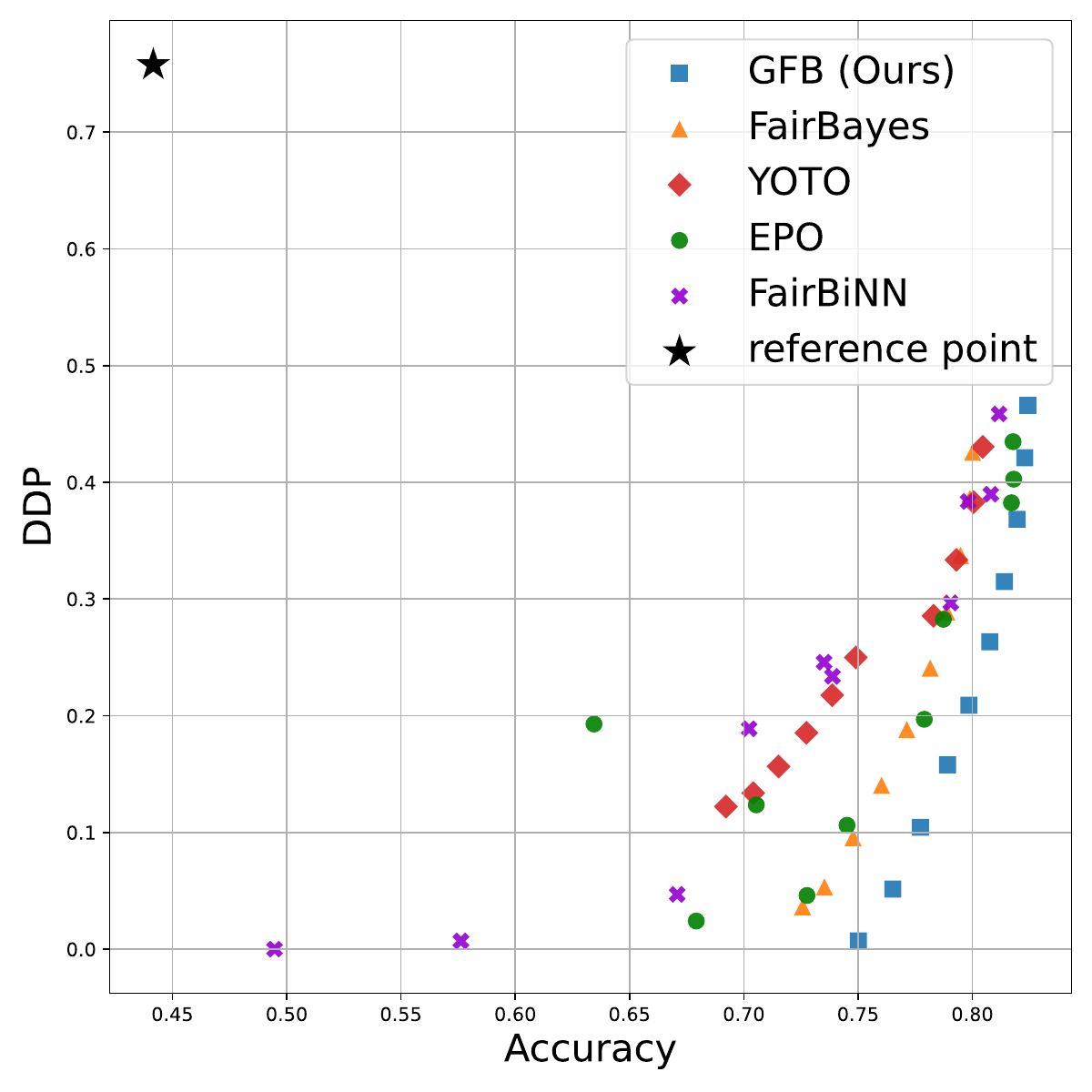}
    \caption{CelebA}
    \label{fig:celeba}
  \end{subfigure}
  \hfill
  \begin{subfigure}[b]{0.23\textwidth}
    \centering
    \includegraphics[width=\textwidth]{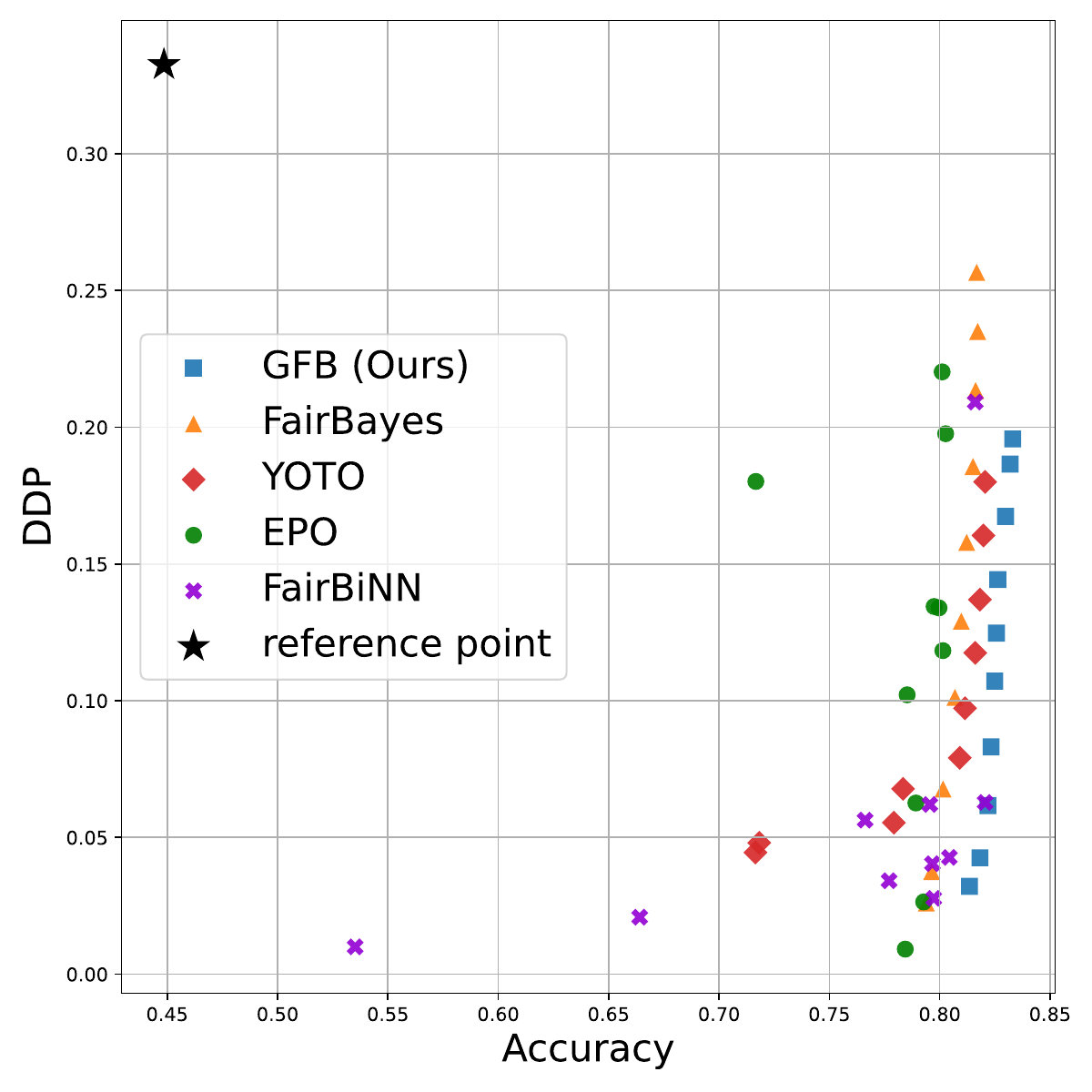}
    \caption{UTKFace}
    \label{fig:utkface}
  \end{subfigure}
  \hfill
  \begin{subfigure}[b]{0.23\textwidth}
    \centering
    \includegraphics[width=\textwidth]{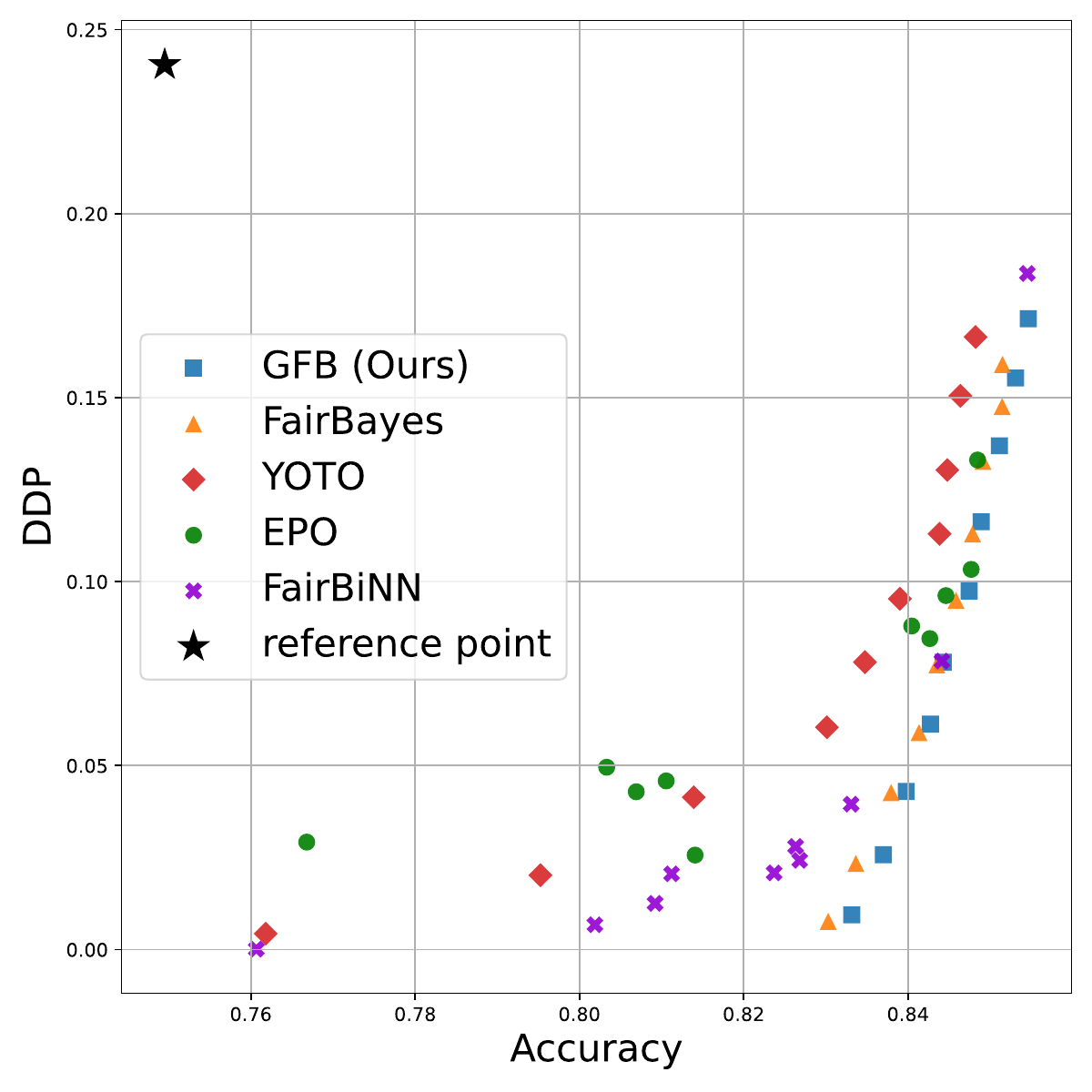}
    \caption{Adult}
    \label{fig:adult}
  \end{subfigure}
  \hfill
  \begin{subfigure}[b]{0.23\textwidth}
    \centering
    \includegraphics[width=\textwidth]{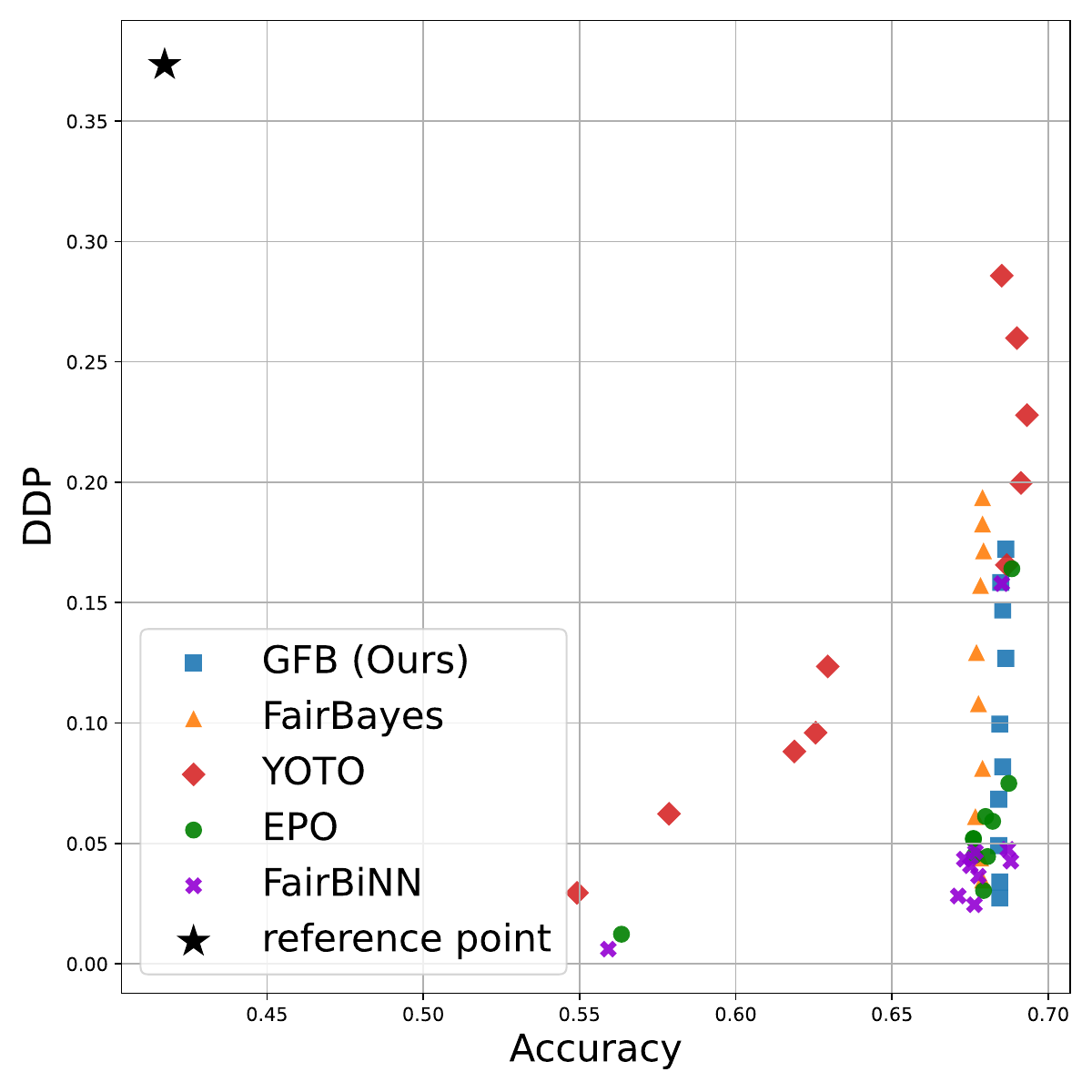}
    \caption{COMPAS}
    \label{fig:compas}
  \end{subfigure}
  \caption{Fairness-accuracy trade-off curves for each dataset.
The horizontal and vertical axes represent accuracy and DDP.}
  \label{fig:results_all}
\end{figure*}

\subsection{Results}
\label{sec:dominated}
Figures~\ref{fig:celeba}--\ref{fig:compas} show the fairness-accuracy trade-off of each method. 
The horizontal axis represents $\mathrm{Acc}$, with higher values toward the right, while the vertical axis represents DDP, with greater fairness toward the bottom.  
Thus, a trade-off curve closer to the bottom-right region indicates a more efficient fairness-accuracy trade-off.  
Each point corresponds to a specific trade-off parameter.
For YOTO, FairBayes, and GFB, a single model is trained once on the training dataset and then evaluated on the testing dataset under 10 different trade-off parameters in a post-hoc manner. Since EPO and FairBiNN do not provide post-hoc controllability, each reported point for these methods corresponds to a separately trained model.
\zcref{tab:hv} and \zcref{tab:inverted_hv} summarize the mean HV values and inverted HV values computed from these trade-offs, respectively. 
Additional statistics are provided in Appendix~\ref{app:additional_results}.

First, we compare our method with post-hoc controllable methods, FairBayes and YOTO. 
As shown in \zcref{fig:results_all}, except for the comparison with FairBayes on Adult, GFB is located below or to the right of those of FairBayes and YOTO across all datasets. 
This indicates that GFB generally achieves lower DDP at comparable accuracy, or higher accuracy at comparable DDP, resulting in a more efficient fairness-accuracy trade-off.
Consistent with this observation, our method attains higher HV and lower inverted HV than both methods across all datasets, indicating that GFB achieves more efficient trade-offs while also avoiding poorly performing trade-off points.
To achieve this improvement, GFB employs a bi-level training procedure, which slightly increases the training cost compared with FairBayes and YOTO.
We report the wall-clock training time and the cost of post-hoc trade-off adjustment in Appendix~\ref{ap:wall-clock}.

Next, we compare our method with the in-processing method EPO. 
As shown in \zcref{tab:hv}, GFB achieves higher HV than EPO on CelebA, UTKFace, and Adult, whereas EPO obtains higher HV on COMPAS. 
This shows that GFB outperforms EPO in terms of HV on most datasets, while EPO has an advantage on COMPAS.
However, as shown in \zcref{fig:results_all}, the trade-off curve of GFB lies to the right of, or close to, that of EPO across all datasets including COMPAS, indicating that GFB achieves comparable or higher accuracy at similar DDP levels. Furthermore, GFB achieves lower inverted HV than EPO across all datasets, suggesting that GFB produces fewer poorly performing trade-off points. Overall, GFB is competitive with EPO in trade-off performance while additionally providing post-hoc controllability that EPO does not offer.

Next, we compare our method with another in-processing method, FairBiNN.
On CelebA and Adult, the trade-off curves of GFB are located to the right of, or close to, those of FairBiNN, and GFB achieves higher HV on both datasets.
These results suggest that GFB attains a more efficient fairness-accuracy trade-off than FairBiNN on CelebA and Adult.
On UTKFace and COMPAS, FairBiNN achieves higher HV than GFB, indicating its advantage under the HV metric.
However, the curve-level comparison shows a more nuanced picture.
Although the trade-off curves of FairBiNN extends further downward, GFB is often located to the right of, or close to, FairBiNN within the vertical range where both methods have solutions; this tendency is particularly visible on UTKFace.
Moreover, GFB achieves lower inverted HV than FairBiNN across all datasets, indicating that FairBiNN can obtain highly favorable trade-off points that increase HV while producing poorly performing points for some trade-off parameters. In contrast, GFB avoids such poor points and maintains a more stable trade-off curve overall.

\begin{table*}[t]
  \centering
  \caption{Comparison of Hypervolume. Values are reported as mean $\pm$ standard deviation.}
  \setlength{\tabcolsep}{3pt}
  \begin{tabular}{lccccc}
    \toprule
     & EPO & FairBiNN & YOTO & FairBayes & GFB~(Ours) \\
    \midrule
    CelebA  & 0.7733 $\pm$ 0.060 & 0.6919 $\pm$ 0.045 & 0.6068 $\pm$ 0.146 & 0.7257 $\pm$ 0.026 & \textbf{0.8198 $\pm$ 0.005} \\
    UTKFace & 0.8455 $\pm$ 0.024 & \textbf{0.8876 $\pm$ 0.024} & 0.7481 $\pm$ 0.078 & 0.8202 $\pm$ 0.020 & 0.8581 $\pm$ 0.025\\
    Adult   & 0.7237 $\pm$ 0.072 & 0.7553 $\pm$ 0.055 & 0.6492 $\pm$ 0.047 & 0.7850 $\pm$ 0.041 & \textbf{0.8007 $\pm$ 0.038} \\
    COMPAS  & 0.8191 $\pm$ 0.060 & \textbf{0.8409 $\pm$ 0.074} & 0.6442 $\pm$ 0.133 & 0.7598 $\pm$ 0.084 & 0.8164 $\pm$ 0.038 \\
    \bottomrule
  \end{tabular}
  \label{tab:hv}
\end{table*}
\begin{table*}[t]
  \centering
  \caption{Comparison of inverted Hypervolume. Values are reported as mean $\pm$ standard deviation. 
  Lower values are better.}
  \label{tab:inverted_hv}
  \setlength{\tabcolsep}{3pt}
  \begin{tabular}{lccccc}
    \toprule
     & EPO & FairBiNN & YOTO & FairBayes & GFB~(Ours) \\
    \midrule
    CelebA  
    & 0.4563 $\pm$ 0.180 
    & 0.3829 $\pm$ 0.057 
    & 0.3223 $\pm$ 0.228 
    & 0.2622 $\pm$ 0.017 
    & \textbf{0.2268 $\pm$ 0.003} \\
    UTKFace 
    & 0.4443 $\pm$ 0.383 
    & 0.2579 $\pm$ 0.060 
    & 0.1558 $\pm$ 0.058 
    & 0.1556 $\pm$ 0.016 
    & \textbf{0.1010 $\pm$ 0.019} \\
    Adult   
    & 0.4801 $\pm$ 0.054 
    & 0.3507 $\pm$ 0.035 
    & 0.4314 $\pm$ 0.076 
    & 0.2529 $\pm$ 0.027 
    & \textbf{0.2514 $\pm$ 0.028} \\
    COMPAS  
    & 0.2207 $\pm$ 0.036 
    & 0.2155 $\pm$ 0.048 
    & 0.3661 $\pm$ 0.109 
    & 0.1724 $\pm$ 0.032 
    & \textbf{0.1398 $\pm$ 0.030} \\
    \bottomrule
  \end{tabular}
\end{table*}

\section{Conclusion}
In this paper, we propose a novel fair classification method that provides the post-hoc controllability while achieving high trade-off efficiency. Our algorithm combines representation learning during training with post-processing at inference.  
Experiments on real-world datasets demonstrated the effectiveness of our approach.

We also note several limitations and future directions.
First, although the framework supports several fairness metrics, it does not cover all notions; extending it to equalized odds remains an important direction.
Second, our current framework is restricted to binary sensitive attributes and binary labels. Extending it to multiclass classification and intersectional groups defined by multiple sensitive attributes is an important future direction. However, such extensions require substantial technical development, as fair Bayes-optimal classifiers in these settings no longer admit simple thresholding structures~\parencite{xianFairOptimalClassification2023}. 
Addressing these extensions would further broaden the applicability of our framework.

\section*{Acknowledgements}
This work was partly supported by JSPS KAKENHI Grant Numbers JP26K02874 and JP23H00483.

\section*{Impact Statement}
This paper presents work whose goal is to advance the field of machine learning. There are many potential societal consequences of our work, none of which we feel must be specifically highlighted here.

\printbibliography

@online{Dastine2018,
  author  = {Dastin, Jeffrey},
  title   = {Insight -- Amazon scraps secret AI recruiting tool that showed bias against women},
  year    = {2018},
  url     = {https://www.reuters.com/article/us-amazon-com-jobs-automation-insight/amazon-scraps-secret-ai-recruiting-tool-that-showed-bias-against-women-idUSKCN1MK08G},
}

@InProceedings{Woodworth17,
  title = 	 {Learning Non-Discriminatory Predictors},
  author = 	 {Woodworth, Blake and Gunasekar, Suriya and Ohannessian, Mesrob I. and Srebro, Nathan},
  booktitle = 	 {Proceedings of the 2017 Conference on Learning Theory},
  pages = 	 {1920--1953},
  year = 	 {2017},
  editor = 	 {Kale, Satyen and Shamir, Ohad},
  volume = 	 {65},
  series = 	 {Proceedings of Machine Learning Research},
  publisher =    {PMLR},
}

@inproceedings{Cho2020,
 author = {Cho, Jaewoong and Hwang, Gyeongjo and Suh, Changho},
 booktitle = {Advances in Neural Information Processing Systems},
 editor = {H. Larochelle and M. Ranzato and R. Hadsell and M.F. Balcan and H. Lin},
 pages = {15088--15099},
 publisher = {Curran Associates, Inc.},
 title = {A Fair Classifier Using Kernel Density Estimation},
 volume = {33},
 year = {2020}
}

@misc{zeng2024,
      title={Bayes-Optimal Classifiers under Group Fairness}, 
      author={Xianli Zeng and Edgar Dobriban and Guang Cheng},
      year={2024},
      eprint={2202.09724},
      archivePrefix={arXiv},
      primaryClass={stat.ML},
      url={https://arxiv.org/abs/2202.09724}, 
}

@InProceedings{menon2018,
  title = 	 {The cost of fairness in binary classification},
  author = 	 {Menon, Aditya Krishna and Williamson, Robert C.},
  booktitle = 	 {Proceedings of the 1st Conference on Fairness, Accountability and Transparency},
  pages = 	 {107--118},
  year = 	 {2018},
  editor = 	 {Friedler, Sorelle A. and Wilson, Christo},
  volume = 	 {81},
  series = 	 {Proceedings of Machine Learning Research},
  publisher =    {PMLR},
}

@InProceedings{mahapatra2020,
  title = 	 {Multi-Task Learning with User Preferences: Gradient Descent with Controlled Ascent in Pareto Optimization},
  author =       {Mahapatra, Debabrata and Rajan, Vaibhav},
  booktitle = 	 {Proceedings of the 37th International Conference on Machine Learning},
  pages = 	 {6597--6607},
  year = 	 {2020},
  editor = 	 {III, Hal Daumé and Singh, Aarti},
  volume = 	 {119},
  series = 	 {Proceedings of Machine Learning Research},
  publisher =    {PMLR},
}

@INPROCEEDINGS{pathiraja2023,
  author={Pathiraja, Bimsara and Gunawardhana, Malitha and Khan, Muhammad Haris},
  booktitle={2023 IEEE/CVF Conference on Computer Vision and Pattern Recognition (CVPR)}, 
  title={Multiclass Confidence and Localization Calibration for Object Detection}, 
  year={2023},
  volume={},
  number={},
  pages={19734-19743},
  doi={10.1109/CVPR52729.2023.01890}}

@inproceedings{Kohavi1996,
author = {Kohavi, Ron},
title = {Scaling up the accuracy of Naive-Bayes classifiers: a decision-tree hybrid},
year = {1996},
publisher = {AAAI Press},
booktitle = {Proceedings of the Second International Conference on Knowledge Discovery and Data Mining},
pages = {202–207},
numpages = {6},
location = {Portland, Oregon},
series = {KDD'96}
}

@online{angwin2016,
  author  = {Angwin, Julia and Larson, Jeff and Mattu, Surya and Kirchner, Lauren},
  title   = {Machine Bias: There's Software Used across the Country to Predict Future Criminals. And It's Biased against Blacks},
  year    = {2016},
  url     = {https://www.propublica.org/article/machine-bias-risk-assessments-in-criminal-sentencing},
  publisher = {ProPublica}
}

@INPROCEEDINGS{liu2015,
  author={Liu, Ziwei and Luo, Ping and Wang, Xiaogang and Tang, Xiaoou},
  booktitle={2015 IEEE International Conference on Computer Vision (ICCV)}, 
  title={Deep Learning Face Attributes in the Wild}, 
  year={2015},
  volume={},
  number={},
  pages={3730-3738},
  doi={10.1109/ICCV.2015.425}}

@INPROCEEDINGS{zhang2017,
  author={Zhang, Zhifei and Song, Yang and Qi, Hairong},
  booktitle={2017 IEEE Conference on Computer Vision and Pattern Recognition (CVPR)}, 
  title={Age Progression/Regression by Conditional Adversarial Autoencoder}, 
  year={2017},
  volume={},
  number={},
  pages={4352-4360},
  doi={10.1109/CVPR.2017.463}}

@InProceedings{Kamishima2012,
author="Kamishima, Toshihiro
and Akaho, Shotaro
and Asoh, Hideki
and Sakuma, Jun",
editor="Flach, Peter A.
and De Bie, Tijl
and Cristianini, Nello",
title="Fairness-Aware Classifier with Prejudice Remover Regularizer",
booktitle="Machine Learning and Knowledge Discovery in Databases",
year="2012",
publisher="Springer Berlin Heidelberg",
address="Berlin, Heidelberg",
pages="35--50",
isbn="978-3-642-33486-3",
doi       = {10.1007/978-3-642-33486-3_3},
}

@inproceedings{donini2018,
 author = {Donini, Michele and Oneto, Luca and Ben-David, Shai and Shawe-Taylor, John S and Pontil, Massimiliano},
 booktitle = {Advances in Neural Information Processing Systems},
 editor = {S. Bengio and H. Wallach and H. Larochelle and K. Grauman and N. Cesa-Bianchi and R. Garnett},
 pages = {},
 publisher = {Curran Associates, Inc.},
 title = {Empirical Risk Minimization Under Fairness Constraints},
 volume = {31},
 year = {2018}
}

@article{Liu2020,
  author  = {Liu, Suyun and Vicente, Luis Nunes},
  title   = {Accuracy and fairness trade-offs in machine learning: a stochastic multi-objective approach},
  journal = {Computational Management Science},
  volume  = {19},
  number  = {3},
  pages   = {513--537},
  year    = {2022},
  doi     = {10.1007/s10287-022-00425-z},
}

@article{xu2023,
  author  = {Shizhou Xu and Thomas Strohmer},
  title   = {Fair Data Representation for Machine Learning at the Pareto Frontier},
  journal = {Journal of Machine Learning Research},
  year    = {2023},
  volume  = {24},
  number  = {331},
  pages   = {1--63},
}

@inproceedings{Badar2024,
  title     = {{FairTrade}: Achieving Pareto-Optimal Trade-Offs between Balanced Accuracy and Fairness in Federated Learning},
  author    = {Badar, Maryam and Sikdar, Sandipan and Nejdl, Wolfgang and Fisichella, Marco},
  booktitle = {Proceedings of the AAAI Conference on Artificial Intelligence},
  volume    = {38},
  number    = {10},
  pages     = {10962--10970},
  year      = {2024},
  doi       = {10.1609/aaai.v38i10.28971}
}

@article{chen2023,
  author  = {Xuxing Chen and Tesi Xiao and Krishnakumar Balasubramanian},
  title   = {Optimal Algorithms for Stochastic Bilevel Optimization under Relaxed Smoothness Conditions},
  journal = {Journal of Machine Learning Research},
  year    = {2024},
  volume  = {25},
  number  = {151},
  pages   = {1--51},
}

@InProceedings{zafar2017dp,
  title = 	 {{Fairness Constraints: Mechanisms for Fair Classification}},
  author = 	 {Zafar, Muhammad Bilal and Valera, Isabel and Rogriguez, Manuel Gomez and Gummadi, Krishna P.},
  booktitle = 	 {Proceedings of the 20th International Conference on Artificial Intelligence and Statistics},
  pages = 	 {962--970},
  year = 	 {2017},
  editor = 	 {Singh, Aarti and Zhu, Jerry},
  volume = 	 {54},
  series = 	 {Proceedings of Machine Learning Research},
  publisher =    {PMLR},
}

@inproceedings{baharlouei2024,
 author = {Baharlouei, Sina and Patel, Shivam and Razaviyayn, Meisam},
 booktitle = {International Conference on Learning Representations},
 editor = {B. Kim and Y. Yue and S. Chaudhuri and K. Fragkiadaki and M. Khan and Y. Sun},
 pages = {36594--36607},
 title = {f-FERM: A  Scalable Framework for  Robust Fair Empirical Risk Minimization},
 url = {https://proceedings.iclr.cc/paper_files/paper/2024/file/9de4a4b7371ecf967f939e2a1f6ae697-Paper-Conference.pdf},
 volume = {2024},
 year = {2024}
}

@inproceedings{Jahromi2024,
 author = {Yazdani-Jahromi, Mehdi and Yalabadi, Ali Khodabandeh and Rajabi, AmirArsalan and Tayebi, Aida and Garibay, Ivan and Garibay, Ozlem},
 booktitle = {Advances in Neural Information Processing Systems},
 doi = {10.52202/079017-3355},
 editor = {A. Globerson and L. Mackey and D. Belgrave and A. Fan and U. Paquet and J. Tomczak and C. Zhang},
 pages = {105780--105818},
 publisher = {Curran Associates, Inc.},
 title = {Fair Bilevel Neural Network (FairBiNN): On Balancing fairness and accuracy via Stackelberg Equilibrium},
 volume = {37},
 year = {2024}
}

@InProceedings{celis2021,
  title = 	 {Fair Classification with Noisy Protected Attributes: A Framework with Provable Guarantees},
  author =       {Celis, L. Elisa and Huang, Lingxiao and Keswani, Vijay and Vishnoi, Nisheeth K.},
  booktitle = 	 {Proceedings of the 38th International Conference on Machine Learning},
  pages = 	 {1349--1361},
  year = 	 {2021},
  editor = 	 {Meila, Marina and Zhang, Tong},
  volume = 	 {139},
  series = 	 {Proceedings of Machine Learning Research},
  publisher =    {PMLR},
}

@INPROCEEDINGS{olfat2020,
  author={Olfat, Matt and Mintz, Yonatan},
  booktitle={2020 59th IEEE Conference on Decision and Control (CDC)}, 
  title={Flexible Regularization Approaches for Fairness in Deep Learning}, 
  year={2020},
  volume={},
  number={},
  pages={3389-3394},
  doi={10.1109/CDC42340.2020.9303736}}

@inproceedings{bendekgey2021,
 author = {Bendekgey, Henry C and Sudderth, Erik},
 booktitle = {Advances in Neural Information Processing Systems},
 editor = {M. Ranzato and A. Beygelzimer and Y. Dauphin and P.S. Liang and J. Wortman Vaughan},
 pages = {30023--30036},
 publisher = {Curran Associates, Inc.},
 title = {Scalable and Stable Surrogates for Flexible Classifiers with Fairness Constraints},
 volume = {34},
 year = {2021}
}

@article{yang2023,
  author  = {Yang, Jenny and Soltan, Andrew A. S. and Eyre, David W. and Yang, Yang and Clifton, David A.},
  title   = {An adversarial training framework for mitigating algorithmic biases in clinical machine learning},
  journal = {npj Digital Medicine},
  volume  = {6},
  number  = {1},
  pages   = {55},
  year    = {2023},
  doi     = {10.1038/s41746-023-00805-y},
}

@inproceedings{taufiq2024,
 author = {Taufiq, Muhammad Faaiz and Ton, Jean-Fran\c{c}ois and Liu, Yang},
 booktitle = {Advances in Neural Information Processing Systems},
 doi = {10.52202/079017-4457},
 editor = {A. Globerson and L. Mackey and D. Belgrave and A. Fan and U. Paquet and J. Tomczak and C. Zhang},
 pages = {140405--140450},
 publisher = {Curran Associates, Inc.},
 title = {Achievable Fairness on Your Data With Utility Guarantees},
 volume = {37},
 year = {2024}
}

@article{ishibuchi2018,
    author = {Ishibuchi, Hisao and Imada, Ryo and Setoguchi, Yu and Nojima, Yusuke},
    title = {How to Specify a Reference Point in Hypervolume Calculation for Fair Performance Comparison},
    journal = {Evolutionary Computation},
    volume = {26},
    number = {3},
    pages = {411-440},
    year = {2018},
    month = {09},
    issn = {1063-6560},
    doi = {10.1162/evco_a_00226},
    eprint = {https://direct.mit.edu/evco/article-pdf/26/3/411/1552321/evco_a_00226.pdf},
}

@article{jovanovic2022,
  author  = {Jovanovic, Raka and Sanfilippo, Antonio P. and Vo{\ss}, Stefan},
  title   = {Fixed Set Search Applied to the Multi-Objective Minimum Weighted Vertex Cover Problem},
  journal = {Journal of Heuristics},
  volume  = {28},
  number  = {4},
  pages   = {481--508},
  year    = {2022},
  issn    = {1572-9397},
  doi     = {10.1007/s10732-022-09499-z},
}

@article{kim2025,
title={Table Foundation Models: on knowledge pre-training for tabular learning},
author={Myung Jun Kim and F{\'e}lix Lefebvre and Ga{\"e}tan Brison and Alexandre Perez-Lebel and Ga{\"e}l Varoquaux},
journal={Transactions on Machine Learning Research},
issn={2835-8856},
year={2025},
url={https://openreview.net/forum?id=QV4P8Csw17},
note={}
}

@inproceedings{perez2018,
  title     = {{FiLM}: Visual Reasoning with a General Conditioning Layer},
  author    = {Perez, Ethan and Strub, Florian and de Vries, Harm and Dumoulin, Vincent and Courville, Aaron},
  booktitle = {Proceedings of the AAAI Conference on Artificial Intelligence},
  volume    = {32},
  number    = {1},
  pages     = {3942--3951},
  year      = {2018},
  doi       = {10.1609/aaai.v32i1.11671}
}

@inproceedings{dagreou2022,
 author = {Dagr\'{e}ou, Mathieu and Ablin, Pierre and Vaiter, Samuel and Moreau, Thomas},
 booktitle = {Advances in Neural Information Processing Systems},
 editor = {S. Koyejo and S. Mohamed and A. Agarwal and D. Belgrave and K. Cho and A. Oh},
 pages = {26698--26710},
 publisher = {Curran Associates, Inc.},
 title = {A framework for bilevel optimization that enables  stochastic and global variance reduction algorithms},
 volume = {35},
 year = {2022}
}

@article{fabris2025,
author = {Fabris, Alessandro and Baranowska, Nina and Dennis, Matthew J. and Graus, David and Hacker, Philipp and Saldivar, Jorge and Zuiderveen Borgesius, Frederik and Biega, Asia J.},
title = {Fairness and Bias in Algorithmic Hiring: A Multidisciplinary Survey},
year = {2025},
publisher = {Association for Computing Machinery},
address = {New York, NY, USA},
volume = {16},
number = {1},
issn = {2157-6904},
doi = {10.1145/3696457},
journal = {ACM Trans. Intell. Syst. Technol.},
articleno = {16},
numpages = {54}
}

@inproceedings{dosovitskiy2020,
title={You Only Train Once: Loss-Conditional Training of Deep Networks},
author={Alexey Dosovitskiy and Josip Djolonga},
booktitle={International Conference on Learning Representations},
year={2020},
url={https://openreview.net/forum?id=HyxY6JHKwr}
}

@ARTICLE{lin2017focal,
  author={Lin, Tsung-Yi and Goyal, Priya and Girshick, Ross and He, Kaiming and Dollár, Piotr},
  journal={IEEE Transactions on Pattern Analysis and Machine Intelligence}, 
  title={Focal Loss for Dense Object Detection}, 
  year={2020},
  volume={42},
  number={2},
  pages={318-327},
  doi={10.1109/TPAMI.2018.2858826}}

@InProceedings{hv,
author="Zitzler, Eckart and Thiele, Lothar",
editor="Eiben, Agoston E.
and B{\"a}ck, Thomas
and Schoenauer, Marc
and Schwefel, Hans-Paul",
title="Multiobjective optimization using evolutionary algorithms --- A comparative case study",
booktitle="Parallel Problem Solving from Nature --- PPSN V",
year="1998",
publisher="Springer Berlin Heidelberg",
address="Berlin, Heidelberg",
pages="292--301",
doi       = "10.1007/BFb0056872",
isbn="978-3-540-49672-4"
}

@article{cotterOptimizationNonDifferentiableConstraints2019,
  author  = {Andrew Cotter and Heinrich Jiang and Maya Gupta and Serena Wang and Taman Narayan and Seungil You and Karthik Sridharan},
  title   = {Optimization with Non-Differentiable Constraints with Applications to Fairness, Recall, Churn, and Other Goals},
  journal = {Journal of Machine Learning Research},
  year    = {2019},
  volume  = {20},
  number  = {172},
  pages   = {1--59},
}

@InProceedings{fukuchiPredictionModelBasedNeutrality2013,
author="Fukuchi, Kazuto and Sakuma, Jun and Kamishima, Toshihiro",
editor="Blockeel, Hendrik and Kersting, Kristian and Nijssen, Siegfried
and {\v{Z}}elezn{\'y}, Filip",
title="Prediction with Model-Based Neutrality",
booktitle="Machine Learning and Knowledge Discovery in Databases",
year="2013",
publisher="Springer Berlin Heidelberg",
address="Berlin, Heidelberg",
pages="499--514",
isbn="978-3-642-40991-2",
doi = {10.1007/978-3-642-40991-2_32},
}

@article{chzhenMinimaxFrameworkQuantifying2022,
  title = {A Minimax Framework for Quantifying Risk-Fairness Trade-off in Regression},
  author = {Chzhen, Evgenii and Schreuder, Nicolas},
  year = 2022,
  journal = {The Annals of Statistics},
  volume = {50},
  number = {4},
  pages = {2416--2442},
  doi = {10.1214/22-AOS2198},
}

@InProceedings{xianFairOptimalClassification2023,
  title = 	 {Fair and Optimal Classification via Post-Processing},
  author =       {Xian, Ruicheng and Yin, Lang and Zhao, Han},
  booktitle = 	 {Proceedings of the 40th International Conference on Machine Learning},
  pages = 	 {37977--38012},
  year = 	 {2023},
  editor = 	 {Krause, Andreas and Brunskill, Emma and Cho, Kyunghyun and Engelhardt, Barbara and Sabato, Sivan and Scarlett, Jonathan},
  volume = 	 {202},
  series = 	 {Proceedings of Machine Learning Research},
  publisher =    {PMLR},
}

@inproceedings{dp,
author = {Pedreshi, Dino and Ruggieri, Salvatore and Turini, Franco},
title = {Discrimination-aware data mining},
year = {2008},
isbn = {9781605581934},
publisher = {Association for Computing Machinery},
address = {New York, NY, USA},
doi = {10.1145/1401890.1401959},
booktitle = {Proceedings of the 14th ACM SIGKDD International Conference on Knowledge Discovery and Data Mining},
pages = {560–568},
numpages = {9},
location = {Las Vegas, Nevada, USA},
series = {KDD '08}
}

@InProceedings{fukuchiMetaOptimalityDemographic2025,
  title = 	 {Meta Optimality for Demographic Parity Constrained Regression via Post-Processing},
  author =       {Fukuchi, Kazuto},
  booktitle = 	 {Proceedings of the 42nd International Conference on Machine Learning},
  pages = 	 {18024--18046},
  year = 	 {2025},
  editor = 	 {Singh, Aarti and Fazel, Maryam and Hsu, Daniel and Lacoste-Julien, Simon and Berkenkamp, Felix and Maharaj, Tegan and Wagstaff, Kiri and Zhu, Jerry},
  volume = 	 {267},
  series = 	 {Proceedings of Machine Learning Research},
  publisher =    {PMLR},
}

@article{zhao2022,
  author  = {Han Zhao and Geoffrey J. Gordon},
  title   = {Inherent Tradeoffs in Learning Fair Representations},
  journal = {Journal of Machine Learning Research},
  year    = {2022},
  volume  = {23},
  number  = {57},
  pages   = {1--26},
}

@inproceedings{hardt2016,
 author = {Hardt, Moritz and Price, Eric and Srebro, Nati},
 booktitle = {Advances in Neural Information Processing Systems},
 editor = {D. Lee and M. Sugiyama and U. Luxburg and I. Guyon and R. Garnett},
 pages = {},
 publisher = {Curran Associates, Inc.},
 title = {Equality of Opportunity in Supervised Learning},
 volume = {29},
 year = {2016}
}

@inproceedings{Corbett2017,
author = {Corbett-Davies, Sam and Pierson, Emma and Feller, Avi and Goel, Sharad and Huq, Aziz},
title = {Algorithmic Decision Making and the Cost of Fairness},
year = {2017},
isbn = {9781450348874},
publisher = {Association for Computing Machinery},
address = {New York, NY, USA},
doi = {10.1145/3097983.3098095},
booktitle = {Proceedings of the 23rd ACM SIGKDD International Conference on Knowledge Discovery and Data Mining},
pages = {797–806},
numpages = {10},
location = {Halifax, NS, Canada},
series = {KDD '17}
}

\newpage
\appendix
\onecolumn

\section{Extended Details for \zcref{sec:preliminaries}}
\subsection{Pareto Front}\label{ap:paretofront}
The set $T^*$ is characterized by the concept of the Pareto front.  
In our context, the Pareto front consists of all achievable pairs $(\mathrm{Acc}(f), |\mathrm{Dis}(f)|)$ that are not dominated by any other solution.
Here, $\mathrm{Dis}(f)$ denotes a disparity measure associated with the corresponding fairness metric, where values closer to zero indicate higher fairness.
The Pareto front is defined as
\begin{equation}
\begin{aligned}
\label{ap:tradeoff_optim}
    T^*(\mathcal{F}) = \Big\{\big(\mathrm{Acc}(f), |\mathrm{Dis}(f)|\big) \;\Big|\; f \in \mathcal{F},\;\nexists f' \in \mathcal{F} \text{ such that } f \preceq f'\Big\},
\end{aligned}
\end{equation}
where $\mathcal{F}$ denotes the set of all measurable functions.

\subsection{Fairness Metrics}\label{ap:fairness metrics}
In the main body, we mainly focus on Demographic Parity~(DP).
However, several other fairness metrics are also commonly used.
In this subsection, we introduce the fairness metrics, in addition to DP, that are covered by our theoretical analysis.

\paragraph{Equal Opportunity~(EOp)~\parencite{hardt2016}}
EOp requires a classifier $f_c$ to achieve the same true positive rate across sensitive groups:
\begin{equation}
\label{eq:eop}
    P(\hat{Y}_{f_c}=1|Y=1,A=1) = P(\hat{Y}_{f_c}=1|Y=1,A=0).
\end{equation}
To quantify deviation from~\zcref{eq:eop}, the difference of equal opportunity~(DEOp) is commonly used and is defined as
\begin{equation}
\label{eq:deop}
    \mathrm{DEOp}(f_c) = P(\hat{Y}_{f_c}=1|Y=1,A=1) - P(\hat{Y}_{f_c}=1|Y=1,A=0).
\end{equation}
A smaller value of $|\mathrm{DEOp}(f_c)|$, closer to 0, indicates greater fairness.

\paragraph{Predictive Equality~(PE)~\parencite{Corbett2017}}
PE requires a classifier $f_c$ to achieve the same false positive rate across sensitive groups:
\begin{equation}
\label{eq:pe}
    P(\hat{Y}_{f_c}=1|Y=0,A=1) = P(\hat{Y}_{f_c}=1|Y=0,A=0).
\end{equation}
To quantify deviation from~\zcref{eq:pe}, the difference of predictive equality~(DPE) is commonly used and is defined as
\begin{equation}
\label{eq:dpe}
    \mathrm{DPE}(f_c) = P(\hat{Y}_{f_c}=1|Y=0,A=1) - P(\hat{Y}_{f_c}=1|Y=0,A=0).
\end{equation}
A smaller value of $|\mathrm{DPE}(f_c)|$, closer to 0, indicates greater fairness.

\subsection{Fair Bayes-Optimal Classifier}
\label{app:fairbayesoptimal}

As discussed in \zcref{sec:fairbayesoptimal}, the classifier in \zcref{eq:fair-bayesoptimal-classifier} is originally defined using the conditional class probability
$\eta_a(x)=P(Y=1\mid X=x,A=a)$.
Under the assumption that $\eta_a(X)$ has a density on $[0,1]$, fair Bayes-optimal classifiers for the fairness metrics considered in this work can be written as group-dependent threshold rules~\parencite{menon2018,zeng2024}:
\begin{equation}
\label{eq:fairbayes_general_eta}
    f^\mathrm{Dis}_t(x,a)=I\left(\eta_a(x) > \kappa^\mathrm{Dis}_a(t)
    \right),
\end{equation}
where $\mathrm{Dis}\in\{\mathrm{DP},\mathrm{EOp},\mathrm{PE}\}$ denotes the fairness metric, and $\kappa^\mathrm{Dis}_a(t)$ is a group-dependent threshold controlled by the scalar parameter $t$.
When $t=0$, the DP classifier reduces to the unconstrained Bayes-optimal rule with decision threshold $1/2$, which achieves the highest possible accuracy.

In our framework, we reformulate this classifier in terms of logits
$z_a(x)=\log(\eta_a(x)/(1-\eta_a(x)))$.
Equivalently, \zcref{eq:fairbayes_general_eta} can be written as
\begin{equation}
\label{eq:fairbayes_general_logit}
    f^\mathrm{Dis}_t(x,a)=
    I\left(z_a(x) > \gamma^\mathrm{Dis}_a(t)\right),
    \quad
    \gamma^\mathrm{Dis}_a(t):=\log\frac{\kappa^\mathrm{Dis}_a(t)}{1-\kappa^\mathrm{Dis}_a(t)} .
\end{equation}

\textcite{zeng2024} show that fair Bayes-optimal classifiers for other fairness metrics, such as EOp and PE, also admit analogous threshold-based forms. The corresponding thresholds are summarized below.

For DP, the threshold is
\begin{equation}
\label{eq:tau_dp}
    \kappa^\mathrm{DP}_a(t) = \frac{1}{2}+\frac{(2a-1)t}{2p_a},
    \quad p_a=P(A=a).
\end{equation}

For Equal Opportunity~(EOp), the threshold is
\begin{equation}
\label{eq:tau_eop}
    \kappa^\mathrm{EOp}_a(t) = \frac{p_{a,1}}{2p_{a,1}-(2a-1)t},
    \quad p_{a,1}=P(A=a,Y=1).
\end{equation}
Therefore, the corresponding logit threshold is
\begin{equation}
\label{eq:gamma_eop}
    \gamma^\mathrm{EOp}_a(t) = -\log\left(1-\frac{(2a-1)t}{p_{a,1}}
    \right).
\end{equation}

For Predictive Equality~(PE), the threshold is
\begin{equation}
\label{eq:tau_pe}
    \kappa^\mathrm{PE}_a(t) = \frac{p_{a,0}+(2a-1)t}{2p_{a,0}+(2a-1)t},
    \quad p_{a,0}=P(A=a,Y=0).
\end{equation}
Therefore, the corresponding logit threshold is
\begin{equation}
\label{eq:gamma_pe}
    \gamma^\mathrm{PE}_a(t) = -\log\left(\frac{p_{a,0}}{p_{a,0}+(2a-1)t}\right).
\end{equation}

For each fairness metric, the parameter $t$ is chosen so that the corresponding disparity constraint is satisfied.
Specifically, for a given tolerance $\delta$, $t$ is selected to achieve $\underset{t}{\arg\min}\{|t|:|\mathrm{Dis}(f^\mathrm{Dis}_t)|\le \delta\}$.

\section{Moving-Average SOBA (MA-SOBA)}
\label{ap:ma-soba}
\subsection{Comprehensive Framework}
MA-SOBA~\parencite{chen2023} is a fully single-loop algorithm for solving stochastic bilevel optimization problems that builds on the Stochastic Bilevel Algorithm (SOBA)~\parencite{dagreou2022}. 
Their goal is to solve the following optimization problem, such that $g$ and $f$ represent the lower-level and upper-level functions, respectively
\begin{equation}
    \min_{x\in\mathcal{X}} \Phi(x) := f(x,y^*(x))\quad \text{s.t. } \; y^*(x) = \underset{y}{\arg\min} g(x,y).
\end{equation}

We first review the SOBA algorithm and its limitations, and then describe how MA-SOBA addresses these challenges.

SOBA introduce an auxiliary variable to approximate the product of the Hessian and a gradient, and simultaneously update the inner $y$, outer $x$, and auxiliary variables $z$ using SGD steps. Let $x^k$, $y^k$, and $z^k$ denote the values of the respective variables at
iteration $k$. Their update rules are given as follows:
\begin{align}
    \label{ap:d_za}
    y^{k+1} &= y^k - \beta_k\nabla_2g(x^k, y^k)\\
    \label{ap:w}
    z^{k+1} &= z^k -\gamma_k\Big\{\nabla_{22}^2g(x^k, y^*(x^k))z^k -\nabla_2f(x^k, y^*(x^k)\Big\}\\
    &\approx z^k -\gamma_k\Big\{\nabla_{22}^2g(x^k, y^k)z^k -\nabla_2f(x^k, y^k)\Big\}\\
    \label{ap:d_ga}
    x^{k+1} &=x^k - \alpha_k\Big\{\nabla_1f(x^k, y^*(x^k)) - \nabla_{12}^2g(x^k, y^*(x^k))z^*(x^k)\Big\} = x^k-\alpha_k\nabla\Phi(x^k)\\
    &\approx x^k - \alpha_k\Big\{\nabla_1f(x^k, y^k) - \nabla_{12}^2g(x^k, y^k)z^k\Big\} 
\end{align}
Here, $\alpha_k$, $\beta_k$, and $\gamma_k$ denote the step sizes.
Since the inner variable $y^k$ is updated using only a single SGD step at each iteration, it generally does not coincide with the exact solution $y^*(x^k)$.
As a consequence, the stochastic gradient used in the update of the auxiliary variable $z$ is biased. Similarly, the hypergradient $\nabla \Phi(x^k)$ is also subject to bias. 

To mitigate the bias in hypergradient estimation, MA-SOBA incorporates a moving-average mechanism into the update rules. Specifically, MA-SOBA introduces a sequence of variables $\{h^k\}$ that aggregates past biased stochastic hypergradients. 
The update is given by
\begin{equation}
\label{eq:h}
    h^{k+1} = (1-\theta_k)h^k + \theta_k\Big\{\nabla_1f(x^k, y^k) - \nabla_{12}^2g(x^k, y^k)z^k\Big\}.
\end{equation}
Here, $\theta_k$ denotes the weight parameter of the moving average.
MA-SOBA then replaces the update rule for $x$ with one that uses the averaged hypergradient: $x^{k+1} = x^{k} - \alpha_kh^k$, thereby mitigating the bias induced by inexact inner updates.

\subsection{Convergence Analysis}
We provide a brief discussion on the convergence of the MA-SOBA optimizer used in our framework. Following the theoretical analysis by \textcite{chen2023}, MA-SOBA is guaranteed to converge under the following standard assumptions:
\begin{enumerate}
    \item First-order Lipschitz continuity of the outer objective and second-order Lipschitz continuity of the inner objective.
    \item Strong convexity of the inner objective.
    \item Boundedness of the gradient at the optimal solution of the inner problem.
\end{enumerate}

Assumptions 1 and 3 can be satisfied by choosing a doubly differentiable $\psi$ with bounded first and second derivatives. In our experiments, we use the sigmoid function as $\psi$, which satisfies these conditions. 
Assumption 2 requires the ERM objective to be strongly convex with respect to the model parameters, which is generally not satisfied in practice. However, we note that local strong convexity around local minima may suffice for convergence, as gradient-based optimization typically remains within such local region. The local strong convexity can be further encouraged in practice, e.g., by incorporating weight decay.

\section{Proofs}
\subsection{Proof of \zcref{thm:threshold_shift_error}}
\begin{theorem}[Accuracy dependence on the fairness tolerance $\delta$]
The accuracy of classifiers of the form in \zcref{eq:f_thm} satisfies
\begin{equation}
    \mathrm{Acc}(f^\eta_{\delta}) = \mathrm{Acc}(f^\eta_{0}) + \mathrm{Acc}_{\mathrm{dep}}(f^\eta_{\delta}),
\end{equation}
where
\begin{equation}
\label{ap:acc_dep}
    \mathrm{Acc}_{\mathrm{dep}}(f^\eta_{\delta})=\mathbb{E}_{X',A}\!\Big[I\!\left(\eta_A(X') \in \mathcal{I}_{A}(\delta)\right)\big|2\eta_A(X')-1\big|\Big],
\end{equation}
and
\begin{equation}
\begin{aligned}
    \mathcal{I}_a(\delta):=\Big(\min\!\big(\kappa_a(\delta),\,\kappa_a(0)\big),\;\max\!\big(\kappa_a(\delta),\,\kappa_a(0)\big)\Big].
\end{aligned}
\end{equation}
\end{theorem}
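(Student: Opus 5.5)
The plan is to write accuracy pointwise in terms of $\eta_A(X')$, take the difference between the two threshold rules, and observe that it is supported exactly on the set where the rules disagree. Since $f^\eta_\delta$ is deterministic, conditioning on $(X',A)$ gives
\begin{equation*}
\mathrm{Acc}(f)=\mathbb{E}_{X',A}\big[f(X',A)\,\eta_A(X')+(1-f(X',A))(1-\eta_A(X'))\big].
\end{equation*}
Here $\eta_a$ is understood as $P(Y=1\mid X',A=a)$, which is the quantity in \zcref{eq:f_thm}. Subtracting the same expression for $f^\eta_0$ gives
\begin{equation*}
\mathrm{Acc}(f^\eta_\delta)-\mathrm{Acc}(f^\eta_0)=\mathbb{E}_{X',A}\Big[\big(f^\eta_\delta(X',A)-f^\eta_0(X',A)\big)\big(2\eta_A(X')-1\big)\Big].
\end{equation*}
This reduces the theorem to a pointwise identity for the integrand.

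The second step identifies where the integrand is nonzero. Both rules are indicators of $\eta_a(x')$ exceeding a threshold, so they disagree exactly when $\eta_a(x')$ lies between the two thresholds. With the half-open convention $\big(\min(\kappa_a(\delta),\kappa_a(0)),\max(\kappa_a(\delta),\kappa_a(0))\big]$, the strict inequality in \zcref{eq:f_thm} makes this set exactly $\mathcal{I}_a(\delta)$. Outside $\mathcal{I}_A(\delta)$ the integrand vanishes. Inside it, $f^\eta_\delta-f^\eta_0=\pm1$, and I would split into two cases by the order of the thresholds.
\begin{itemize}
\item[(i)] If $\kappa_a(\delta)<\kappa_a(0)$, then $f^\eta_\delta=1$ and $f^\eta_0=0$ on the interval, so the integrand equals $2\eta_a-1$.
\item[(ii)] If $\kappa_a(\delta)>\kappa_a(0)$, then $f^\eta_\delta=0$ and $f^\eta_0=1$, so the integrand equals $1-2\eta_a$.
\end{itemize}

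The main obstacle is the last step: showing that in each case this signed quantity equals $|2\eta_a-1|$. This needs $\eta_a\ge 1/2$ on the interval in case (i) and $\eta_a\le 1/2$ in case (ii). It is not automatic and must come from the structural hypothesis on $\kappa_a$. That hypothesis has $\kappa_a(0)$ as the most-fair threshold and $|\kappa_a(\delta)-1/2|$ increasing as $\delta$ becomes more stringent. Consequently, as $\delta$ is relaxed from $0$, $\kappa_a(\delta)$ moves monotonically from $\kappa_a(0)$ toward the unconstrained Bayes threshold $1/2$ without crossing it. So $\kappa_a(\delta)$ lies between $1/2$ and $\kappa_a(0)$. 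In case (i) this gives $1/2\le\kappa_a(\delta)<\eta_a$, and in case (ii) it gives $\eta_a\le\kappa_a(0)<\kappa_a(\delta)\le 1/2$. In both cases the integrand is $|2\eta_a-1|$.

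Substituting back yields exactly $\mathbb{E}\big[I(\eta_A(X')\in\mathcal{I}_A(\delta))\,|2\eta_A(X')-1|\big]=\mathrm{Acc}_{\mathrm{dep}}(f^\eta_\delta)$. I would state explicitly that the argument relies on this placement of $\kappa_a(\delta)$, and check it for the DDP instance $\kappa_a(\delta)=\sigma(\tau_a(t^*_\delta))$, where $t^*_\delta$ shrinks toward $0$ monotonically as $\delta$ grows.
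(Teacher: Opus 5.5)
Your argument is correct and is essentially the paper's proof. The paper compares both rules pointwise with the Bayes rule $f^\eta_1$, whose conditional accuracy is $(1\pm|2\eta_A(X')-1|)/2$, and tacitly uses that $f^\eta_\delta$ agrees with $f^\eta_1$ on $\mathcal{I}_A(\delta)$. That is exactly your placement of $\kappa_a(\delta)$ between $\kappa_a(0)$ and $1/2$, which you make explicit through $\mathrm{Acc}(f)=\mathbb{E}[1-\eta_A(X')+f(X',A)(2\eta_A(X')-1)]$. This placement is best justified by monotonicity of $\kappa_a$ together with $\kappa_a=1/2$ at the unconstrained end, as the paper assumes, since the distance condition alone would not rule out a jump across $1/2$. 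One small slip: in case (ii) the chain should read $\kappa_a(0)<\eta_a\le\kappa_a(\delta)\le 1/2$ rather than $\eta_a\le\kappa_a(0)$, though the conclusion $\eta_a\le 1/2$ is unaffected.
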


\begin{proof}
$\mathcal{I}_A(\delta)$ denotes the interval such that $\eta_a(x') \in \mathcal{I}_a(\delta)$ if and only if $f^\eta_{\delta}(x', a) \ne f^\eta_{0}(x', a)$. Specifically, When $\delta$ is sufficiently large, $\kappa_a=0.5$ holds, and thus $\mathcal{I}_A(1)=\bigl(\min(0.5,\kappa_a(0)),\;\max(0.5,\kappa_a(0))\bigr]$.

For a fixed $(X',A)$, we consider the probability that the prediction of the Bayes-optimal classifier, $\hat Y_{f^\eta_1}$, coincides with the true label $Y$: $P\left(\hat Y_{f^\eta_{1}}=Y\mid X',A\right)$.
If $\eta_A(X')>0.5$, the classifier predicts $\hat Y_{f^\eta_{1}}=1$, and this probability equals ${P}(Y=1\mid X',A)=\eta_A(X')$.
If $\eta_A(X')\leq0.5$, the classifier predicts $\hat Y_{f^\eta_{1}}=0$, and the probability equals $\mathbb{P}(Y=0\mid X',A)=1-\eta_A(X')$.
Hence, the probability that the prediction matches the true label is given by
\begin{equation}
    P\left(\hat Y_{f^\eta_{1}}=Y\mid X',A\right)=\max(\eta_A(X'),\,1-\eta_A(X')) = \frac{1+|2\eta_A(X')-1|}{2}.
\end{equation}

Now, consider the classifier $f^\eta_{\delta}$ with a shifted threshold $\eta_A(X') > \kappa_a({\delta})$. The conditional probability $P\left(\hat Y_{f^\eta_{\delta}}=Y\mid X',A\right)$ depends on whether the prediction of $f^\eta_{\delta}$ coincides with that of Bayes-optimal classifier $f^\eta_{1}$.
If $\hat{Y}_{f^\eta_{\delta}} = \hat{Y}_{f^\eta_{1}}$, the conditional probability is $\max(\eta_A(X'), 1-\eta_A(X'))$. 
On the other hand, if $\hat{Y}_{f^\eta_{\delta}} \neq \hat{Y}_{f^\eta_{0}}$, 
it becomes 
\begin{equation}
    P\left(\hat Y_{f^\eta_{\delta}}=Y\mid X',A\right) 
    = \min(\eta_A(X'), 1-\eta_A(X')) 
    = \frac{1-|2\eta_A(X')-1|}{2}.
\end{equation}

Therefore, the accuracy of most fair classifier $f^\eta_{0}$ can be written as
\begin{align}
    \mathrm{Acc}(f^\eta_{0}) = \mathbb{E}_{X',A}\!\left[\frac{1+|2\eta_A(X')-1|}{2}\, I\!\left(\eta_A(X')\notin\mathcal{I}_A(1)\right)\right]\
    + \mathbb{E}_{X',A}\!\left[\frac{1-|2\eta_A(X')-1|}{2}\,I\!\left(\eta_A(X')\in\mathcal{I}_A(1)\right)\right].
\end{align}
For a given tolerance $\delta$, the predicted label $\hat{Y}_{f^\eta_{\delta}}$ agree with $\hat{Y}_{f^\eta_{1}}$ on the interval $\mathcal{I}_A(\delta)$.
Accordingly, the accuracy of $f^\eta_{\delta}$ satisfies
\begin{align}
    \mathrm{Acc}(f^\eta_{\delta}) &= \mathrm{Acc}(f^\eta_{0}) - \mathbb{E}_{X',A}\left[\frac{1-|2\eta_A(X')-1|}{2}\cdot I(\eta_A(X')\in\mathcal{I}_A(\delta)\right]\\
    &\qquad\qquad\qquad+ \mathbb{E}_{X',A}\left[\frac{1+|2\eta_A(X')-1|}{2}\cdot I(\eta_A(X')\in\mathcal{I}_A(\delta)\right]\\
    &=\mathrm{Acc}(f^\eta_{0}) + \mathbb{E}_{X',A}\left[{|2\eta_A(X')-1|}\cdot I(\eta_A(X')\in\mathcal{I}_A(\delta)\right]\\
    &=\mathrm{Acc}(f^\eta_{0}) + \mathrm{Acc}_{\text{dep}}(f^\eta_{\delta}).
\end{align}
\end{proof}

\begin{corollary}[Applicability to Standard Fairness Metrics]
\label{cor:applicability}
The threshold functions $\kappa_a(t)$ corresponding to DP, EOp, and PE,
given in \zcref{eq:tau_dp,eq:tau_eop,eq:tau_pe}, satisfy the conditions required
in \zcref{thm:threshold_shift_error}. Specifically, for each fairness metric,
the threshold $\kappa_a(\delta)$ varies monotonically with the fairness
tolerance $\delta$, and the distance $|\kappa_a(\delta)-0.5|$ increases as the fairness constraint becomes more stringent, i.e., as $\delta$ decreases. Moreover, the corresponding fair Bayes-optimal classifier can be written in the form of \zcref{eq:f_thm}.
Consequently, \zcref{thm:threshold_shift_error} holds for DP, EOp, and PE.
\end{corollary}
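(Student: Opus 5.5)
The plan is to verify the corollary separately for each metric $\mathrm{Dis}\in\{\mathrm{DP},\mathrm{EOp},\mathrm{PE}\}$, using the explicit threshold formulas \zcref{eq:tau_dp,eq:tau_eop,eq:tau_pe}. The target is three properties:
\begin{itemize}
\item[(i)] the fair Bayes-optimal classifier has the form \zcref{eq:f_thm};
\item[(ii)] $t\mapsto\kappa^{\mathrm{Dis}}_a(t)$ is monotone;
\item[(iii)] $|\kappa_a-1/2|$ grows as $\delta$ shrinks.
\end{itemize}
Once these hold, \zcref{thm:threshold_shift_error} applies verbatim, since its proof only uses that $f^\eta_\delta$ thresholds $\eta_A(X')$ at some value $\kappa_A(\delta)$. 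That proof compares predictions with those of the reference classifier pointwise on the interval $\mathcal{I}_A(\delta)$.

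Property (i) is immediate. By \zcref{eq:fairbayes_general_eta}, each fair Bayes-optimal rule is $I(\eta_a(x)>\kappa^{\mathrm{Dis}}_a(t))$, citing \textcite{menon2018,zeng2024} under the density assumption. Setting $\kappa_a(\delta):=\kappa^{\mathrm{Dis}}_a(t^{\mathrm{Dis}}_\delta)$ with $t^{\mathrm{Dis}}_\delta=\arg\min\{|t|:|\mathrm{Dis}(f_t)|\le\delta\}$ gives exactly \zcref{eq:f_thm}.

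For (ii) I would differentiate in $t$ on the admissible range where the thresholds lie in $[0,1]$:
\begin{itemize}
\item For DP, $\kappa^{\mathrm{DP}}_a(t)=\tfrac12+\tfrac{(2a-1)t}{2p_a}$ is affine, increasing for $a=1$ and decreasing for $a=0$.
\item For EOp, $\partial_t\kappa^{\mathrm{EOp}}_a=(2a-1)p_{a,1}/(2p_{a,1}-(2a-1)t)^2$, whose sign is $2a-1$.
\item For PE, $\partial_t\kappa^{\mathrm{PE}}_a=(2a-1)p_{a,0}/(2p_{a,0}+(2a-1)t)^2$, again of fixed sign.
\end{itemize}
So each $\kappa_a$ is strictly monotone in $t$. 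Composing with $\delta\mapsto t_\delta$, I would argue that $|t_\delta|$ is nonincreasing in $\delta$. The feasible set $\{t:|\mathrm{Dis}(f_t)|\le\delta\}$ grows with $\delta$, so the minimal $|t|$ shrinks. Moreover, $\mathrm{Dis}(f_t)$ is monotone in $t$ (as noted for $\widehat{\mathrm{DDP}}$), so the minimizer stays on one side of $0$, and its sign is fixed by the sign of $\mathrm{Dis}(f_0)$. Hence $\delta\mapsto\kappa_a(\delta)$ is monotone.

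For (iii) I would compute $|\kappa_a(t)-1/2|$ explicitly:
\begin{itemize}
\item For DP it equals $|t|/(2p_a)$, increasing in $|t|$.
\item For EOp, $\kappa-\tfrac12=\tfrac{(2a-1)t}{2(2p_{a,1}-(2a-1)t)}$.
\item For PE, $\kappa-\tfrac12=\tfrac{(2a-1)t}{2(2p_{a,0}+(2a-1)t)}$.
\end{itemize}
In the EOp and PE cases, on the side of $0$ where $t$ lives, the quantity is a monotone function of $|t|$ that vanishes at $t=0$. Combined with $|t_\delta|$ increasing as $\delta$ decreases, this gives (iii).

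The main obstacle is the sign bookkeeping in (iii) for EOp and PE. There the map $t\mapsto|\kappa-1/2|$ is not symmetric in $t$, so I must check it is increasing in $|t|$ on both half-lines within the admissible domain ($|t|<p_{a,1}$, respectively $|t|<p_{a,0}$). I would handle this by splitting into the cases $t\ge0$ and $t\le0$ and noting that the denominators stay positive there.
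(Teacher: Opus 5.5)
Your proposal is correct and follows the same route as the paper's proof. You differentiate the explicit thresholds to show $\kappa^{\mathrm{Dis}}_1$ and $\kappa^{\mathrm{Dis}}_0$ move monotonically in opposite directions in $t$, note that they equal $1/2$ at $t=0$, and show they move away from $1/2$ as the constraint tightens; you also supply the explicit formulas for $\kappa_a-1/2$ and the monotonicity of $\delta\mapsto t_\delta$, which the paper leaves implicit.

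One correction: \zcref{thm:threshold_shift_error} is not valid for an arbitrary threshold value. It needs $\kappa_a(\delta)$ to lie between $1/2$ and the most-fair threshold, since otherwise $f^\eta_\delta$ disagrees with the Bayes rule on part of $\mathcal{I}_a(\delta)$. Your argument already gives this: $t_\delta$ stays on one side of $0$, $\kappa_a-1/2$ then has the fixed sign of $(2a-1)t_\delta$, and (iii) controls its size.
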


\begin{proof}
The classifiers for DP, EOp, and PE all admit the threshold form in
\zcref{eq:f_thm}, with thresholds given in
\zcref{eq:tau_dp,eq:tau_eop,eq:tau_pe}. 
For each metric, differentiating the corresponding threshold functions shows
that $\kappa_1$ and $\kappa_0$ vary monotonically in opposite directions
with respect to the scalar threshold parameter. Moreover, at the unconstrained
point, the thresholds reduce to $0.5$, and moving toward a stricter fairness
constraint shifts the thresholds away from $0.5$. Hence, the conditions required in~\zcref{thm:threshold_shift_error} are satisfied, and the theorem applies to DP, EOp, and PE.
\end{proof}

\subsection{Proof of \zcref{thm:ift_t} (Implicit Gradient of $t$)}
\label{ap:theorem2}
Before stating our main result regarding the existence and uniqueness of the optimal parameter $t$, we formalize the necessary assumptions on the mapping function $\psi$ and the threshold functions $\tau_a(t)$.

\begin{assumption}[Strict Monotonicity of $\psi$]
\label{assum:psi}
    The mapping function $\psi$ is strictly increasing such that $\psi'(u) > 0$ for any $u \in \mathbb{R}$.
\end{assumption}

Let $\mathcal{T} \subset \mathbb{R}$ be the maximal open interval consisting of all parameters $t$ for which the threshold functions $\gamma_a(t)$ are strictly well-defined and yield finite real values for both $a \in \{0, 1\}$. Formally, we define this feasible domain as:
\begin{equation}
    \mathcal{T} = \{ t \in \mathbb{R} \mid \gamma_0(t) \in \mathbb{R} \text{ and } \gamma_1(t) \in \mathbb{R} \} = (p^-, p^+),
\end{equation}
where the boundaries $p^-$ and $p^+$ naturally arise from the domain restrictions inherent to the definitions of $\gamma_a(t)$. Over this interval $\mathcal{T}$, we assume the following properties for the threshold functions:

\begin{assumption}[Continuity]
\label{assum:continuity}
    $\gamma_0(t)$ and $\gamma_1(t)$ are continuous on $\mathcal{T}$.
\end{assumption}

\begin{assumption}[Monotonicity]
\label{assum:monotonicity}
    $\gamma_1(t)$ is monotonically increasing and $\gamma_0(t)$ is monotonically decreasing with respect to $t$.
\end{assumption}

\begin{assumption}[Sufficient Separation]
\label{assum:separation}
    There exist parameters within the open interval $\mathcal{T}$ that completely separate the shifted scores $Z_i - \gamma_a(t)$ between the two groups. Specifically, there exist $t_1, t_2 \in \mathcal{T}$ such that for all $i$ with $a_i=1$ and $j$ with $a_j=0$, the following inequalities hold:
\begin{equation}
    Z_i - \gamma_1(t_1) < Z_j - \gamma_0(t_1) \quad \text{and} \quad Z_i - \gamma_1(t_2) > Z_j - \gamma_0(t_2).
\end{equation}
\end{assumption}

With these assumptions in place, we guarantee the existence and uniqueness of the threshold shifting parameter.

\begin{lemma}[Existence and uniqueness of $t$]
\label{lem:t}
Let $\phi$ be the surrogate fairness metric defined in \zcref{eq:phi}. Suppose Assumptions \ref{assum:psi}, \ref{assum:continuity}, \ref{assum:monotonicity}, and \ref{assum:separation} hold. Then, for any score vector $Z \in \mathbb{R}^n$, there exists a unique parameter $t^* \in \mathcal{T}$ that satisfies $\phi(Z, t^*) = 0$.
\end{lemma}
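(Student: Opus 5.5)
The plan is to view $\phi(Z,\cdot)$, for fixed $Z$, as a real function of $t$ on the open interval $\mathcal{T}$, and to combine the intermediate value theorem (for existence) with strict monotonicity in $t$ (for uniqueness).

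\emph{Step 1: continuity.} By Assumption~\ref{assum:continuity}, each $\gamma_a$ is continuous on $\mathcal{T}$. Since $\psi$ is continuously differentiable, hence continuous, each map $t\mapsto\psi(Z_i-\gamma_a(t))$ is continuous. It follows that $\phi(Z,\cdot)$ is continuous on $\mathcal{T}$, being a finite combination of such terms. Here I take $\mathcal{S}_0,\mathcal{S}_1$ to be nonempty, which is implicit in \zcref{eq:phi}.

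\emph{Step 2: sign change.} At $t_1$ from Assumption~\ref{assum:separation}, every shifted score $u_i=Z_i-\gamma_1(t_1)$ with $i\in\mathcal{S}_1$ is strictly smaller than every $v_j=Z_j-\gamma_0(t_1)$ with $j\in\mathcal{S}_0$. Because $\psi$ is strictly increasing (Assumption~\ref{assum:psi}),
\[
\max_{i\in\mathcal{S}_1}\psi(u_i)<\min_{j\in\mathcal{S}_0}\psi(v_j).
\]
Hence the average over $\mathcal{S}_1$ is strictly below the average over $\mathcal{S}_0$, which gives $\phi(Z,t_1)<0$. The symmetric argument at $t_2$ gives $\phi(Z,t_2)>0$. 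Applying the intermediate value theorem on the closed interval between $t_1$ and $t_2$, which lies in $\mathcal{T}$ by convexity of the interval, yields some $t^*\in\mathcal{T}$ with $\phi(Z,t^*)=0$.

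\emph{Step 3: uniqueness.} I will show $\phi(Z,\cdot)$ is strictly decreasing on $\mathcal{T}$. Since $\gamma_1$ is increasing, each $\psi(Z_i-\gamma_1(t))$ is nonincreasing in $t$. Since $\gamma_0$ is decreasing, each $\psi(Z_j-\gamma_0(t))$ is nondecreasing, and it enters $\phi$ with a minus sign. So $\phi$ is nonincreasing. Strictness follows because $\psi'>0$: as soon as $\gamma_1$ or $\gamma_0$ is strictly monotone between two parameter values $s<s'$, some term changes strictly, and hence $\phi(Z,s)>\phi(Z,s')$. A strictly monotone function has at most one zero, which proves uniqueness. (Incidentally, the sign pattern from Step 2 forces $t_2<t_1$.)

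The main obstacle I expect is in Step 3: Assumption~\ref{assum:monotonicity} only says ``monotonically.'' If both $\gamma_a$ could be constant on a common subinterval, $\phi$ could vanish on that whole interval and uniqueness would fail. I would therefore read the assumption as strict monotonicity of at least one of the $\gamma_a$, and I would check that this holds for the thresholds actually used. For $\hat\tau_a$ in \zcref{eq:thresh_new}, and for $\gamma^{\mathrm{EOp}}_a$ and $\gamma^{\mathrm{PE}}_a$, the derivatives are explicitly nonzero on $\mathcal{T}$, so strictness holds there.
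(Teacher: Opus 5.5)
Your proposal is correct and is essentially the paper's argument: the separation parameters $t_1,t_2$ combined with strict monotonicity of $\psi$ give $\phi(Z,t_1)<0<\phi(Z,t_2)$, the intermediate value theorem gives existence, and strict decrease of $\phi(Z,\cdot)$ in $t$ gives uniqueness. Your caveat about the monotonicity assumption is apt: the paper's proof also silently uses \emph{strict} monotonicity of $\gamma_1$ and $\gamma_0$, although the assumption only says ``monotonically.'' Your explicit continuity check and your use of the closed interval between $t_1$ and $t_2$ spell out steps the paper leaves implicit.
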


\begin{proof}

\indent
\begin{itemize}
    \item \textbf{Uniqueness.}
    Fix any $Z\in\mathbb{R}^n$.
    Since $\gamma_1(t)$ is strictly increasing and $\gamma_0(t)$ is strictly decreasing on $\mathcal{T}$, and $\psi'(u)>0$ for all $u\in\mathbb{R}$,
    $\psi(Z_i-\gamma_1(t))$ is strictly decreasing in $t$ and
    $\psi(Z_i-\gamma_0(t))$ is strictly increasing in $t$.
    Therefore, $\phi(Z,t)$ is strictly decreasing in $t$, and hence the equation $\phi(Z,t)=0$ has at most one solution.

    \item \textbf{Existence.}
    Fix any $Z\in\mathbb{R}^n$.
    We evaluate the limits of $\phi(Z, t)$ at the boundaries of $\mathcal{T}$.
    
(i) We first show that $\phi(Z,t) < 0$ for some $t \in \mathcal{T}$. 
Under the above assumptions, there exists a parameter $t_1 \in \mathcal{T}$ that satisfies $Z_i - \gamma_1(t_1) < Z_j - \gamma_0(t_1)$ for all $i \in \mathcal{S}_1$ and for all $j \in \mathcal{S}_0$. 
Since $\psi$ is a strictly monotonically increasing function, this order is strictly preserved: $\psi\big(Z_i - \gamma_1(t_1)\big) < \psi\big(Z_j - \gamma_0(t_1)\big)$.
Taking the average over each group, we have
\begin{equation}
    \frac{1}{|\mathcal{S}_1|}\sum_{i\in\mathcal{S}_1}\psi\big(Z_i - \gamma_1(t_1)\big) < \frac{1}{|\mathcal{S}_0|}\sum_{j\in\mathcal{S}_0}\psi\big(Z_j - \gamma_0(t_1)\big).
\end{equation}
Consequently, at this $t = t_1$, we obtain
\begin{equation}
    \phi(Z, t_1) =\frac{1}{|\mathcal{S}_1|}\sum_{i\in\mathcal{S}_1}\psi\big(Z_i - \gamma_1(t_1)\big) - \frac{1}{|\mathcal{S}_0|}\sum_{j\in\mathcal{S}_0}\psi\big(Z_j - \gamma_0(t_1)\big)< 0.
\end{equation}
This confirms that there exists some $t \in \mathcal{T}$ such that $\phi(Z, t) < 0$.

(ii) We then show that $\phi(Z,t)>0$ for some $t\in\mathcal T$. Under the above assumptions, there exists a parameter $t_2 \in \mathcal{T}$ that satisfies $Z_i - \gamma_1(t_2) > Z_j - \gamma_0(t_2)$ for all $i \in \mathcal{S}_1$ and for all $j \in \mathcal{S}_0$. 
Since $\psi$ is a strictly monotonically increasing function, this order is strictly preserved: $\psi\big(Z_i - \gamma_1(t_2)\big) > \psi\big(Z_j - \gamma_0(t_2)\big)$.
Taking the average over each group, we have
\begin{equation}
    \frac{1}{|\mathcal{S}_1|}\sum_{i\in\mathcal{S}_1}\psi\big(Z_i - \gamma_1(t_2)\big) > \frac{1}{|\mathcal{S}_0|}\sum_{j\in\mathcal{S}_0}\psi\big(Z_j - \gamma_0(t_2)\big).
\end{equation}
Consequently, at this $t = t_2$, we obtain
\begin{equation}
    \phi(Z, t_2) =\frac{1}{|\mathcal{S}_1|}\sum_{i\in\mathcal{S}_1}\psi\big(Z_i - \gamma_1(t_2)\big) - \frac{1}{|\mathcal{S}_0|}\sum_{j\in\mathcal{S}_0}\psi\big(Z_j - \gamma_0(t_2)\big)> 0.
\end{equation}
This confirms that there exists some $t \in \mathcal{T}$ such that $\phi(Z, t) < 0$.

Since $\phi(Z, t)$ is continuous on $\mathcal{T}$ and there exist $t_1, t_2 \in \mathcal{T}$ such that $\phi(Z, t_1) > 0$ and $\phi(Z, t_2) < 0$), the Intermediate Value Theorem guarantees the existence of at least one $t \in \mathcal{T}$ such that $\phi(Z, t) = 0$. 
\end{itemize}
\end{proof}

\begin{proposition}[Derivation of the local gradient $\nabla_1 t(Z)$]
Suppose Assumptions \ref{assum:psi}, \ref{assum:continuity}, \ref{assum:monotonicity}, and \ref{assum:separation} hold. The function $\phi:\mathbb{R}^n\times\mathbb{R}\to\mathbb{R}$ is continuously differentiable.
For any $Z_0\in\mathbb{R}^n$, let $t_0=t(Z_0)$ be the unique solution guaranteed by Lemma~\ref{lem:t}, so that $\phi(Z_0,t_0)=0$.
Since $\psi'(u)>0$, we have $\nabla_2\phi(Z_0,t_0)\neq0$.
Therefore, by the Implicit Function Theorem, there exist a neighborhood $U\subset\mathbb{R}^n$ of $Z_0$ and a neighborhood $V\subset\mathbb{R}$ of $t_0$, and a unique continuously differentiable local function $\tilde t:U\to V$ such that
\[
\phi(Z,\tilde t(Z))=0 \qquad \forall Z\in U.
\]
Moreover, its gradient is given by
\begin{equation}
\label{eq:nabla_t}
\nabla_1\tilde t(Z)
=
-\frac{\nabla_1\phi(Z,\tilde t(Z))}{\nabla_2\phi(Z,\tilde t(Z))}.
\end{equation}
\end{proposition}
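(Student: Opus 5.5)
The plan is a direct application of the Implicit Function Theorem to $\phi$, anchored at the unique root supplied by Lemma~\ref{lem:t}. First I will establish that $\phi$ is continuously differentiable on $\mathbb{R}^n\times\mathcal{T}$. Dependence on $Z$ passes only through $\psi(Z_i-\gamma_a(t))$, and $\psi$ is $C^1$, so $\partial\phi/\partial Z_i=\pm\psi'(Z_i-\gamma_{a_i}(t))/|\mathcal{S}_{a_i}|$ for $i\in\mathcal{S}_{a_i}$, and zero otherwise. These partials are continuous in $(Z,t)$ once $\gamma_a$ is continuous (Assumption~\ref{assum:continuity}).

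Dependence on $t$ requires $\gamma_a$ to be differentiable. Continuity alone (Assumption~\ref{assum:continuity}) does not supply this, so I will strengthen it to $\gamma_a\in C^1(\mathcal{T})$. The concrete thresholds $\hat\tau_a$ in \zcref{eq:thresh_new}, and the EOp/PE thresholds \zcref{eq:gamma_eop,eq:gamma_pe}, are smooth logarithms on $\mathcal{T}$, so the strengthening costs nothing in the cases of interest. With it,
\[
\nabla_2\phi(Z,t)=-\frac{1}{|\mathcal{S}_1|}\sum_{i\in\mathcal{S}_1}\psi'(Z_i-\gamma_1(t))\gamma_1'(t)+\frac{1}{|\mathcal{S}_0|}\sum_{j\in\mathcal{S}_0}\psi'(Z_j-\gamma_0(t))\gamma_0'(t).
\]

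The main obstacle is showing $\nabla_2\phi(Z_0,t_0)\neq0$. Positivity of $\psi'$ by itself is not enough, because monotonicity of $\gamma_a$ only gives $\gamma_1'\ge0$ and $\gamma_0'\le0$. Both terms above are then $\le0$, but they could both vanish. I will close this gap by requiring strict derivatives, $\gamma_1'>0$ and $\gamma_0'<0$ on $\mathcal{T}$, which I will check directly for $\hat\tau_a$. For example, $\hat\tau_1(t)=\log\frac{\hat p_1+t}{\hat p_1-t}$ has derivative $\frac{2\hat p_1}{\hat p_1^2-t^2}>0$, and $\hat\tau_0$ is analogous with the opposite sign. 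Combined with $\psi'>0$ and nonempty $\mathcal{S}_a$, this gives $\nabla_2\phi<0$ everywhere.

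With these facts in hand, fix $Z_0$ and let $t_0=t(Z_0)$ be the root from Lemma~\ref{lem:t}. The Implicit Function Theorem then yields neighborhoods $U\ni Z_0$, $V\ni t_0$ and a $C^1$ map $\tilde t:U\to V$ satisfying $\phi(Z,\tilde t(Z))=0$. Differentiating this identity by the chain rule gives $\nabla_1\phi+\nabla_2\phi\,\nabla_1\tilde t=0$, which rearranges to \zcref{eq:nabla_t}.

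Finally, I will identify $\tilde t$ with the global map $t$. Assumption~\ref{assum:separation} must hold for every $Z\in U$; it does if that assumption is imposed for all $Z$, as Lemma~\ref{lem:t} implicitly requires. Then Lemma~\ref{lem:t}'s uniqueness gives $t(Z)=\tilde t(Z)$ on $U$. Hence $t$ is differentiable at every $Z_0$ with the stated gradient, which establishes Theorem~\ref{thm:ift_t}.
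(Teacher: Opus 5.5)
Your proposal is correct and follows the same route as the paper: apply the Implicit Function Theorem at the root $t_0$ from Lemma~\ref{lem:t} and rearrange the identity obtained by differentiating $\phi(Z,\tilde t(Z))=0$. You are more careful than the paper. It takes $\phi\in C^1$ as given and infers $\nabla_2\phi(Z_0,t_0)\neq 0$ from $\psi'>0$ alone, which fails for, e.g., $\gamma_1(t)=t^3$, $\gamma_0(t)=-t^3$ at $t_0=0$. Your strengthening to $\gamma_a\in C^1$ with $\gamma_1'>0$, $\gamma_0'<0$, which you checked for $\hat\tau_a$, is exactly what that step needs. Identifying $\tilde t$ with $t$ on $U$ requires only uniqueness of the root, which follows from strict monotonicity of $\phi(Z,\cdot)$, not separation for every $Z\in U$.
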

\begin{proof}
By assumption, the function $\phi$ is continuously differentiable.
For any $Z_0 \in \mathbb{R}^n$, let $t_0 = t(Z_0)$ be the unique solution such that $\phi(Z_0,t_0)=0$, whose existence and uniqueness are guaranteed by Lemma~\ref{lem:t}.
Moreover, since $\psi'(u)>0$, we have $\nabla_2 \phi(Z_0,t_0) \neq 0$.
Therefore, all the conditions of the Implicit Function Theorem are satisfied.
As a result, there exists a neighborhood $U$ of $Z_0$ and a unique continuously differentiable local function $\tilde t: U \to \mathbb{R}$ such that $\phi(Z,\tilde t(Z))=0$ for all $Z\in U$, and its gradient is given by~\zcref{eq:nabla_t}.
\end{proof}

Having established the general theoretical guarantees, we now demonstrate that the threshold functions derived from common fairness metrics, specifically Demographic Parity (DP), Equal Opportunity (EOp), and Predictive Equality (PE), naturally satisfy Assumptions \ref{assum:continuity}, \ref{assum:monotonicity}, and \ref{assum:separation}.

\begin{corollary}[Applicability to Standard Fairness Metrics]
\label{cor:applicability_4.2}
    The threshold functions $\gamma_a(t)$ corresponding to DP, EOp, and PE satisfy Assumptions \ref{assum:continuity}, \ref{assum:monotonicity}, and \ref{assum:separation}. Consequently, by \zcref{lem:t}, for $Z \in \mathbb{R}^n$, there exists a unique optimal parameter $t^* \in \mathcal{T}$ that strictly satisfies the fairness constraint $\phi(Z, t^*) = 0$ for each of these metrics.
\end{corollary}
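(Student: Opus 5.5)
The plan is to verify the three assumptions separately for each metric. I write all thresholds in the unified logit form $\gamma_a(t)=-\log\bigl(1-(2a-1)t/q_a\bigr)+c_a(t)$, where $q_a$ is the relevant group mass ($p_a$, $p_{a,1}$, or $p_{a,0}$). Concretely, the thresholds are as follows.

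For DP, \zcref{eq:thresh_new} gives
\[
\gamma_1(t)=\log\frac{\hat p_1+t}{\hat p_1-t},\qquad
\gamma_0(t)=\log\frac{\hat p_0-t}{\hat p_0+t}.
\]
For EOp and PE, the thresholds are those in \zcref{eq:gamma_eop,eq:gamma_pe}.

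\textbf{Step 1: the domain $\mathcal T$.} I will first identify $\mathcal T$ explicitly as the set where every logarithm has a positive argument.
\begin{itemize}
  \item For DP, $\mathcal T=(-\min(\hat p_0,\hat p_1),\,\min(\hat p_0,\hat p_1))$.
  \item For EOp, $\mathcal T=(-p_{0,1},\,p_{1,1})$.
  \item For PE, $\mathcal T=(-p_{1,0},\,p_{0,0})$.
\end{itemize}
Each is an open interval containing $0$.

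\textbf{Step 2: continuity and monotonicity.} On $\mathcal T$, each $\gamma_a$ is a composition of rational functions and $\log$ with nonvanishing positive arguments, so it is continuous (indeed smooth); this gives Assumption~\ref{assum:continuity}. For Assumption~\ref{assum:monotonicity} I will differentiate directly. For DP,
\[
\gamma_1'(t)=\frac{2\hat p_1}{\hat p_1^2-t^2}>0,\qquad
\gamma_0'(t)=-\frac{2\hat p_0}{\hat p_0^2-t^2}<0.
\]
For EOp, $\gamma_a'(t)=(2a-1)/(p_{a,1}-(2a-1)t)$, which has the sign of $2a-1$. PE is analogous. This matches the observation in \zcref{cor:applicability} that $\kappa_1$ and $\kappa_0$ move in opposite directions.

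\textbf{Step 3: sufficient separation.} This is the main obstacle, since Assumption~\ref{assum:separation} involves the fixed finite vector $Z$. The key fact I will establish is that
\[
d(t):=\gamma_1(t)-\gamma_0(t)
\]
is continuous and strictly increasing, and tends to $\pm\infty$ at the endpoints of $\mathcal T$, at least in the generic case.

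I will then define
\[
M=\max_{i\in\mathcal S_1,\,j\in\mathcal S_0}(Z_i-Z_j),\qquad
m=\min_{i\in\mathcal S_1,\,j\in\mathcal S_0}(Z_i-Z_j),
\]
and choose $t_1$ with $d(t_1)>M$ and $t_2$ with $d(t_2)<m$. These choices give exactly the two required inequalities, since $Z_i-\gamma_1(t)<Z_j-\gamma_0(t)$ is equivalent to $Z_i-Z_j<d(t)$.

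The delicate point is the divergence of $d$ at both ends. Consider DP with $\hat p_0\neq\hat p_1$. As $t\to\min(\hat p_0,\hat p_1)$, only one of $\gamma_1,\gamma_0$ blows up. Nevertheless, in the direction $t\uparrow$ it is $\gamma_1\to+\infty$ or $\gamma_0\to-\infty$, and either one sends $d\to+\infty$ while the other term stays finite. The symmetric argument gives $d\to-\infty$ at the lower end.

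For EOp, as $t\uparrow p_{1,1}$ we get $\gamma_1\to+\infty$, and as $t\downarrow -p_{0,1}$ we get $\gamma_0\to+\infty$, so $d\to-\infty$. I will check the signs carefully here. If one endpoint lies outside the natural domain for a given metric, I would restrict attention to the one-sided blow-up that is available and argue with it.

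Once all three assumptions are in place, I conclude by invoking \zcref{lem:t}.
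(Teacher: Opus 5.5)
Your proposal is correct and follows essentially the same route as the paper: continuity by composition, monotonicity from the signs of $\gamma_a'$, and separation from the blow-up of the thresholds at the endpoints of $\mathcal T$. Your reduction to comparing $d(t)=\gamma_1(t)-\gamma_0(t)$ with $\max_{i,j}(Z_i-Z_j)$ and $\min_{i,j}(Z_i-Z_j)$ is a tidier packaging of the paper's limit argument.

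Your hedges (``generic case'', ``one-sided blow-up'') are unnecessary. At each endpoint of $\mathcal T$ some log argument vanishes, and monotonicity fixes the direction: $\gamma_1\to+\infty$ or $\gamma_0\to-\infty$ at the upper end, and the reverse at the lower end. The non-divergent threshold, being monotone, cannot cancel this, so $d\to\pm\infty$ for all three metrics, including $\hat p_0=\hat p_1$ and PE. Likewise, the ``unified form'' in your first paragraph is unused and misleading for PE, where the group-1 singularity sits at $t=-p_{1,0}$.
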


\begin{proof}
We prove this corollary by verifying that the threshold functions for each
fairness metric satisfy the required assumptions.

\noindent
\textbf{Demographic Parity.}
For DP, the logit threshold functions $\gamma^\mathrm{DP}_a(t)$ are given in \zcref{eq:tau}.
The feasible domain is $\mathcal{T}=(-p,p)$, where $p=\min(p_0,p_1)$.
This domain ensures that the arguments of the logarithms are strictly positive.

\begin{itemize}
\item \textbf{Continuity (Assumption~\ref{assum:continuity}).}
For any $t\in\mathcal{T}$, the log arguments in
$\gamma^\mathrm{DP}_a(t)$ are positive. Hence,
$\gamma^\mathrm{DP}_a(t)$ is continuous on $\mathcal{T}$ as a composition of
continuous functions.

\item \textbf{Monotonicity (Assumption~\ref{assum:monotonicity}).}
The derivatives of the threshold functions with respect to $t$ are
\begin{equation}
\begin{aligned}
    \nabla_1\gamma^\mathrm{DP}_0(t)
    =
    -\frac{2p_0}{(p_0-t)(p_0+t)} < 0,
    \quad
    \nabla_1\gamma^\mathrm{DP}_1(t)
    =
    \frac{2p_1}{(p_1-t)(p_1+t)} > 0.
\end{aligned}
\end{equation}
Since $t^2<p^2\le p_a^2$ for all $t\in\mathcal{T}$, the denominators are
positive. Therefore, $\gamma^\mathrm{DP}_1(t)$ is strictly increasing and
$\gamma^\mathrm{DP}_0(t)$ is strictly decreasing, satisfying
Assumption~\ref{assum:monotonicity}.

\item \textbf{Sufficient Separation (Assumption~\ref{assum:separation}).}
Assume without loss of generality that $p=p_1\le p_0$.

(i) As $t\to p$ from the left, $\gamma^\mathrm{DP}_1(t)\to\infty$.
Thus, $Z_i-\gamma^\mathrm{DP}_1(t)\to-\infty$ for all
$i\in\mathcal{S}_1$. On the other hand, the group-$0$ terms do not diverge to
$-\infty$; in particular, $\gamma^\mathrm{DP}_0(t)$ remains finite if
$p_1<p_0$, and diverges to $-\infty$ if $p_1=p_0$.
In either case, there exists $t_1\in\mathcal{T}$ such that
\begin{equation}
    Z_i-\gamma^\mathrm{DP}_1(t_1)
    <
    Z_j-\gamma^\mathrm{DP}_0(t_1)
    \quad
    \text{for all } i\in\mathcal{S}_1,\ j\in\mathcal{S}_0.
\end{equation}

(ii) As $t\to -p$ from the right, $\gamma^\mathrm{DP}_1(t)\to-\infty$.
Hence, $Z_i-\gamma^\mathrm{DP}_1(t)\to\infty$ for all
$i\in\mathcal{S}_1$. Meanwhile, the group-$0$ terms do not diverge to
$+\infty$; if $p_1<p_0$, $\gamma^\mathrm{DP}_0(t)$ remains finite, and if
$p_1=p_0$, $\gamma^\mathrm{DP}_0(t)\to\infty$.
Therefore, there exists $t_2\in\mathcal{T}$ such that
\begin{equation}
    Z_i-\gamma^\mathrm{DP}_1(t_2)
    >
    Z_j-\gamma^\mathrm{DP}_0(t_2)
    \quad
    \text{for all } i\in\mathcal{S}_1,\ j\in\mathcal{S}_0.
\end{equation}
\end{itemize}
Thus, Assumption~\ref{assum:separation} holds for DP.

\noindent
\textbf{Equal Opportunity.}
For EOp, the logit threshold functions $\gamma^\mathrm{EOp}_a(t)$ are given in \zcref{eq:gamma_eop}. The feasible domain is $\mathcal{T}=(-p_{0,1},p_{1,1})$, which guarantees that the logarithm arguments are positive.

\begin{itemize}
\item \textbf{Continuity (Assumption~\ref{assum:continuity}).}
Throughout $\mathcal{T}$, the log arguments in
$\gamma^\mathrm{EOp}_a(t)$ are strictly positive. Therefore,
$\gamma^\mathrm{EOp}_a(t)$ is continuous on $\mathcal{T}$.

\item \textbf{Monotonicity (Assumption~\ref{assum:monotonicity}).}
Differentiating the two threshold functions gives
\begin{equation}
\begin{aligned}
    \nabla_1\gamma^\mathrm{EOp}_0(t) = -\frac{1}{p_{0,1}+t} < 0,
    \quad
    \nabla_1\gamma^\mathrm{EOp}_1(t) = \frac{1}{p_{1,1}-t} > 0.
\end{aligned}
\end{equation}
Since $-p_{0,1}<t<p_{1,1}$, both denominators are positive. Hence,
$\gamma^\mathrm{EOp}_1(t)$ is strictly increasing and $\gamma^\mathrm{EOp}_0(t)$ is strictly decreasing, as required.

\item \textbf{Sufficient Separation (Assumption~\ref{assum:separation}).}
(i) As $t\to p_{1,1}$ from the left,
$\gamma^\mathrm{EOp}_1(t)\to\infty$, while
$\gamma^\mathrm{EOp}_0(t)$ converges to a finite value. Consequently,
$Z_i-\gamma^\mathrm{EOp}_1(t)\to-\infty$ for all
$i\in\mathcal{S}_1$, whereas $Z_j-\gamma^\mathrm{EOp}_0(t)$ remains bounded
for all $j\in\mathcal{S}_0$. Thus, there exists $t_1\in\mathcal{T}$ such that
\begin{equation}
    Z_i-\gamma^\mathrm{EOp}_1(t_1)
    <
    Z_j-\gamma^\mathrm{EOp}_0(t_1)
    \quad
    \text{for all } i\in\mathcal{S}_1,\ j\in\mathcal{S}_0.
\end{equation}

(ii) As $t\to -p_{0,1}$ from the right,
$\gamma^\mathrm{EOp}_0(t)\to\infty$, while
$\gamma^\mathrm{EOp}_1(t)$ remains finite. Therefore,
$Z_j-\gamma^\mathrm{EOp}_0(t)\to-\infty$ for all
$j\in\mathcal{S}_0$, and there exists $t_2\in\mathcal{T}$ such that
\begin{equation}
    Z_i-\gamma^\mathrm{EOp}_1(t_2)
    >
    Z_j-\gamma^\mathrm{EOp}_0(t_2)
    \quad
    \text{for all } i\in\mathcal{S}_1,\ j\in\mathcal{S}_0.
\end{equation}
\end{itemize}
Thus, Assumption~\ref{assum:separation} holds for EOp.

\noindent
\textbf{Predictive Equality.}
For PE, the logit threshold functions $\gamma^\mathrm{PE}_a(t)$ are given in \zcref{eq:gamma_pe}.
The feasible domain is $\mathcal{T}=(-p_{1,0},p_{0,0})$, which ensures that the logarithm arguments are strictly positive.

\begin{itemize}
\item \textbf{Continuity (Assumption~\ref{assum:continuity}).}
For every $t\in\mathcal{T}$, the denominator
$p_{a,0}+(2a-1)t$ is positive. Hence,
$\gamma^\mathrm{PE}_a(t)$ is continuous on $\mathcal{T}$.

\item \textbf{Monotonicity (Assumption~\ref{assum:monotonicity}).}
The derivatives are
\begin{equation}
\begin{aligned}
    \nabla_1\gamma^\mathrm{PE}_0(t)
    =
    -\frac{1}{p_{0,0}-t} < 0,
    \quad
    \nabla_1\gamma^\mathrm{PE}_1(t)
    =
    \frac{1}{p_{1,0}+t} > 0.
\end{aligned}
\end{equation}
Since $-p_{1,0}<t<p_{0,0}$, both denominators are positive. Therefore,
$\gamma^\mathrm{PE}_1(t)$ is strictly increasing and
$\gamma^\mathrm{PE}_0(t)$ is strictly decreasing.

\item \textbf{Sufficient Separation (Assumption~\ref{assum:separation}).}
(i) As $t\to p_{0,0}$ from the left,
$\gamma^\mathrm{PE}_0(t)\to-\infty$, while
$\gamma^\mathrm{PE}_1(t)$ remains finite. Thus,
$Z_j-\gamma^\mathrm{PE}_0(t)\to\infty$ for all
$j\in\mathcal{S}_0$. Hence, there exists $t_1\in\mathcal{T}$ such that
\begin{equation}
    Z_i-\gamma^\mathrm{PE}_1(t_1)
    <
    Z_j-\gamma^\mathrm{PE}_0(t_1)
    \quad
    \text{for all } i\in\mathcal{S}_1,\ j\in\mathcal{S}_0.
\end{equation}

(ii) As $t\to -p_{1,0}$ from the right,
$\gamma^\mathrm{PE}_1(t)\to-\infty$, while
$\gamma^\mathrm{PE}_0(t)$ remains finite. Consequently,
$Z_i-\gamma^\mathrm{PE}_1(t)\to\infty$ for all
$i\in\mathcal{S}_1$. Therefore, there exists $t_2\in\mathcal{T}$ such that
\begin{equation}
    Z_i-\gamma^\mathrm{PE}_1(t_2)
    >
    Z_j-\gamma^\mathrm{PE}_0(t_2)
    \quad
    \text{for all } i\in\mathcal{S}_1,\ j\in\mathcal{S}_0.
\end{equation}
\end{itemize}
Thus, Assumption~\ref{assum:separation} holds for PE.

Since all three fairness metrics satisfy the continuity, monotonicity, and
sufficient separation assumptions, the proof is complete.
\end{proof}

\section{Experiments}
\subsection{Comparison methods}
\paragraph{FairBayes}
FairBayes~\parencite{zeng2024} is a post-processing method derived from the theory of fair Bayes-optimal classifiers.  It controls the balance between fairness and accuracy without requiring retraining by identifying the decision boundaries corresponding to specific trade-off parameters $\delta$.

To realize the fair bayes-optimal classifier $f^*_{t^*_\delta}$ in practice, FairBayes empirically estimates the unknown distributional quantities in~\zcref{eq:fairbayes}.  

\textbf{Training phase}: FairBayes learns group-dependent logit functions $z_a^{\mathrm{ERM}}$ by empirical risk minimization with cross-entropy loss, and the learned logits are fixed thereafter.

\textbf{Prediction phase}: Group-dependent thresholds $\tau_a(t_\delta)$ are applied to the logits, where $t_\delta$ is chosen to satisfy  $|\mathrm{DDP}|\le\delta$.
Using empirical estimates of group priors $\hat p_a = n_a/n$, the thresholds are given by~\zcref{eq:thresh_new}. 
The parameter $\hat t_\delta$ is obtained by solving~\zcref{eq:t_approx}.

\paragraph{EPO (Exact Pareto Optimal search).}
EPO~\parencite{mahapatra2020} is a multi-objective optimization method that controls the descent direction across multiple objectives in order to obtain a specific solution on the Pareto front corresponding to a given trade-off parameter.

Since EPO is not originally proposed in the context of the trade-off between fairness and accuracy, we adapt it to this setting by defining task-specific objective functions.
Specifically, we introduce two objective functions: $\mathcal{L}_{acc}$ for prediction accuracy and $\mathcal{L}_{ddp}$ for fairness.
$\mathcal{L}_{acc}$ follows the same definition as in \zcref{eq:l_pred}, while $\mathcal{L}_{ddp}$ is defined as a smooth approximation of DDP.
Concretely, the indicator function in DDP is approximated by a scaled sigmoid function $\sigma(\cdot)$, and we define
\begin{equation}
    \mathcal{L}_{ddp}(f)=\left|\frac{1}{n_1}\sum_{i;a=1}\sigma(k\, f(x_{i},1)) - \frac{1}{n_0}\sum_{i;a=0}\sigma(k\, f(x_{i},0)) \right|.
\end{equation}

The trade-off between these objectives is controlled by a reference vector $v$.
Let $v_{acc}$ and $v_{ddp}$ denote the components of the reference vector for the two objectives.  
EPO performs optimization by updating the model in a search direction that satisfies the weighted condition $v_{acc}\,\mathcal{L}_{acc} = v_{fair}\,\mathcal{L}_{ddp}$ and minimizes the weighted loss 
\begin{equation}
    \mathcal{L}_{w}(f) = v_{acc}(f)\,\mathcal{L}_{acc}(f) + v_{ddp}(f)\,\mathcal{L}_{ddp}(f).
\end{equation}

\paragraph{FairBiNN (Fair Bilevel Neural Network)}
FairBiNN~\parencite{Jahromi2024} is a bilevel optimization framework for obtaining solutions on the fairness-accuracy trade-off corresponding to a given trade-off parameter.

FairBiNN treats the accuracy loss $\mathcal{L}_{acc}$ as the upper-level objective and the fairness loss $\mathcal{L}_{ddp}$ as the lower-level objective. 
We define $\mathcal{L}_{{acc}}$ and $\mathcal{L}_{{ddp}}$ in the same way as in the EPO formulation. 
The optimization problem is formulated as
\begin{equation}
    \min_{\theta_{{acc}}}
    \mathcal{L}_{{acc}}\big(f(\theta_{{acc}},\theta^*_{{ddp}})\big)
    \quad \text{s.t.} \quad
    \theta^*_{{ddp}}\in\arg\min_{\theta_{{ddp}}}
    \mathcal{L}_{{ddp}}\big(f(\theta_{{acc}}, \theta_{{ddp}})\big).
\end{equation}
Here, $\theta_{\mathrm{acc}}$ and $\theta_{\mathrm{ddp}}$ denote the parameters of the accuracy and fairness layers, respectively. 
FairBiNN partitions a single neural network into accuracy and fairness layers, which are optimized separately according to their respective objectives. 
During training, $\theta_{\mathrm{acc}}$ and $\theta_{\mathrm{ddp}}$ are updated alternately at each mini-batch. 

The trade-off between fairness and accuracy is controlled by a scaling parameter $\eta \in \mathbb{R}$ that scales the fairness loss during the fairness update. 
Different trade-off solutions are obtained by varying $\eta$, where larger values place greater emphasis on fairness. 
Consequently, FairBiNN requires training a separate model for each trade-off solution.

\paragraph{YOTO (You Only Train Once).}
YOTO~\parencite{taufiq2024} is an in-processing method applied the original You Only Train Once framework~\parencite{dosovitskiy2020} to fair machine learning, aiming for the entire Pareto front with a single neural network. Following the terminology in \parencite{taufiq2024}, we refer to their adaptation as YOTO throughout this paper.

YOTO incorporates the trade-off parameter $\lambda$ directly as an input to the model via Feature-wise Linear Modulation (FiLM)~\parencite{perez2018}.
In this architecture, a FiLM layer applies an affine transformation to a given intermediate feature vector $h$:
\begin{equation}
    \mathrm{FiLM}(h \mid \lambda) = \gamma(\lambda) \odot h + \beta(\lambda),
\end{equation}
where the scale $\gamma(\lambda)$ and shift $\beta(\lambda)$ are generated by a hypernetwork conditioned on $\lambda$. This mechanism allows the model to dynamically adapt its behavior to any given value of $\lambda$.

In our experiments, following YOTO framework, we optimizes a $\lambda$-conditioned objective function:
\begin{equation}
    \mathcal{L}_{yoto}(f, \lambda) = \mathcal{L}_{acc}\big(f(\cdot;\lambda)\big) + \lambda \mathcal{L}_{ddp}\big(f(\cdot;\lambda)\big).
\end{equation}
where $f(\cdot;\lambda)$ denotes a classifier conditioned on $\lambda$. The accuracy loss $\mathcal{L}_{\mathrm{acc}}$ and the fairness loss $\mathcal{L}_{\mathrm{ddp}}$ are defined in the same way as in the EPO formulation. To ensure the model learns to represent classifiers across the entire trade-off range, the parameter $\lambda$ is sampled at each iteration from a log-uniform distribution over $[10^{-6}, 10]$.

\subsection{Datasets}
\label{app:dataset}
\begin{table*}[t]
  \centering
  \caption{Summary of datasets, target labels, and sensitive features used in our experiments.}
  \begin{tabular}{llllccc}
    \toprule
     & CelebA & UTKFace & Adult & COMPAS\\
    \midrule
    Target label ($Y$)& Attractive & Age $\geq30$ & Income $\geq50K$ & 2-year recidivism \\
    Sencitive Attrbute ($A$) & Gender & Gender & Gender & Race (Caucasian and others)\\
    \bottomrule
  \end{tabular}
  \label{tab:data_info}
\end{table*}
We summarize the target labels and sensitive attributes for each dataset in \zcref{tab:data_info}.

\paragraph{CelbeA dataset}
The CelebA dataset \parencite{liu2015} consists of 202,599 face images annotated with 40 binary facial attributes.
In our experiments, we use “Attractive” as the prediction target. The sensitive attribute is gender.

\paragraph{UTKFace dataset}
The UTKFace dataset \parencite{zhang2017} contains over 20{,}000 face images labeled with age, gender, and race. 
We define the prediction task as determining whether an individual’s age is 30 or above.
The sensitive attribute is gender.

\paragraph{Adult dataset}
The Adult dataset \parencite{Kohavi1996} is a tabular dataset comprising demographic and occupational attributes such as age, profession, and education level. The prediction task is to determine whether an individual’s annual income exceeds $\$50,000$. The sensitive attribute considered in this work is gender.

\paragraph{COMPAS dataset}
The COMPAS dataset \parencite{angwin2016} is a tabular dataset used for recidivism risk prediction in the criminal 
justice system.
It contains features such as age, race, and prior criminal history.  
The prediction task is to determine whether a defendant will reoffend within two years. The sensitive attribute is race, which we binarize into Caucasian and non-Caucasian groups.

\subsection{Metric}
\label{ap:hv}
The fairness-accuracy trade-off efficiency is quantified using the hypervolume~(HV)~\parencite{hv} over $\mathrm{Acc}$ and $|\mathrm{DDP}|$. 
In this setting, the objective space is two-dimensional. 
Before defining the standard HV and the inverted HV used in our experiments, we first introduce a general hypervolume function.
For an arbitrary trade-off set $T$ and reference point $r$, the hypervolume function is defined as the area of the union of rectangles spanned by $r$ and each solution in $T$~\parencite{Badar2024}:
\begin{equation}
\label{eq:hv}
    HV\big(T, r\big) = \Lambda_2\!\left(\bigcup_{i=1}^{|T|}[\,r,\; q(f_i)\,] \right),
\end{equation}
where $q(f_i) = (\mathrm{Acc}(f_i),\, |\mathrm{DDP}(f_i)|)$ denotes the $i$-th solution in trade-off set $T$ and $|T|$ denotes the number of solutions in $T$.
Here, $[r, q(f_i)]$ is the axis-aligned rectangle defined by $r$ and $q(f_i)$, and $\bigcup_{i=1}^{|T|} [r, q(f_i)]$ denotes the union of these rectangles. The operator $\Lambda_2(\cdot)$ denotes the two-dimensional Lebesgue measure, which corresponds to the area of the region.

In our experiments, we set the reference point following the procedure described in \textcite{jovanovic2022}, which builds on \textcite{ishibuchi2018}. 
We first define the reference point for standard HV, denoted by 
$r_{\mathrm{stn}}=(r_{\mathrm{acc}}, r_{\mathrm{ddp}})$. 
Let $T_{j,s}$ denote the non-dominated solutions of the trade-off set obtained by algorithm $j$ with seed $s$. 
The objective values of all solutions in $\bigcup_{j,s} T_{j,s}$ are normalized to the range $[0,1]$ based on the extreme solutions observed across all algorithms and seeds. 
The reference point for the standard HV is then defined as
\begin{equation}
\label{eq:ref_point}
    r_{\mathrm{acc}} = -\frac{1}{N-1},\qquad
    r_{\mathrm{ddp}} = 1+\frac{1}{N-1}.
\end{equation}
Here, $N = \max_{j,s} |T_{j,s}|$ denotes the maximum number of non-dominated solutions among all compared algorithms and seeds. 
Thus, the reference point is obtained by shifting one unit of size $1/(N-1)$ beyond the nadir point of the normalized objective space. 
The standard HV of a trade-off set $T$ is then defined as ${HV}(T, r_{\mathrm{stn}})$.

Next, we define the reference point for inverted HV, denoted by $r_{\mathrm{inv}}=(r'_{\mathrm{acc}}, r'_{\mathrm{ddp}})$. 
Let $\bar{T}_{j,s}$ denote the set of non-Pareto-front, i.e., dominated, solutions obtained by algorithm $j$ with seed $s$. We normalize these solutions using the same normalization procedure as above.
Let $N'=\max_{j,s}|\bar{T}_{j,s}|$ denote the maximum number of dominated
solutions among all compared algorithms and seeds.
The reference point for the inverted HV is then defined as
\begin{equation}
    r'_{\mathrm{acc}} = 1 + \frac{1}{N'-1},\qquad
    r'_{\mathrm{ddp}} = -\frac{1}{N'-1}.
\end{equation}
Accordingly, the reference point is obtained by shifting one unit of size $1/(N'-1)$ beyond the ideal corner of the normalized objective space, i.e., high accuracy and low DDP. 
The inverted HV of a trade-off set $T$ is then defined as ${HV}(T, r_{\mathrm{inv}})$.

In the main experiments, each trade-off curve is evaluated using 10 trade-off parameters; therefore, both $N-1$ and $N'-1$ are at most 9. 
For the sensitivity analysis, we vary $N-1$ and $N'-1$ over 30 logarithmically spaced values from 2 to 80 and examine how the rankings of the methods change in Appendix~\ref{ap:sensitivity}.

\subsection{Implementation Details}
\subsubsection{Loss Function}
For all methods, we adopt the Focal Loss~\parencite{lin2017focal} as the loss function $\ell$ to address the inherent class imbalance in datasets. 
Originally proposed for dense object detection to reduce the dominance of
easy-to-classify background examples, the Focal Loss is defined as
\begin{equation}
    \text{FL}(p_t) = -(1-p_t)^\gamma \log(p_t),
\end{equation}
where $p_t$ is the model's estimated probability for the ground-truth label,
defined as
\begin{equation}
p_t =
\begin{cases}
p, & \text{if } y = 1, \\
1 - p, & \text{otherwise,}
\end{cases}
\end{equation}
with $p$ being the model’s estimated probability for the class with label $y=1$. By introducing the focusing parameter $\gamma$, the loss effectively down-weights the contribution from easy examples and directs the optimization toward hard, underrepresented ones. Note that $\gamma=0$ reduces to standard cross-entropy.

\subsubsection{Trade-off parameter selection}
\label{ap:preference}
For a fair comparison, we evaluate the trade-offs obtained under ten different trade-off parameters for each method.

For the proposed method and FairBayes, the trade-off is controlled by the fairness tolerance $\delta$, and the corresponding thresholds are determined using a holdout set.
Specifically, we first compute the DDP at the Bayes-optimal threshold $0$, denoted by $\delta_{\max}$, and then uniformly sample ten fairness tolerances $\delta$ from the interval $[0, \delta_{\max}]$.

For EPO, trade-off parameters are the reference vectors.
Following~\textcite{mahapatra2020}, we use ten reference vectors that are evenly spaced in angle between $(v_{\mathrm{acc}}, v_{\mathrm{ddp}}) = (1,0)$ and $(0,1)$.

For YOTO, trade-off parameters are the weighting parameter $\lambda$.
We first compute the DDP obtained at $\lambda=0$ and denote it by $\delta_{\max}$. We then uniformly sample ten target fairness tolerances $\delta$ from the interval $[0,\delta_{\max}]$.
For each target $\delta$, we estimate the corresponding $\lambda_\delta$ via a binary search on a logarithmic scale over the range $\lambda \in [10^{-6},10]$, such that the resulting classifier attains a DDP value close to the target tolerance $\delta$.
Different trade-off points are obtained by performing inference with the classifier conditioned on $\lambda_\delta$.

\subsubsection{Network Architecture}
For image datasets, we use a ResNet-18 backbone pre-trained on ImageNet, followed by a two-layer multilayer perceptron (MLP).
The MLP classifier consists of two fully connected layers with ReLU activation, mapping the ResNet feature representation to a scalar logit.
In the proposed framework, the ResNet-18 backbone is treated as the distribution transformation $g$, while the MLP serves as the classification head $f$.
Conversely, baseline methods treat the entire network as a single classifier. For YOTO, we replace the standard MLP with a FiLM-conditioned MLP to enable trade-off conditioning.
Following \textcite{taufiq2024}, the hypernetwork used to generate the FiLM parameters $\gamma(\lambda)$ and $\beta(\lambda)$ is implemented as a four-layer MLP.
For FairBiNN, we adopt the same overall network architecture and apply the layer-partitioning strategy of \textcite{Jahromi2024}.
Specifically, the early and intermediate blocks of the ResNet-18 backbone are treated as the first accuracy component, the final residual block group of ResNet-18 is treated as the fairness component, and the subsequent MLP classifier is treated as the second accuracy component.

For tabular datasets, we utilize TARTE~\parencite{kim2025}, a Transformer-based foundation model, as a fixed feature extractor. 
Following the architecture in~\textcite{kim2025}, we feed the readout token's representation into an MLP classifier. 
We use a 7-layer MLP for Adult and a 5-layer MLP for COMPAS. 
In our method, the final two layers of the MLP act as the classification head $f$, with the preceding layers serving as the transformation module $g$. 
For baseline comparisons, the entire MLP is treated as $f$, and for YOTO, it is replaced with a FiLM-conditioned architecture.
The hypernetwork is implemented as a two-layer MLP, consistent with the configuration in \textcite{taufiq2024}.
For FairBiNN, we use the same TARTE-based architecture and partition the trainable MLP layers following~\textcite{Jahromi2024}. 
Specifically, we split the trainable MLP after TARTE into three parts: early accuracy layers, intermediate fairness layers, and a final accuracy head.
The fairness layers correspond to the later transformation layers placed before the final accuracy head, while the remaining trainable layers are treated as accuracy layers.

\subsubsection{Hyperparameter Tuning}
For the proposed method and FairBayes, we select the model that achieves the largest HV on the validation set.
To compute HV, we first evaluate the DDP at the Bayes-optimal threshold $0$ and denote it as $\delta_{\max}$.
We then construct a trade-off curve by uniformly sampling 50 fairness tolerances $\delta\in[0,\delta_{\max}]$.
To ensure a consistent comparison across epochs, the reference point for the validation HV is fixed at the worst-case point $(r_{\mathrm{acc}}, r_{\mathrm{ddp}}) = (0, 1)$.
For EPO, we select the model that minimizes the weighted loss on the validation set. 
For YOTO, we select the model that minimizes the validation loss. A single model is conditioned on a weighting parameter $\lambda$, which does not directly correspond to the fairness tolerance $\delta$.
While it is in principle possible to evaluate validation HV by fixing a set of $\lambda$ values, this would require performing inference for each $\lambda$ over the entire validation set at every epoch,
resulting in a substantially higher computational cost.

\subsubsection{Computing Environment}
The core software stack consisted of Python 3.12.3 and PyTorch 2.6.0a0. For GPU acceleration, we utilized CUDA 12.8 with NVIDIA driver version 535.183.01. All computations and model training were performed on an NVIDIA A100-SXM4-40GB GPU.

\subsection{Additional Experimental Results}
\subsubsection{Training Cost}
\label{ap:wall-clock}
We evaluated computational time on all datasets, as shown in \zcref{tab:cost_celeba,tab:cost_utkface,tab:cost_adult,tab:cost_compas}. In these Tables, “Train” reports the training time for 100 epochs, excluding hyperparameter tuning, “Adaptation” denotes the additional computation time required when changing the trade-off parameter (threshold estimation for FairBayes/GFB, retraining for EPO), and “Inference” denotes the time required to produce predictions on the test set. For FairBayes and GFB, the training time includes one forward pass on a holdout set used for threshold estimation to compute logits.
\begin{table}[tb]
\centering
\caption{Wall-clock time (seconds) on the \textbf{CelebA} dataset for all methods (averaged over 5 seeds). * per parameter for EPO/FairBiNN/YOTO, 10 parameters for FairBayes/GFB.}
\label{tab:cost_celeba}
    \begin{tabular}{lccccc}
        \toprule
         & EPO & FairBiNN & YOTO & FairBayes & GFB (Ours) \\
        \midrule
        Train & $4.58\times10^{3}$ & $2.24\times10^3$ & $1.77\times10^{3}$ & $1.78\times10^{3}$ & $2.98\times10^{3}$ \\
        Adaptation & same as above & same as above & 0 & $5.10\times10^{-2}$ & $5.16\times10^{-2}$ \\
        Inference* & $5.72\times10^{0}$ & $5.88\times10^0$ & $5.03\times10^{0}$ & $5.52\times10^{0}$ & $5.55\times10^{0}$ \\
        \bottomrule
    \end{tabular}
\end{table}

\begin{table}[tb]
\centering
\caption{Wall-clock time (seconds) on the \textbf{UTKFace} dataset for all methods (averaged over 5 seeds). * per parameter for EPO/FairBiNN/YOTO, 10 parameters for FairBayes/GFB.}
\label{tab:cost_utkface}
    \begin{tabular}{lccccc}
        \toprule
         & EPO & FairBiNN & YOTO & FairBayes & GFB (Ours) \\
        \midrule
        Train & $1.49\times10^{3}$ & $7.67\times10^2$ & $5.75\times10^{2}$ & $5.65\times10^{2}$ & $9.61\times10^{2}$ \\
        Adaptation & same as above & same as above & 0 & $9.66\times10^{-3}$ & $9.38\times10^{-3}$ \\
        Inference* & $1.52\times10^{0}$ & $1.61\times10^0 $& $1.56\times10^{0}$ & $1.43\times10^{0}$ & $1.66\times10^{0}$ \\
        \bottomrule
    \end{tabular}
\end{table}

\begin{table}[tb]
\centering
\caption{Wall-clock time (seconds) on the \textbf{Adult} dataset for all methods (averaged over 5 seeds). * per parameter for EPO/FairBiNN/YOTO, 10 parameters for FairBayes/GFB.}
\label{tab:cost_adult}
    \begin{tabular}{lccccc}
        \toprule
         & EPO & FairBiNN & YOTO & FairBayes & GFB (Ours) \\
        \midrule
        Train & $1.15\times10^{2}$ & $1.02\times10^2$ & $7.89\times10^{1}$ & $7.51\times10^{1}$ & $1.07\times10^{2}$ \\
        Adaptation & same as above & same as above & 0 & $1.61\times10^{-2}$ & $1.60\times10^{-2}$ \\
        Inference* & $4.82\times10^{-1}$ & $6.36\times10^{-1}$ & $5.19\times10^{-1}$ & $5.06\times10^{-1}$ & $5.11\times10^{-1}$ \\
        \bottomrule
    \end{tabular}
\end{table}

\begin{table}[tb]
\centering
\caption{Wall-clock time (seconds) on the \textbf{COMPAS} dataset for all methods (averaged over 5 seeds). * per parameter for EPO/FairBiNN/YOTO, 10 parameters for FairBayes/GFB.}
\label{tab:cost_compas}
    \begin{tabular}{lccccc}
        \toprule
         & EPO & FairBiNN& YOTO & FairBayes & GFB (Ours) \\
        \midrule
        Train & $6.71\times10^{1}$ & $5.48\times10^1$ & $4.78\times10^{1}$ & $4.90\times10^{1}$ & $6.09\times10^{1}$ \\
        Adaptation & same as above & same as above& 0 & $5.58\times10^{-3}$ & $5.53\times10^{-3}$ \\
        Inference* & $4.11\times10^{-1}$ & $5.44\times10^{-1}$ & $4.19\times10^{-1}$ & $4.61\times10^{-1}$ & $4.35\times10^{-1}$ \\
        \bottomrule
    \end{tabular}
\end{table}

The training time of GFB is approximately $1.2$-$1.7$ times longer than that of FairBayes. 
However, we believe that this moderate increase in training cost does not undermine the overall benefits of our method. 
In particular, GFB consistently outperforms FairBayes and YOTO across all datasets with smaller standard deviations in most cases.
This suggests that GFB provides a more stable trade-off performance. 
Moreover, after training, GFB retains the lightweight post-hoc adjustment of FairBayes and achieves shorter inference time than YOTO. 
These results suggest that, despite the additional training cost, GFB remains practical for settings where the desired fairness--accuracy trade-off may change after deployment.

Compared with in-processing methods, GFB consistently requires less training time than EPO. On the other hand, GFB takes approximately $1.04$--$1.33$ times longer to train than FairBiNN. However, unlike EPO and FairBiNN, which require retraining when the trade-off parameter changes, GFB only needs lightweight threshold adaptation after training. 
As shown in \zcref{tab:cost_celeba,tab:cost_utkface,tab:cost_adult,tab:cost_compas}, this adaptation is substantially faster than retraining and enables efficient post-hoc control of the fairness-accuracy trade-off.

Considering the consistent improvements over FairBayes and YOTO discussed above, together with the practical benefit of post-hoc controllability, we regard the additional training cost of GFB as acceptable.

\subsubsection{Quartile Analysis of HV and Inverted HV Differences}
\label{app:additional_results}
To assess the reliability of the results across random seeds, we report the mean and quartiles of the seed-wise differences for both HV and inverted HV in \zcref{tab:hv_diff_quartiles,tab:inv_hv_diff_quartiles}, respectively.
All values are computed as GFB minus the corresponding competitor.
For HV, positive values indicate that GFB achieves higher HV, whereas negative values indicate that the competitor achieves higher HV. When all quartiles are positive, GFB consistently outperforms the competitor across seeds; when all quartiles are negative, the competitor consistently achieves higher HV. Mixed signs among the quartiles indicate that the relative performance varies across seeds. 
For inverted HV, negative values indicate that GFB achieves lower inverted HV.
When all quartiles are negative, GFB consistently has fewer poorly
performing dominated solutions than the competitor across seeds.

\begin{table}[t]
\centering
\caption{Comparison of hypervolume differences (mean and quartiles). 
$Q_1$, $Q_2$, and $Q_3$ denote the first, second, and third quartiles, respectively. 
All values denote differences from GFB (GFB $-$ competitor); positive values indicate that the proposed method performs better.}
\label{tab:hv_diff_quartiles}
\begin{tabular}{llrrrr}
\toprule
Dataset & Method & Mean & $Q_1$ & $Q_2$ & $Q_3$ \\
\midrule
\multirow{4}{*}{CelebA}
& vs. EPO       & 0.0465  & 0.0692  & 0.0695  & 0.0808 \\
& vs. FairBiNN  & 0.1279  & 0.0891  & 0.1100  & 0.1759 \\
& vs. FairBayes & 0.0941  & 0.0874  & 0.0927  & 0.0976 \\
& vs. YOTO      & 0.2130  & 0.1087  & 0.2060  & 0.2877 \\
\midrule
\multirow{4}{*}{UTKFace}
& vs. EPO       & 0.0125  & -0.0316 & 0.0321  & 0.0513 \\
& vs. FairBiNN  & -0.0295 & -0.0428 & -0.0313 & -0.0131 \\
& vs. FairBayes & 0.0379  & 0.0184  & 0.0259  & 0.0673 \\
& vs. YOTO      & 0.1100  & 0.0545  & 0.1415  & 0.1632 \\
\midrule
\multirow{4}{*}{Adult}
& vs. EPO       & 0.0770  & 0.0481  & 0.0829  & 0.0844 \\
& vs. FairBiNN  & 0.0452  & 0.0170  & 0.0426  & 0.0564 \\
& vs. FairBayes & 0.0157  & 0.0203  & 0.0244  & 0.0306 \\
& vs. YOTO      & 0.1515  & 0.1247  & 0.1443  & 0.1671 \\
\midrule
\multirow{4}{*}{COMPAS}
& vs. EPO       & -0.0027 & -0.0408 & -0.0243 & 0.0552 \\
& vs. FairBiNN  & -0.0238 & -0.0331 & -0.0284 & 0.0047 \\
& vs. FairBayes & 0.0566  & -0.0151 & 0.0019  & 0.1459 \\
& vs. YOTO      & 0.1722  & 0.1260  & 0.1854  & 0.2894 \\
\bottomrule
\end{tabular}
\end{table}

\begin{table}[tb]
\centering
\caption{Comparison of inverted Hypervolume differences (mean and quartiles). 
$Q_1$, $Q_2$, and $Q_3$ denote the first, second, and third quartiles, respectively. 
All values denote differences from GFB (GFB $-$ competitor); positive values indicate that the proposed method performs better.}
\label{tab:inv_hv_diff_quartiles}
\begin{tabular}{llrrrr}
\toprule
Dataset & Method & Mean & $Q_1$ & $Q_2$ & $Q_3$ \\
\midrule
\multirow{4}{*}{CelebA}
& vs. EPO       & -0.2295 & -0.3732 & -0.1988 & -0.1032 \\
& vs. FairBiNN  & -0.1561 & -0.1569 & -0.1466 & -0.1375 \\
& vs. FairBayes & -0.0354 & -0.0461 & -0.0300 & -0.0218 \\
& vs. YOTO      & -0.0955 & -0.0620 &  0.0139 &  0.0150 \\
\midrule
\multirow{4}{*}{UTKFace}
& vs. EPO       & -0.3432 & -0.7351 & -0.0696 & -0.0606 \\
& vs. FairBiNN  & -0.1569 & -0.1557 & -0.1443 & -0.1169 \\
& vs. FairBayes & -0.0546 & -0.0766 & -0.0556 & -0.0553 \\
& vs. YOTO      & -0.0548 & -0.0976 & -0.0673 & -0.0456 \\
\midrule
\multirow{4}{*}{Adult}
& vs. EPO       & -0.2287 & -0.2617 & -0.2257 & -0.1831 \\
& vs. FairBiNN  & -0.0993 & -0.1145 & -0.0976 & -0.0798 \\
& vs. FairBayes & -0.0015 & -0.0103 & -0.0101 &  0.0090 \\
& vs. YOTO      & -0.1800 & -0.1630 & -0.1619 & -0.1515 \\
\midrule
\multirow{4}{*}{COMPAS}
& vs. EPO       & -0.0809 & -0.1011 & -0.0714 & -0.0678 \\
& vs. FairBiNN  & -0.0756 & -0.0798 & -0.0756 & -0.0645 \\
& vs. FairBayes & -0.0326 & -0.0464 & -0.0419 & -0.0383 \\
& vs. YOTO      & -0.2263 & -0.2937 & -0.2009 & -0.1565 \\
\bottomrule
\end{tabular}
\end{table}

Compared with EPO, all quartiles are positive on CelebA and Adult, indicating that GFB consistently achieves higher HV across seeds on these datasets. 
On UTKFace, only the first quartile is negative, while the median and third quartile are positive. 
This suggests that GFB generally outperforms EPO, although the advantage is not uniform across all seeds. 
On COMPAS, only the third quartile is positive. This indicates that EPO achieves higher HV in typical runs, although GFB outperforms EPO in some seeds. 
However, the inverted HV comparison in \zcref{tab:inv_hv_diff_quartiles} provides a complementary view: all quartiles are negative on COMPAS, meaning that GFB consistently achieves smaller inverted HV than EPO across seeds. 
Thus, although EPO has an advantage in terms of standard HV on COMPAS, GFB avoids poorly performing trade-off points more consistently.
Overall, these results suggest that GFB is competitive with, and often outperforms, EPO, while additionally providing post-hoc controllability.

We next compare GFB with FairBiNN. 
On CelebA and Adult, all quartiles are positive, showing that GFB consistently achieves higher HV across seeds. 
On COMPAS, only the third quartile is positive, indicating that FairBiNN achieves higher HV in typical runs, while GFB outperforms FairBiNN in some seeds. 
On UTKFace, all quartiles are negative, showing that FairBiNN consistently achieves higher HV across seeds. 
Overall, GFB achieves higher HV than FairBiNN on CelebA and Adult, remains competitive on COMPAS, and underperforms on UTKFace.
Nevertheless, The inverted HV comparison provides a complementary view.
As shown in \zcref{tab:inv_hv_diff_quartiles}, all quartiles of the inverted HV differences are negative across all datasets, indicating that GFB consistently achieves smaller inverted HV than FairBiNN across seeds.
This suggests that, although FairBiNN shows an advantage under the standard HV metric on COMPAS and UTKFace, GFB more consistently avoids poorly performing dominated solutions.
These results suggest that GFB provides a more stable trade-off curve overall, while additionally offering post-hoc controllability, which FairBiNN does not provide.

We then compare GFB with YOTO. All quartiles of the HV differences are positive across all datasets. 
This indicates that GFB consistently achieves higher HV than YOTO across seeds. 
Moreover, the inverted HV differences are negative for all datasets, showing that GFB also avoids poorly performing dominated solutions more consistently.
Thus, GFB clearly outperforms YOTO in terms of both HV and inverted HV.

Finally, we compare GFB with FairBayes. All quartiles are positive on CelebA, UTKFace, and Adult, indicating that GFB consistently achieves higher HV across seeds on these datasets. 
On COMPAS, only the first quartile is negative, while the median and third quartile are positive. 
This suggests that GFB generally achieves higher HV than FairBayes, although the advantage is not uniform across all seeds. 
Nevertheless, GFB achieves a higher mean HV with a smaller standard deviation, suggesting more favorable and stable performance overall on COMPAS.
The inverted HV comparison provides further evidence for this interpretation. Against FairBayes, all quartiles of the inverted HV differences are negative across all datasets, indicating that GFB more consistently avoids poorly performing trade-off points.

We also examine the standard deviations reported in \zcref{tab:hv} and \zcref{tab:inverted_hv} to assess variability across random seeds.
For HV, GFB shows small standard deviations on most datasets; in particular, it achieves the smallest standard deviation on CelebA, Adult, and COMPAS.
A similar trend is observed for inverted HV, where GFB again shows small standard deviations, including the smallest value on CelebA and COMPAS.
These results suggest that the proposed boundary-concentration mechanism improves the fairness-accuracy trade-off over post-hoc controllable methods without introducing additional variability across random seeds.

\subsubsection{Sensitivity Analysis of HV and inverted HV}
\label{ap:sensitivity}
\begin{figure*}[tb]
  \centering
  \begin{subfigure}[b]{0.23\textwidth}
    \centering
    \includegraphics[width=\textwidth]{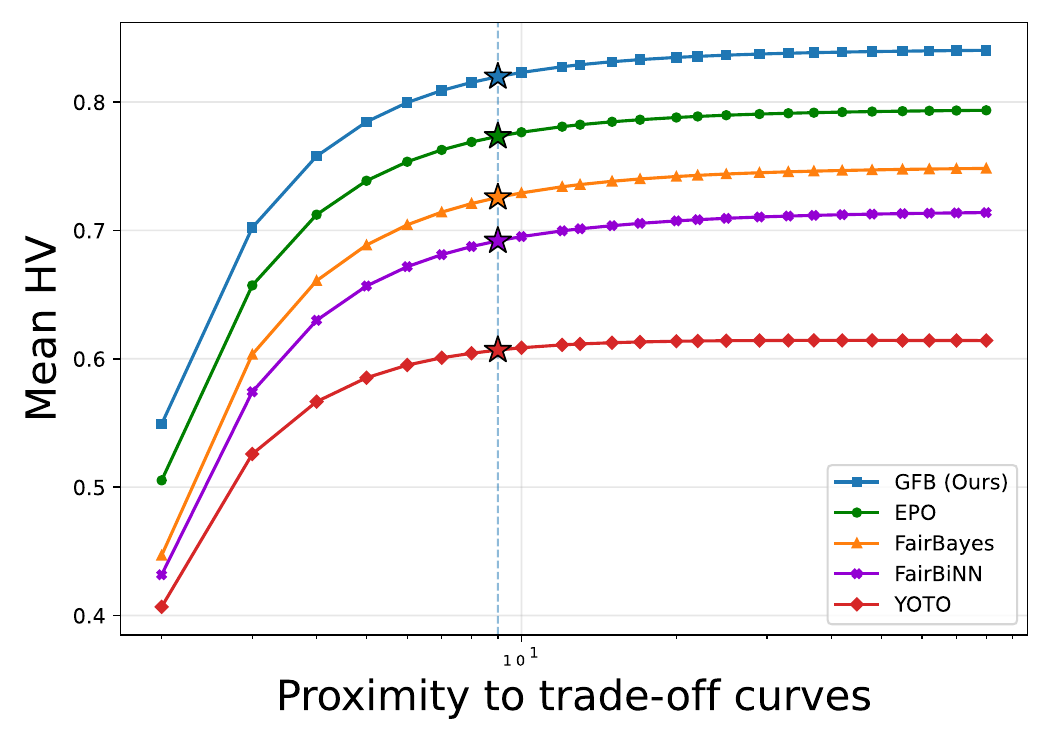}
    \caption{CelebA}
    \label{fig:hv_celeba}
  \end{subfigure}
  \hfill
  \begin{subfigure}[b]{0.23\textwidth}
    \centering
    \includegraphics[width=\textwidth]{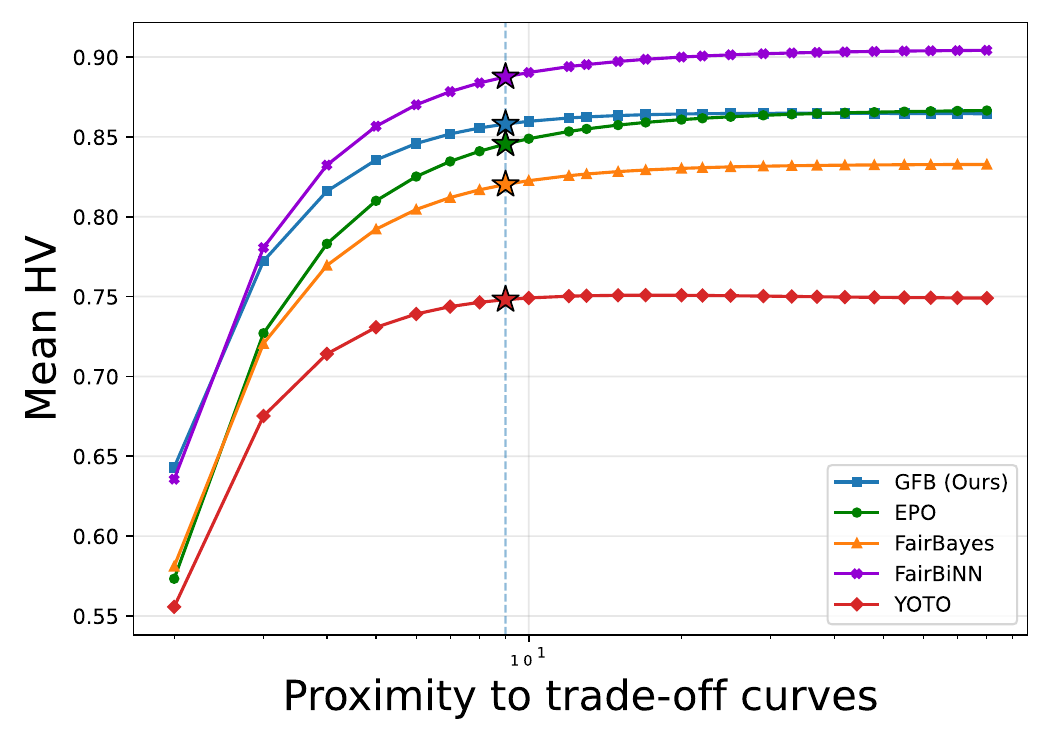}
    \caption{UTKFace}
    \label{fig:hv_utkface}
  \end{subfigure}
  \hfill
  \begin{subfigure}[b]{0.23\textwidth}
    \centering
    \includegraphics[width=\textwidth]{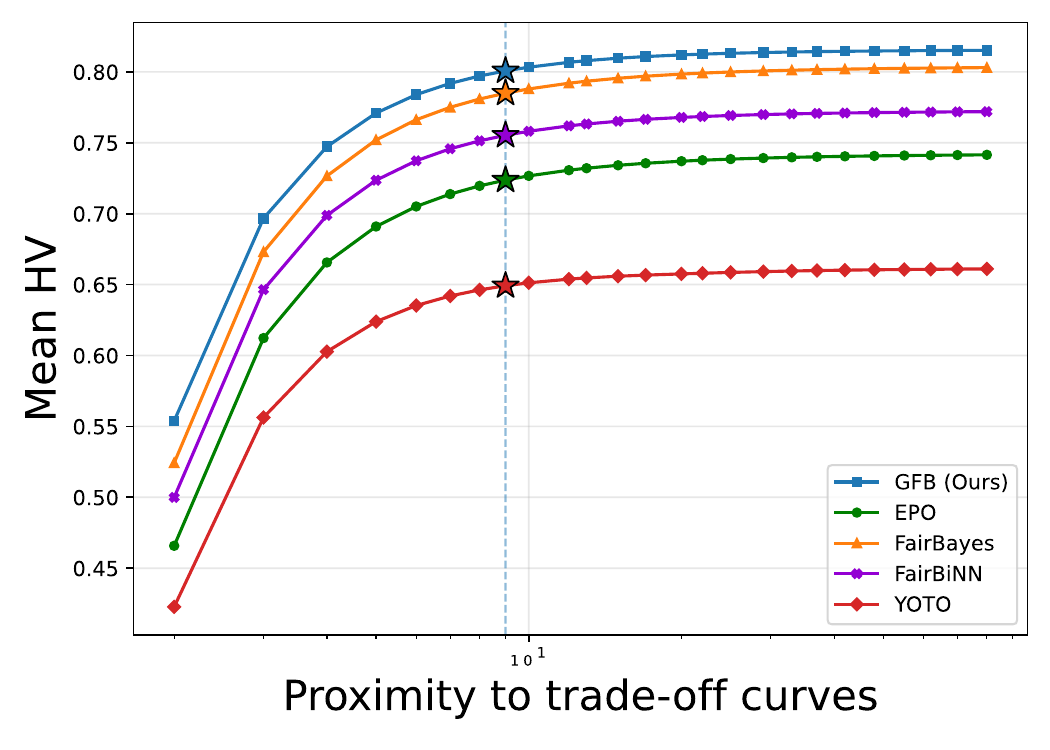}
    \caption{Adult}
    \label{fig:hv_adult}
  \end{subfigure}
  \hfill
  \begin{subfigure}[b]{0.23\textwidth}
    \centering
    \includegraphics[width=\textwidth]{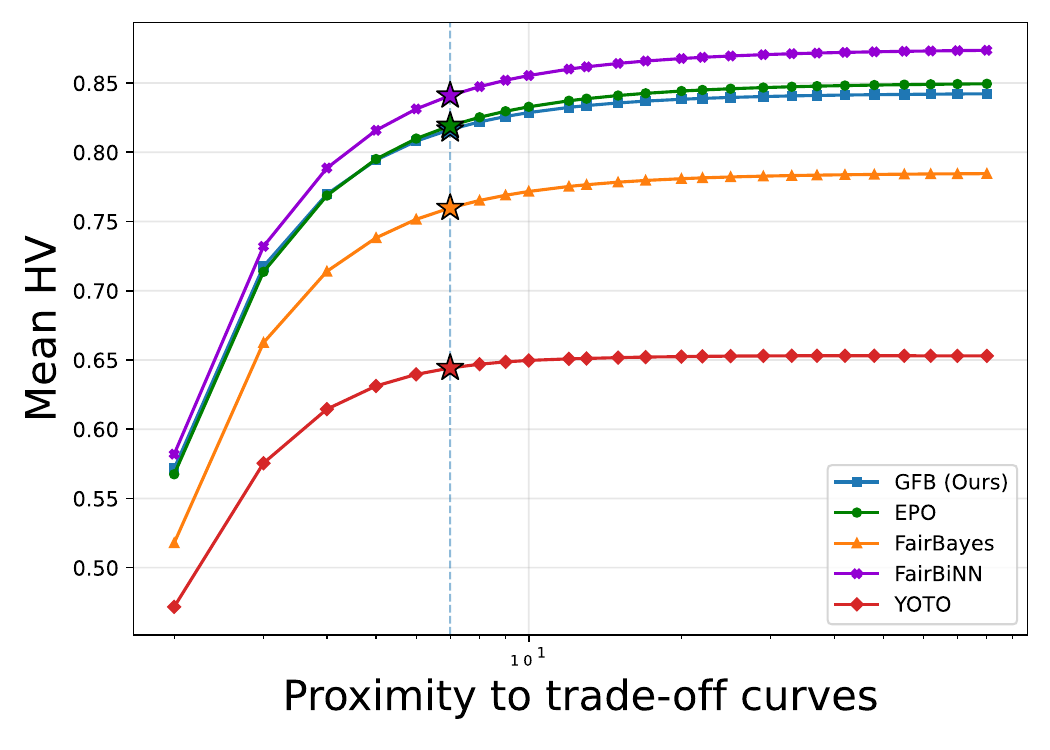}
    \caption{COMPAS}
    \label{fig:hv_compas}
  \end{subfigure}
  \caption{Sensitivity of HV to the reference-point choice. 
    The horizontal axis represents the proximity of the reference point to the trade-off curves, represented by $N-1$, and the vertical axis represents the HV value. The star marks the value used for the main HV results in \zcref{tab:hv}.}
  \label{fig:hv_sensitivity}
\end{figure*}

\begin{figure*}[tb]
  \centering
  \begin{subfigure}[b]{0.23\textwidth}
    \centering
    \includegraphics[width=\textwidth]{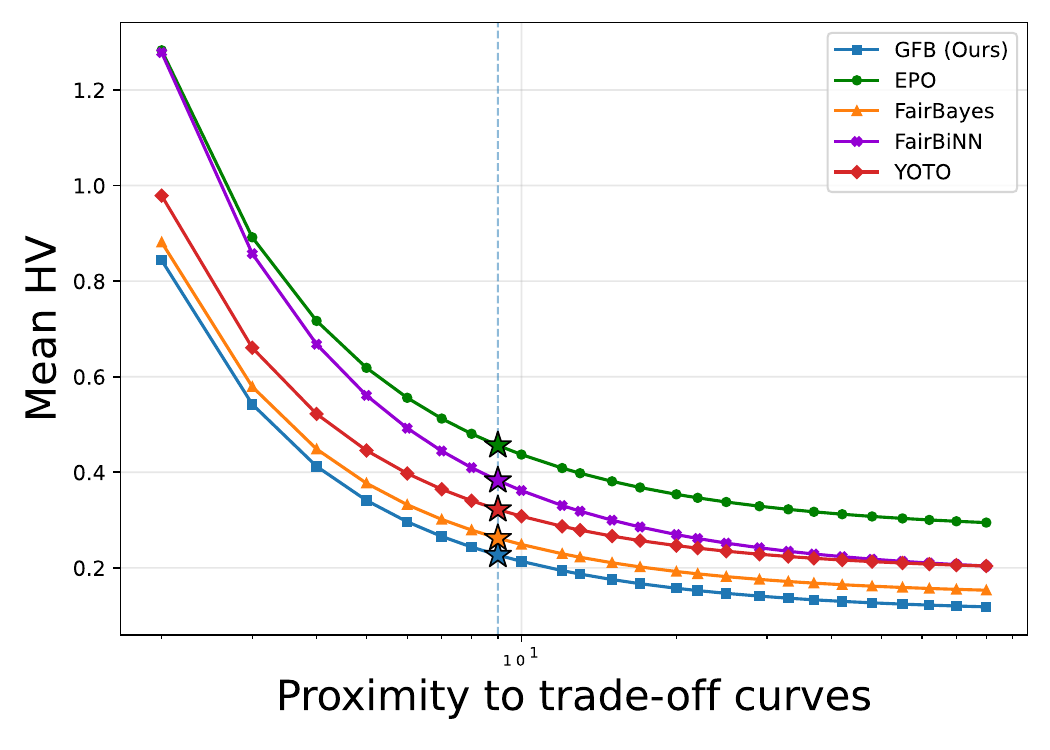}
    \caption{CelebA}
    \label{fig:invhv_celeba}
  \end{subfigure}
  \hfill
  \begin{subfigure}[b]{0.23\textwidth}
    \centering
    \includegraphics[width=\textwidth]{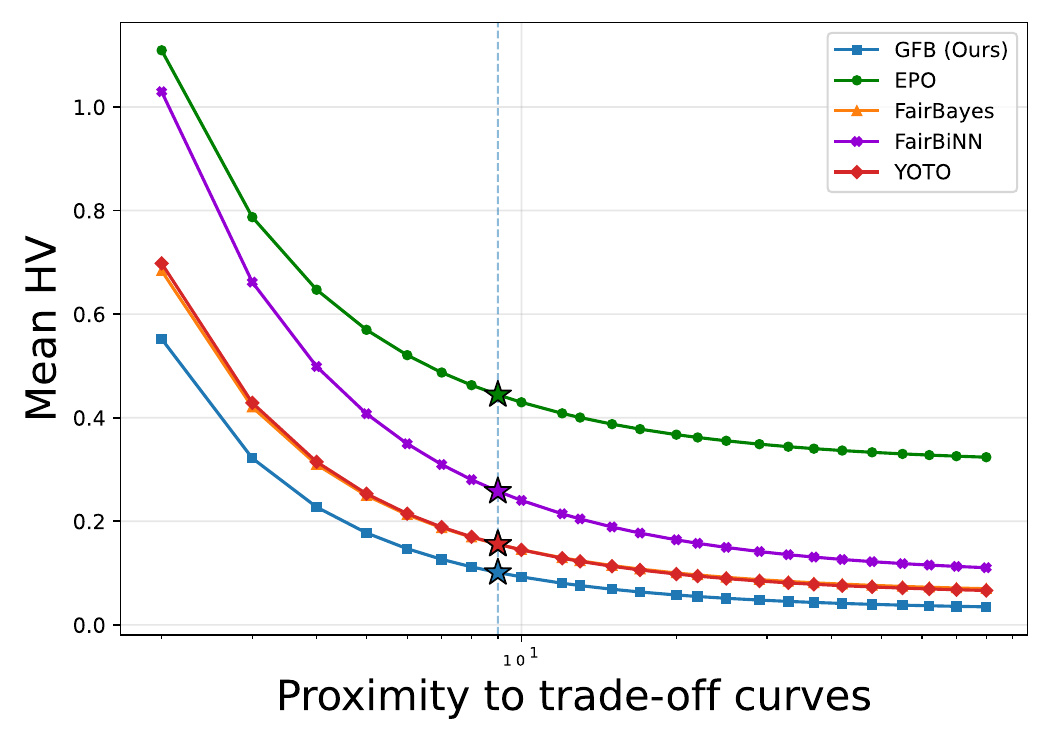}
    \caption{UTKFace}
    \label{fig:invhv_utkface}
  \end{subfigure}
  \hfill
  \begin{subfigure}[b]{0.23\textwidth}
    \centering
    \includegraphics[width=\textwidth]{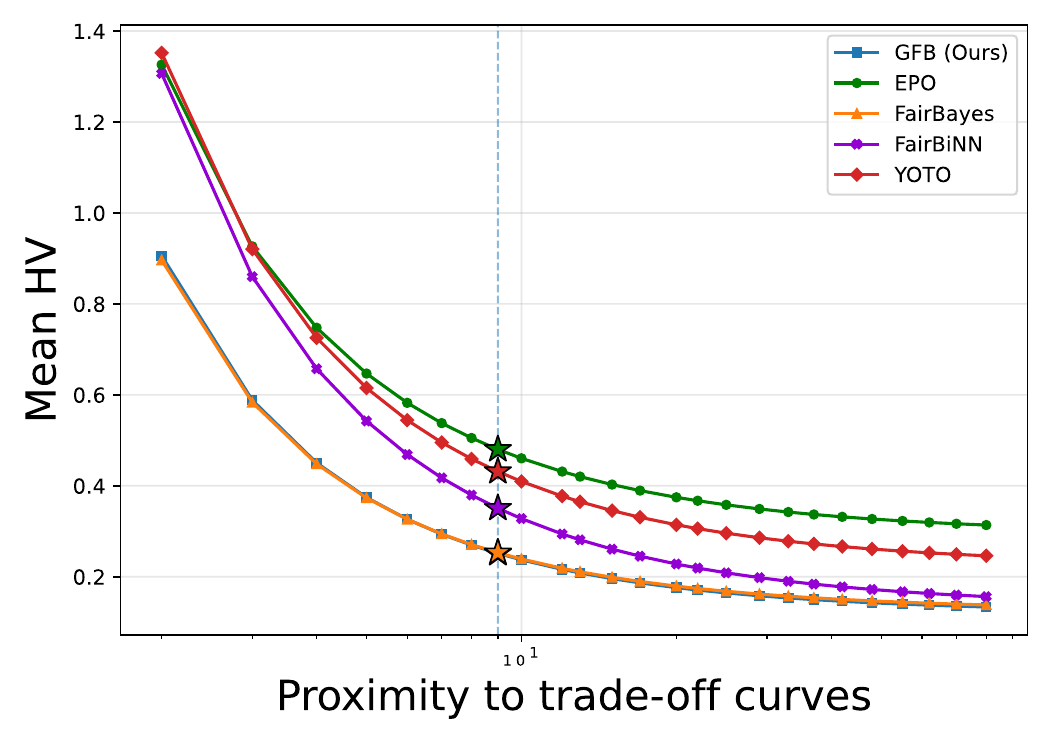}
    \caption{Adult}
    \label{fig:invhv_adult}
  \end{subfigure}
  \hfill
  \begin{subfigure}[b]{0.23\textwidth}
    \centering
    \includegraphics[width=\textwidth]{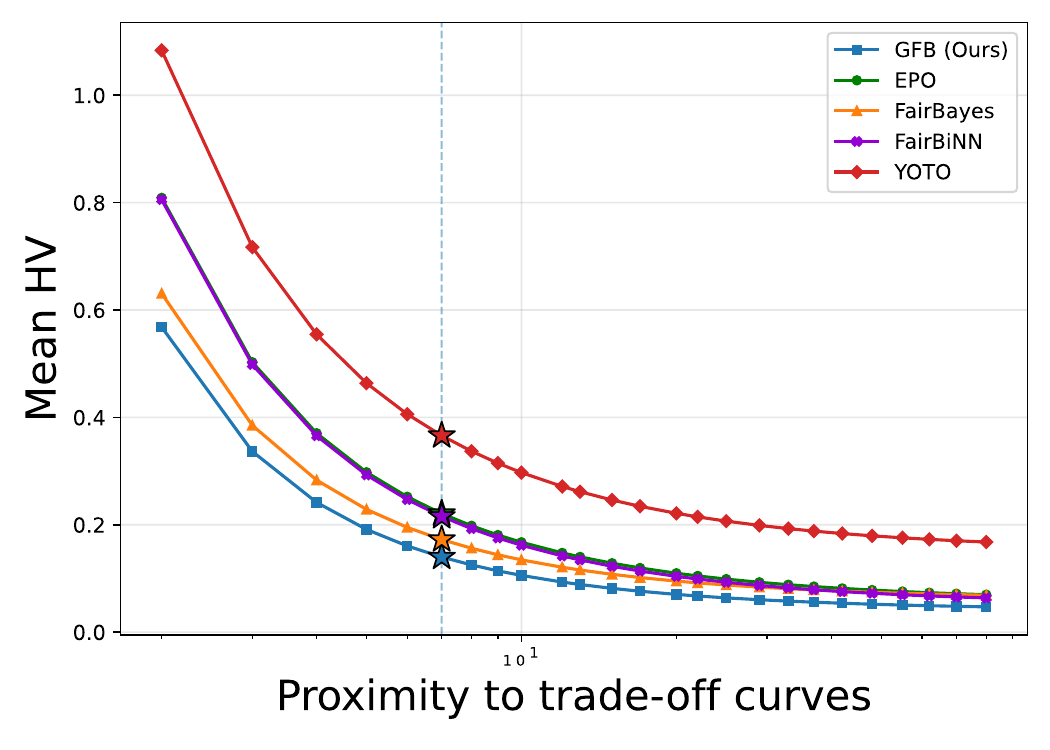}
    \caption{COMPAS}
    \label{fig:invhv_compas}
  \end{subfigure}
  \caption{Sensitivity of inverted HV to the reference-point choice. 
    The horizontal axis represents the proximity of the reference point to the trade-off curves, represented by $N-1$, and the vertical axis represents the inverted HV value. The star marks the value used for the main inverted HV results in \zcref{tab:inverted_hv}.}
  \label{fig:invhv_sensitivity}
\end{figure*}

We further analyze the sensitivity of both HV and inverted HV rankings to the choice of the reference point. 
Specifically, we vary the distance between the reference point and the trade-off curves and examine how the rankings change. 
To do so, we vary the value of $N-1$ in the reference-point construction in \zcref{eq:ref_point} over logarithmically spaced values from 2 to 80.

As shown in \zcref{fig:hv_sensitivity}, the HV rankings remain unchanged on CelebA and Adult, and only limited changes are observed on UTKFace and COMPAS. 
We observe a similar trend for the inverted HV: although slight ranking changes appear on COMPAS and some curves overlap on datasets other than CelebA, the rankings do not change substantially across the examined range of reference points. 
These results suggest that the main conclusions drawn from HV and inverted HV are not strongly affected by the choice of the reference point.

\end{document}